\newtheorem{theorem}{Theorem}
\newtheorem{lemma}{Lemma}
\newtheorem{proposition}{Proposition}
\newtheorem{definition}{Definition}
\newtheorem{assumption}{Assumption}
\newtheorem{remark}{Remark}
\def\S{S}
\newcommand*{\addFileDependency}[1]{
  \typeout{(#1)}
  \@addtofilelist{#1}
  \IfFileExists{#1}{}{\typeout{No file #1.}}
}
\newcommand*{\myexternaldocument}[1]{%
    \externaldocument{#1}%
    \addFileDependency{#1.tex}%
    \addFileDependency{#1.aux}%
}
\title{Differentially Private SGDA for Minimax Problems}
\author[1]{Zhenhuan Yang}
\author[2]{Shu Hu}
\author[3]{Yunwen Lei}
\author[4]{Kush R Varshney}
\author[2]{Siwei Lyu}
\author[1]{\href{mailto:<yying@albany.edu>?Subject=Your UAI 2022 paper}{Yiming Ying}{}}
\affil[1]{%
    University at Albany\\
    Albany, New York, USA
}
\affil[2]{%
    University at Buffalo\\
    Buffalo, New York, USA
}
\affil[3]{%
    University of Birmingham\\
    Birmingham, UK
}
\affil[4]{%
    IBM Research\\
    Yorktown Heights, New York, USA
  }
\begin{document}
\maketitle

\begin{abstract}
Stochastic gradient descent ascent (SGDA) and its variants have been the workhorse for solving minimax problems. However,  in contrast to the well-studied stochastic gradient descent (SGD) with differential privacy (DP) constraints,  there is  little work on understanding the generalization (utility)  of SGDA with DP constraints. In this paper, we use the algorithmic stability approach to establish the generalization (utility) of DP-SGDA in different settings. In particular, for the convex-concave setting, we prove that the DP-SGDA can achieve  an optimal utility rate in terms of the weak primal-dual population risk in both smooth and non-smooth cases. To our best knowledge, this is the first-ever-known result for DP-SGDA in the non-smooth case.  We further provide its  utility  analysis in   the nonconvex-strongly-concave setting which is  the  first-ever-known result in terms of the primal population risk.  The convergence and generalization results for this nonconvex setting  are new even in the non-private setting.  Finally,  numerical experiments are conducted to  demonstrate the effectiveness of DP-SGDA  for both convex and nonconvex cases.
\end{abstract}

\section{Introduction}\label{sec:introduction}

In recent years, there is a growing interest on studying the minimax problems which involve both minimization over the primal variable $\wbf$ and maximization over the dual variable $\vbf$. Notable examples include generative adversarial networks (GANs) \citep{goodfellow2014generative,arjovsky2017wasserstein}, AUC maximization \citep{gao2013one,ying2016stochastic,Natole2018,liu2019stochastic,zhao2011online}, robust learning \citep{audibert2011robust,xu2009robustness},   adversarial training \citep{sinha2017certifying},  algorithmic fairness   \citep{mohri2019agnostic,li2019fair,wang2020robust,martinez2020minimax,diana2021minimax}, and  Markov Decision Process (MDP) \citep{puterman2014markov,wang2017primal}. Details of these motivating examples are given in Appendix \ref{sec:motivating-example}.

The minimax  problem can be formulated as 
\begin{equation}\label{eq:SSP}
\min_{\wbf \in \Wcal} \max_{\vbf \in \Vcal}\Big\{ F(\wbf, \vbf) := \Ebb_{\zbf\sim \Dcal}[f(\wbf,\vbf; \zbf)]\Big\}, 
\end{equation}
where $\Wcal\subseteq \mathbb{R}^{d_1}$ and $\Vcal\subseteq \mathbb{R}^{d_2}$ are two nonempty closed and convex domains and $\zbf$ is a random variable from some distribution $\Dcal$ taking values in $\mathcal{Z}$.  
Since the distribution $\Dcal$ is usually unknown and one has access only to an i.i.d. training dataset $\S = \{\zbf_1, \cdots, \zbf_n\}$,  one resorts to solving its  empirical minimax problem 
\begin{align*}
\min_{\wbf \in \Wcal} \max_{\vbf \in \Vcal} \Big\{F_\S(\wbf,\vbf) : = \frac{1}{n}\sum_{i=1}^{n} f(\wbf, \vbf; \zbf_i)\Big\}.
\end{align*}
One popular optimization algorithm for solving this problem is SGDA. Specifically, at iteration $t$, upon receiving a random data point or mini-batch  from $\S$, it performs gradient descent over $\wbf$ with the stepsize $\eta_{\wbf,t}$ and gradient ascent over   $\vbf$ with the stepsize $\eta_{\vbf,t}$. 

As SGDA is conceptually simple and easy  to implement, it is widely  deployed in solving minimax problems, e.g., GANs \citep{goodfellow2014generative}, adversarial learning \citep{sinha2017certifying}, and AUC maximization \citep{ying2016stochastic}. Its local convergence analysis for nonconvex-(strongly)-concave problems was established  in \citet{lin2020gradient}. Other variants of SGDA  were proposed and studied in \citet{luo2020stochastic,nouiehed2019solving,rafique2021weakly,yan2020optimal}.

On another front, collected data often contain sensitive information such as individual records from hospitals, online behavior from social media, and genomic data from cancer diagnosis.   Differential privacy \citep{dwork2014algorithmic} has emerged as a well-accepted mathematical definition of privacy which ensures that an attacker gets roughly the same information from the dataset regardless of whether an individual is present or not. Its related technologies have been adopted by Google \citep{google-DP}, Apple \citep{miscrosoft-DP}, and the US Census Bureau \citep{us-census-bureau-DP}. While SGD and SGDA have become the workhorse behind  the remarkable progress of machine learning and AI, it is of pivotal importance for developing  their counterparts  with DP constraints.   

Many studies analyze the privacy and utility of DP-SGD for the ERM problem that only involves the minimization over $\wbf$ \citep{bassily2019private,bassily2020stability,feldman2020private,song2013stochastic,WLYZ,wang2020differentially,wang2019subsampled,wu2017bolt,zhou2020private}. In contrast, there is little work on analysing the utility of minimax optimization algorithms with DP constraints except the  recent work of \citet{boob2021optimal}. However, \citet{boob2021optimal} focus on the noisy stochastic extragradient method on convex-concave and smooth settings.  

Studying the computational and statistical behavior of DP-SGDA is fundamental towards the understanding of stochastic optimization algorithm for minimax problem under the differential privacy constraint. In this paper, we propose novel convergence and stability analysis to establish the utility of DP-SGDA in empirical saddle point and population forms such as the weak primal-dual population risk and the primal population risk. We collect in Table \ref{tab:summary} the notations and results of performance measures in this paper. In particular, our contributions can be summarized as follows. 

\begin{table*}[ht]
    \centering
    \setlength{\tabcolsep}{3pt}
    \begin{tabular}{|c|c|c|c|c|c|}
    \hline
     Algorithm  &  Assumption & Measure & Rate & Complexity & Simplicity \\\hline
         NSEG & C-C, Lip, S & $\triangle^w(A_\wbf(S), A_\vbf(S))$ & $\Ocal\Big(\frac{1}{\sqrt{n}} + \frac{\sqrt{d\log(1/\delta)}}{n\epsilon}\Big)$ & $\Ocal(n^2)$ & Single-loop\\\hline
         NISPP & C-C, Lip, S & $\triangle^w(A_\wbf(S), A_\vbf(S))$ & $\Ocal\Big(\frac{1}{\sqrt{n}} + \frac{\sqrt{d\log(1/\delta)}}{n\epsilon}\Big)$ & $\Ocal(n^{3/2}\log(n))$ & Double-loop\\ \hline
         \multirow{3}{*}{\makecell{DP-SGDA\\(Ours)}} & C-C, Lip, S & $\triangle^w(A_\wbf(S), A_\vbf(S))$ & $\Ocal\Big(\frac{1}{\sqrt{n}} + \frac{\sqrt{d\log(1/\delta)}}{n\epsilon}\Big)$ & $\Ocal(n^{3/2})$ & \multirow{3}{*}{Single-loop}\\\cline{2-5}
         & C-C, Lip & $\triangle^w(A_\wbf(S), A_\vbf(S))$ & $\Ocal\Big(\frac{1}{\sqrt{n}} + \frac{\sqrt{d\log(1/\delta)}}{n\epsilon}\Big)$ & $\Ocal(n^{5/2})$ & \\\cline{2-5}
          & PL-SC, Lip, S & $R(A_\wbf(S)) - \min_{\wbf}R(\wbf)$  & $\Ocal\Big(\frac{1}{n^{1/3}} + \frac{\sqrt{d\log(1/\delta)}}{n^{5/6}\epsilon}\Big)$ & $\Ocal(n^{3/2})$ & \\\hline
    \end{tabular}
    \caption{\it Summary of Results. DP-SGDA is Algorithm \ref{alg:dp-sgda} in this paper. NSEG and NISPP are Algorithm 1 and 2 in \citet{boob2021optimal}, respectively. Here C-C means convexity and concavity, PL-SC means PL condition and strong concavity, Lip means Lipschitz continuity, S means the smoothness. $\triangle^w(A_\wbf(S), A_\vbf(S))$ is the weak PD population risk and $R(A_\wbf(S)) - \min_{\wbf}R(\wbf)$ is the excess primal population risk. \label{tab:summary}}
\end{table*}

\noindent$\bullet$ We analyze the privacy and utility of  DP-SGDA under the convex-concave setting in terms of the weak primal-dual population risk, i.e., $\max_{\vbf\in\Vcal}\Ebb\big[F(A_\wbf(\S),\vbf)\big]\!-\!\min_{\wbf\in\Wcal}\Ebb\big[F(\wbf,A_\vbf(\S))\big]$, \! where $(A_\wbf(\S),\! A_\vbf(\S))$ is the output of DP-SGDA.  Specifically, we show that it can guarantee $(\epsilon,\delta)$-DP  and achieve the optimal rate $\Ocal\Big(\frac{1}{\sqrt{n}} +  \frac{\sqrt{d\log (1/\delta)}}{n\epsilon}\Big)$  for smooth and nonsmooth cases where $d = \max\{d_1, d_2\}.$ To our best knowledge, this is the first-ever known result for DP-SGDA in the nonsmooth case.    

\noindent $\bullet$ We further study the utility of DP-SGDA in the nonconvex-strongly-concave case in terms of the primal population risk, i.e., $R(A_\wbf(\S)) =  \max_{\vbf\in\Vcal}\Ebb\big[F(A_\wbf(\S),\vbf)\big].$ In particular, under the \PL (PL) condition of $F_\S$, we prove that the excess primal population risk, i.e.,  $R(A_\wbf(\S)) - \min_{\wbf\in \Wcal} R(\wbf)$, enjoys the rate $\Ocal\Big(\frac{1}{n^{1/3}} + \frac{\sqrt{d\log(1/\delta)}}{n^{5/6}\epsilon}\Big)$ while guaranteeing $(\epsilon, \delta)$-DP. The key techniques involve the convergence analysis of $R_S(A_\wbf(\S)) - \min_{\wbf} R_S(\wbf)$ and the stability analysis for $A_\wbf(S)$ which are of interest in their own rights. As far as we are aware, these results are the first ones known for DP-SGDA in the nonconvex setting. 
 
\noindent $\bullet$ We perform numerical experiments on three benchmark datasets which validate the effectiveness of DP-SGDA for both convex and non-convex cases.

\subsection{Motivating Examples}
We give two examples of minimax problems under the DP constraint. See Appendix \ref{sec:motivating-example} for more examples and details.

\textbf{AUC Maximization.} Area Under the ROC Curve (AUC) is a widely used measure for binary classification. It has been shown optimizing AUC is equivalent to a minimax problem once auxiliary variables $a, b, v \in \Rbb$ are introduced \citep{ying2016stochastic}.
\begin{align*}
\min_{\theta, a, b}\max_{v}\Big\{	F(\theta,a,b,v) = \Ebb_\zbf[f(\theta, a, b, v;\zbf)]\Big\}.
\end{align*}
Differential privacy has been applied to learn private classifier by optimizing AUC \citep{wang2021differentially}.

\textbf{Generative Adversarial Networks.} Originally proposed in \citet{goodfellow2014generative}, GAN in general can be written as a minimax problem between a generator network $G_\vbf$ and a discriminator network $D_\wbf$
\begin{align*}
\min_{\wbf}\max_{\vbf} \mathbb{E}[f(\wbf,\vbf;\zbf,\xi)] \!=\! \mathbb{E}_{\zbf} [D_\wbf(\zbf)] \!-\! \mathbb{E}_{\xi} [D_\wbf(G_\vbf(\xi))].
\end{align*}
DP-SGDA and its variants were employed to train differential private GANs by \citet{xie2018differentially}. Recently differential privacy has successfully applied to private data generation by GAN framework \citep{jordon2018pate, beaulieu2019privacy}.

\subsection{Related Work}


Below  we briefly discuss some related work. 

\noindent{\bf Convergence analysis for SGDA.} It is a classical result that SGDA can achieve a convergence rate   $\Ocal(1/\sqrt{T})$ in the convex and concave case \citep{nedic2009subgradient,nemirovski2009robust} where $T$ is the number of iterations. For the nonconvex-(strongly)-concave case,  the work of  \citet{lin2020gradient} shows the local convergence of SGDA if  the stepsizes $\eta_{\wbf,t}$ and  $\eta_{\vbf,t}$ are chosen to be appropriately different. Other important studies consider variants of SGDA and prove their local  convergence for the nonconvex case. Such algorithms include nested algorithms  \citep{rafique2021weakly} for weakly-convex-weakly-concave problems, multi-step GDA  \citep{nouiehed2019solving} under the one-sided PL condition,  epoch-wise SGDA  \citep{yan2020optimal}, and  stochastic recursive SGDA \citep{luo2020stochastic} for  nonconvex-strongly-concave problems, to mention but a few.

\noindent{\bf Stability and generalization of non-private SGD and SGDA.} 
The studies of \citep{hardt2016train,charles2018stability,kuzborskij2018data} use uniform stability \citet{bousquet2002stability} to derive the generalization of non-private SGD for the convex and smooth case while the convex and nonsmooth case was established by \citet{bassily2020stability,lei2020fine}. The nonconvex case under the PL-condition was  considered by \citet{charles2018stability,lei2021sharper}. The stability and generalization of SGDA for minimax problems were studied by \citet{lei2021stability} in different forms for convex and nonconvex, smooth, and nonsmooth cases, and by  \citet{farnia2021train} with focus on the smooth cases.

{\bf DP-SGD and DP-SGDA.} DP-SGD was shown to attain the optimal excess population risk  $\Ocal({1}/{\sqrt{n}} +  {\sqrt{d\log (1/\delta)}}/{n\epsilon})$ in \citet{bassily2019private,bassily2020stability,WLYZ,wang2020differentially} for the convex case.  For nonconvex objectives,  \citet{wang2019differentially} studied the DP Gradient Langevin Dynamics, and \citet{zhang2021private} studied a multi-stage type of DP-SGD assuming the  weakly-quasi-convexity and PL condition.  In \citet{xie2018differentially,zhang2018differentially},  DP-SGDA and its variants together with clipping techniques were employed to train differentially private GANs which showed  promising results in applications. However, no utility analysis was given there.  \citet{boob2021optimal} focused on the noisy stochastic extragradient method with DP constraints for minimax problems in the convex-concave and smooth settings and provided its utility analysis using variational inequality (VI) and stability approaches. 



\section{Problem  Formulation }\label{sec:preliminaris}
In this section, we introduce necessary assumptions, notations and the DP-SGDA algorithm.
\subsection{Assumptions and Notations}\label{sec:assumption}
Firstly, we introduce necessary  assumptions and notations. A function $h: \Wcal \rightarrow \Rbb$ is said to be convex if,  for all $\wbf, \wbf' \in \Wcal$, there holds $h(\wbf) \geq h(\wbf')+ \langle\nabla h(\wbf'), \wbf - \wbf'\rangle$ where $\nabla$ is the gradient operator and $\langle \cdot, \cdot\rangle$ is the inner product. Let $\|\cdot\|_2$ denote the Euclidean norm. We say $h$ is $\rho$-strongly-convex if $h - \frac{\rho}{2}\|\wbf\|_2^2$ is convex, $h$ is concave if $-h$ is convex, and $\rho$-strongly-concave if $-h-\frac{\rho}{2}\|\wbf\|_2^2$ is convex. Let $[n]:=\{1,2,\ldots,n\}$.

\begin{definition}
Given a function $h: \Wcal \times \Vcal \rightarrow \Rbb$. We say $h$ is convex-concave if for any $\vbf \in\Vcal$, the function $\wbf \mapsto h(\wbf,\vbf)$ is convex and for any $\wbf \in\Wcal$, the function $\vbf \mapsto h(\wbf,\vbf)$ is concave.
\end{definition}

\begin{assumption}[\textbf{A1}]\label{ass:lipschitz}
The function $f$ is said to be Lipschitz continuous if there exist $G_\wbf, G_\vbf > 0$ such that, for any $\wbf, \wbf' \in \Wcal, \vbf, \vbf' \in \Vcal$ and $\zbf \in \Zcal$, 
$\|f(\wbf, \vbf; \zbf) - f(\wbf', \vbf; \zbf)\|_2 \leq  G_\wbf \|\wbf - \wbf'\|_2$, and    
$\|f(\wbf, \vbf; \zbf) - f(\wbf, \vbf'; \zbf)\|_2 \leq  G_\vbf \|\vbf - \vbf'\|_2.$ And denote $G = \max\{G_\wbf, G_\vbf\}$.
\end{assumption}

\begin{assumption}[\textbf{A2}]\label{ass:bounded-variance}
For randomly drawn $j\in [n]$, the gradients $\nabla_\wbf f(\wbf, \vbf; \zbf_j)$ and $\nabla_\vbf f(\wbf, \vbf; \zbf_j)$ have bounded variances $B_\wbf$ and $B_\vbf$ respectively. And let $B=\max\{B_\wbf, B_\vbf\}$.
\end{assumption}

\begin{assumption}[\textbf{A3}]\label{ass:smooth}
The function $f$ is said to be smooth if it is continuously differentiable and there exists a constant $L > 0$ such that for any $\wbf,\wbf'\in \Wcal$, $\vbf, \vbf'\in \Vcal$ and $\zbf \in \Zcal$,
\begin{align*}
\left\|\!\begin{pmatrix}\!\nabla_\wbf f(\wbf, \vbf; \zbf) \!-\! \nabla_\wbf f(\wbf', \vbf'; \zbf)\!\\
\!\nabla_\vbf f(\wbf, \vbf; \zbf) \!-\! \nabla_\vbf f(\wbf', \vbf'; \zbf)\!
\end{pmatrix}\!\right\|_2 \!\leq\! L\! \left\|\!\begin{pmatrix}
\!\wbf \!-\! \wbf'\\
\!\vbf \!-\! \vbf'
\end{pmatrix}\!\right\|_2 
\end{align*}
\end{assumption}

We also require the \PL (PL) condition. 
\begin{definition}[\citep{polyak1964gradient}]\label{ass:pl}
A function $h: \Wcal \rightarrow \Rbb$ satisfies the PL condition if there exist a constant $\mu>0$ such that, for any $\wbf \in \Wcal$, 
$ \frac{1}{2}\|\nabla h(\wbf)\|_2^2 \geq  \mu(h(\wbf) - \min_{\wbf' \in \Wcal} h(\wbf')).$
\end{definition}
We refer to \citet{karimi2016linear} for a nice discussion of this condition and other general conditions that allow the global convergence of gradient descent.
 
\subsection{DP-SGDA Algorithm}\label{sec:algorithm}
We now move on to the definition of differential privacy and the description of  DP-SGDA.  Differential privacy was introduced by \citet{dwork2006calibrating,dwork2014algorithmic}. We say that two datasets $\S,\S'$ are neighboring datasets if they differ by at most one example.

\begin{algorithm}[ht!]
\caption{Differentially Private Stochastic Gradient Descent Ascent (DP-SGDA) Method\label{alg:dp-sgda}}
\begin{algorithmic}[1]
\STATE {\bf Inputs:} data $\S = \{\zbf_i: i \in [n]\}$, privacy budget $\epsilon, \delta$, number of iterations $T$, learning rates $\{\eta_{\wbf,t}, \eta_{\vbf,t}\}_{t=1}^T$, and initialize $(\wbf_0, \vbf_0)$
\STATE Compute noise parameters $\sigma_\wbf$ and $\sigma_\vbf$ based on Eq. \eqref{eq:sigma-sigma}
\FOR{$t=1$ to $T$}
\STATE Sample a mini-batch $I_t = \{i_t^1, \cdots, i_t^m\in [n]\}$ uniformly with replacement
\STATE Sample independent noises $\xi_t \sim \Ncal(0, \sigma_\wbf^2 I_{d_1})$ and $\zeta_t \sim \Ncal(0, \sigma_\vbf^2 I_{d_2})$
\STATE
$
\!\wbf_{t\!+\!1} \!=\! \Pi_{\Wcal}\!\Big(\!\wbf_t \!-\! \eta_{\wbf,t} \!\Big(\frac{1}{m}\!\sum_{j\!=\!1}^m \!\nabla_\wbf \!f(\wbf_t, \vbf_t; \zbf_{i_t^j}) \!+\! \xi_t\!\Big)\!\Big)
$
\STATE
$
\vbf_{t\!+\!1} \!=\! \Pi_{\Vcal}\!\Big(\!\vbf_t \!+\! \eta_{\vbf,t} \!\Big(\!\frac{1}{m}\!\sum_{j=1}^m\!\nabla_\vbf \!f(\wbf_t, \vbf_t; \zbf_{i_t^j}) \!+\! \zeta_t\!\Big)\!\Big)
$
\ENDFOR
\STATE {\bf Outputs:}\!\! $(\bar{\wbf}_T, \bar{\vbf}_T)\! =\! \frac{1}{T}\displaystyle\sum_{t=1}^T (\wbf_t, \vbf_t)$ or $({\wbf}_T, {\vbf}_T)$
\end{algorithmic}
\end{algorithm} 
\begin{definition}[Differential Privacy]
A (randomized) algorithm $A$ is called $(\epsilon,\delta)$-differentially private (DP) if, for all neighboring datasets $\S, \S'$ and for all events $O$ in the output space of $A$, the following holds 
\[
\Pbb[A(\S) \in O] \leq e^\epsilon	\Pbb[A(\S') \in O] + \delta.
\]	
\end{definition}
Our aim is to design a randomized algorithm satisfying $(\epsilon, \delta)$-DP which solves the empirical minimax problem: 
\begin{equation}\label{eq:ESPP} \min_{\wbf\in \Wcal}\max_{\vbf\in \Vcal} \Big\{F_\S(\wbf,\vbf) = \frac{1}{n}\sum_{i=1}^n f(\wbf,\vbf; \zbf_i)\Big\}. 
\end{equation}
Notice that in the standard ERM problem, which involves the minimization only with respect to $\wbf$,  DP-SGD \citep{wu2017bolt,song2013stochastic,bassily2019private,wang2020differentially} uses the gradient perturbation at each iteration. Specifically,   at each iteration of this algorithm, a randomized gradient estimated from a random subset (mini-batch) of $\S$ is perturbed by a Gaussian noise and then the model parameter is updated based on this noisy gradient. 

Following the same spirit, DP-SGDA  \citep{xie2018differentially,zhang2018differentially} adds Gaussian noises per iteration to the randomized gradient mapping $(g_{\wbf,t}, g_{\vbf,t}) = (\frac{1}{m}\sum_{j=1}^m\nabla_\wbf f(\wbf_t, \vbf_t; \zbf_{i_t^j}), \frac{1}{m}\sum_{j=1}^m\nabla_\vbf f(\wbf_t, \vbf_t; \zbf_{i_t^j}))$ where the index of example $\zbf_{i_t^j}$ is from the mini-batch $I_t$.  Then, the primal variable $\wbf$ is updated by gradient descent based on the noisy gradient $g_{\wbf,t} +\xi_t$ and the dual variable $\vbf$ is updated by gradient ascent based on the noisy gradient $g_{\vbf,t} +\zeta_t$.  The pseudo-code for DP-SGDA is given in Algorithm \ref{alg:dp-sgda}. The noise levels $\sigma_\wbf, \sigma_\vbf$ are given by \eqref{eq:sigma-sigma} which will be specified soon in Section \ref{sec:results} in order to guarantee $(\epsilon, \delta)$-DP.  The notations $\Pi_{\Wcal}(\cdot)$ and $ \Pi_{\Vcal}(\cdot)$ denote the projections to $\Wcal$ and $\Vcal$, respectively. 
From now on, the notation $A$ denotes the DP-SGDA algorithm and its  output is denoted by  $A(\S) = (A_\wbf(\S), A_\vbf(\S)).$

\subsection{Measures of Utility}\label{sec:measure}
Since the model $A(\S)$ is only trained based on the training data $\S$, its empirical behavior as measured by $F_\S$ may not generalize well on test data.  Our goal is to investigate the statistical behavior of  $A(\S)$ on the test data in terms of some population risk. However, unlike the standard statistical learning theory (SLT) setting where there is only a minimization of $\wbf$, we have different measures of population risk due to the minimax structure \citep{zhang2021generalization,lei2021stability}. 
Let $\Ebb[\cdot]$ denote the expectation with respect to the randomness of  algorithm $A$ and data $S$. We are particularly  interested in the following metrics.  

\begin{definition}[Weak Primal-Dual (PD) Risk]
The weak primal-dual population risk of $A(\S)$,  denoted by $\triangle^w(A_\wbf(\S), A_\vbf(\S))$,  is defined as
$$\max_{\vbf\in\Vcal}\Ebb\big[F(A_\wbf(\S),\vbf)\big]\!-\!\min_{\wbf\in\Wcal}\Ebb\big[F(\wbf,A_\vbf(\S))\big].$$
The corresponding weak PD empirical risk, denoted by $\triangle^w_S(A_\wbf(\S), A_\vbf(\S))$,  is defined as
\[
\max_{\vbf\in\Vcal}\Ebb\big[F_S(A_\wbf(\S),\vbf)\big]\!-\!\min_{\wbf\in\Wcal}\Ebb\big[F_S(\wbf,A_\vbf(\S))\big].
\]
\end{definition}
\begin{definition}[Primal Risk] The primal population risk of $A(\S)$ is given by $R(A_\wbf(\S))=\max_{\vbf\in\Vcal}F(A_\wbf(\S),\vbf)$ and empirical risk is defined by $R_S(A_\wbf(\S))=\max_{\vbf\in\Vcal}F_S(A_\wbf(\S),\vbf)$, respectively. The excess primal population risk is defined as
$$
\Ebb\big[R(A_\wbf(\S))-\min_{\wbf\in\Wcal}R(\wbf)\big].
$$ 
The corresponding excess primal empirical risk is then
$$
\Ebb\big[R_S(A_\wbf(\S))-\min_{\wbf\in\Wcal}R_S(\wbf)\big].
$$ 
\end{definition}

Meanwhile, the strong PD risk defined as $\triangle^s({\wbf},{\vbf})=\Ebb\big[\sup_{\vbf'\in\Vcal}F({\wbf},\vbf')-\inf_{\wbf'\in\Wcal}F(\wbf',{\vbf})\big]$. We have $\triangle^w(A_\wbf(\S), A_\vbf(\S)) \leq \triangle^s(A_\wbf(\S), A_\vbf(\S))$ by applying Jensen's inequality. However, when $F$ is strongly-convex-strongly-concave, the point distance from the model $(A_\wbf(\S), A_\vbf(\S))$ to the true saddle point $(\wbf^*, \vbf^*) \in \arg\min_{\wbf\in \Wcal}\max_{\vbf \in \Vcal} F(\wbf, \vbf)$ can be bounded by the weak PD population risk, i.e.\ $\Ebb[\|A_\wbf(\S) - \wbf^*\|_2^2 + \|A_\vbf(\S) - \vbf^*\|_2^2] \leq \Ocal(\triangle^w(A_\wbf(\S), A_\vbf(\S)))$. For certain problems, it is suffices to bound the weak PD risk, such as the learning problem for Markov decision process in Appendix \ref{sec:motivating-example}. The primal risk is more meaningful when one is concerned about the risk with respect to the primal variable, such as the AUC maximization problem.


\section{Main Results}\label{sec:results}

In this section, we present our main theoretical results for DP-SGDA. For the privacy guarantee,  we leverage the moments accountant method \citep{abadi2016deep}, which implies tight privacy loss for adaptive Gaussian mechanisms with amplification by subsampling.  Below we summarize a specific version of this method that suffices for our purpose.

\begin{theorem}\label{thm:moments-accountant-privacy}Let \textbf{(A1)} hold true. Then, 
there exist constants $c_1, c_2$ and $c_3$ so that given the mini-batch size $m$ and total iterations $T$, for any $\epsilon < c_1 m^2T/n^2$, Algorithm \ref{alg:dp-sgda} is $(\epsilon, \delta)$-differentially private for any $\delta > 0$ if we choose
\begin{equation}\label{eq:sigma-sigma}
\sigma_\wbf \!=\! \frac{c_2 G_\wbf \sqrt{T\log(1/\delta)}}{n\epsilon},\, \sigma_\vbf \!=\! \frac{c_3 G_\vbf \sqrt{T\log(1/\delta)}}{n\epsilon}.
\end{equation}
\end{theorem}
 
The proof of Theorem \ref{thm:moments-accountant-privacy} is given in Appendix \ref{sec:proof-privacy}.

\begin{remark}\label{rem:choice-of-param}
In practice, given privacy budget $\epsilon, \delta$ and parameters $m, T$, the constant $c_2$ and hence $\sigma$ can be found by grid search \citep{abadi2016deep}. Here we provide a set of parameters that satisfies the condition  in that reference and our Theorem \ref{thm:moments-accountant-privacy}. That is, by choosing $\epsilon \leq 1, \delta\leq 1/n^2$ and $m = \max(1, n\sqrt{\epsilon/(4T)})$, then we have explicit values for the variances as $
\sigma_\wbf = \frac{8 G_\wbf \sqrt{T\log(1/\delta)}}{n\epsilon}, \sigma_\vbf = \frac{8 G_\vbf \sqrt{T\log(1/\delta)}}{n\epsilon}.$
\end{remark}

\begin{remark}\label{rem:different-sensitivity}
Our Algorithm \ref{alg:dp-sgda} allows the application of independent noises $\xi_t, \zeta_t$ with different $\sigma_\wbf, \sigma_\vbf$, respectively. In  \citet{boob2021optimal}, a uniform $\sigma$ is used (Theorem 5.4 or 7.4 there) for both primal and dual variables. In many examples, the primal and dual gradients $\nabla_\wbf f(\wbf_t, \vbf_t,\zbf_{i_t^j}), \nabla_\vbf f(\wbf_t, \vbf_t,\zbf_{i_t^j})$ enjoy different Lipschitz constants ($\ell_2$-sensitivity).  Therefore, our treatment leads to a more delicate way of calibrating the variances of the Gaussian noises. As we shall see in the experiments in Section \ref{sec:exp}, this treatment enables Algorithm \ref{alg:dp-sgda} to achieve better performance.
\end{remark}

In the subsequent subsections, we present our main contribution of this paper, i.e., the  utility bounds of DP-SGDA for the convex-concave and nonconvex-strongly-concave cases, respectively.

\subsection{Convex-Concave Case}\label{sec:convex} 
In this subsection,  we present the utility bound of DP-SGDA for  the convex-concave case in terms of the weak PD risk of the output $(\bar{\wbf}_T,\bar{\vbf}_T)$ of Algorithm \ref{alg:dp-sgda}. 

\begin{theorem}\label{thm:sgda-utility}
Assume the function $f$ is convex-concave. Assume $\Wcal$ and $\Vcal$ are bounded so that $\max_{\wbf\in \Wcal}\|\wbf\|_2 \leq D_\wbf$, $\max_{\vbf\in \Vcal}\|\vbf\|_2 \leq D_\vbf$. And let $D = \max\{D_\wbf, D_\vbf\}$. Let the stepsizes $\eta_{\wbf,t} = \eta_{\vbf,t} = \eta$ for all $t \in [T]$ with some $\eta > 0$. Under one of the condition 
\begin{enumerate}[topsep=-1pt, partopsep=-1pt]
\item[a)] Assumption \textbf{(A1)} and \textbf{(A3)} hold true and we choose $T \asymp n$ and $\eta \asymp 1/\big(\max\{\sqrt{n}, \sqrt{d\log(1/\delta)}/\epsilon\}\big)$, 
\item[b)] or Assumption  \textbf{(A1)} holds true and we choose $T \asymp n^2$ and $\eta \asymp 1/\big(n\max\{\sqrt{n}, \sqrt{d\log(1/\delta)}/\epsilon\}\big)$,
\end{enumerate}
then Algorithm \ref{alg:dp-sgda} satisfies
\begin{align*}
\triangle^w(\bar{\wbf}_T,\bar{\vbf}_T) = \Ocal\Big( \max\Big\{\frac{1}{\sqrt{n}}, \frac{\sqrt{d\log(1/\delta)}}{n\epsilon}\Big\}\Big).
\end{align*}
\end{theorem}

Its detailed proof can be found in Appendix \ref{sec:proof-convex}.  The proof mainly relies on the concept of stability \citep{bousquet2002stability,charles2018stability,hardt2016train,kuzborskij2018data}.  Specifically,   the weak PD population risk  can be decomposed as follows:  
\begin{align*}\label{eq:weak-err-decomp}
\triangle^w(\bar{\wbf}_T, \bar{\vbf}_T)  = & \triangle^w(\bar{\wbf}_T, \bar{\vbf}_T) - \triangle^w_\S(\bar{\wbf}_T, \bar{\vbf}_T)\\
& + \triangle^w_\S(\bar{\wbf}_T, \bar{\vbf}_T), \numberthis
\end{align*}
where the term $\triangle^w(\bar{\wbf}_T, \bar{\vbf}_T) - \triangle^w_\S(\bar{\wbf}_T, \bar{\vbf}_T)$ is the generalization error and $\triangle^w_\S(\bar{\wbf}_T, \bar{\vbf}_T)$ is the optimization error.

The estimation for the optimization error can be conducted by standard techniques \citep{nemirovski2009robust}. We give a self-contained proof in Appendix \ref{sec:cc-opt}. The generalization error is estimated using  a concept of weak stability \citep{lei2021stability}. Specifically, we say the randomized algorithm $A$ is  {\em $\varepsilon$-weakly-stable} if,  for any neighboring sets $\S, \S'$ differing at one single datum,  there holds \begin{align*} & \sup_\zbf\big(\!\sup_{\vbf \in \Vcal}\Ebb_{A}[f(A_\wbf(\S), \vbf; \zbf) - f(A_\wbf(\S'), \vbf; \zbf)]\\  &  + \sup_{\wbf \in \Wcal}\Ebb_{A}[f(\wbf, A_\vbf(\S); \zbf) \!\!-\!\! f(\wbf, A_\vbf(\S'); \zbf)]\big) \!\!\leq \varepsilon.\end{align*}
We know from \citet{lei2021stability} that  $\varepsilon$-weak-stability implies 
    $\triangle^w(A_{\wbf}(\S),A_{\vbf}(\S))-\triangle^w_\S(A_{\wbf}(\S),A_{\vbf}(\S))\leq\varepsilon.$

In Appendix \ref{sec:cc-gen}, we prove the weak stability of DP-SGDA (i.e.\ Algorithm 1) for both smooth and nonsmooth cases. Putting the estimations for the optimization error and generalization error into \eqref{eq:weak-err-decomp} can yield the bound in Theorem \ref{thm:sgda-utility}. 
We end this subsection with some remarks. 


\begin{remark}\label{rem:optimality}
The utility bound $\Ocal\Big(\max\Big\{\frac{1}{\sqrt{n}}, \frac{\sqrt{d\log(1/\delta)}}{n\epsilon}\Big\}\Big)$ is optimal for convex-concave minimax problem. A lower bound with the same order has been established in the convex ERM setting \citep{bassily2014private, bassily2019private, feldman2020private} and the measure of utility is given by $\Ebb[F(A_\wbf(S)) - min_{\wbf \in \Wcal} F(\wbf)]$. Here we slightly abuse the notation to indicate $F$ as the population risk and $A_\wbf(S)$ as the algorithm for the ERM problem. Since the convex-concave minimax problem is a special case of convex ERM problems when the dual variable is constant, this lower bound also applies to our setting.
\end{remark}

\begin{remark}\label{rem:compare-extragradient}
The same optimal utility was claimed in \citet{boob2021optimal}. Yet our results also possess two theoretical gains compared to theirs. Firstly, when the smoothness assumption holds, Part a) in our Theorem \ref{thm:sgda-utility} shows the optimal utility with $T = \Ocal(n)$ iterations and $\Ocal(n^{3/2})$ gradient computations by Remark \ref{rem:choice-of-param}, while their single-looped algorithm (Algorithm 1 there) requires $\Ocal(n^2)$ gradient computations in their Theorem 5.4. They further improved the gradient complexity to $\Ocal(n^{3/2}\log(n))$ in Theorem 7.4, which, however, requires an extra subroutine algorithm (inner-loop)  (Algorithm 2 there). Secondly, we also derive the same optimal bound with only Lipschitz continuous  assumption for the nonsmooth case which was not addressed in \citet{boob2021optimal}.
\end{remark}


\subsection{Nonconvex-Strongly-Concave Case}\label{sec:nonconvex-strongly-concave}
We proceed to the case when $f$ is non-convex-strongly-concave. In this case, we can present utility bounds of DP-SGDA in terms of the primal excess risk, i.e.,  $R(\wbf_T) - \min_{\wbf\in \Wcal} R(\wbf)$,  where $\wbf_T$ is the last iterate  of Algorithm \ref{alg:dp-sgda}. Generally speaking, a saddle point may not always exist without the convexity assumption. Since our goal in this paper is to find global optima, we assume that the saddle point of the empirical minimax problem exists, i.e., there exists $(\hat{\wbf}_\S, \hat{\vbf}_\S)$ such that,   for any $\wbf\in \Wcal$ and $\vbf\in \Vcal$,
\begin{align*}
F_\S(\hat{\wbf}_\S, \vbf) \leq F_\S(\hat{\wbf}_\S, \hat{\vbf}_\S) \leq F_\S(\wbf, \hat{\vbf}_\S). 
\end{align*}

To estimate the primal excess risk,  we define $R^*_\S = \min_{\wbf\in\Wcal} R_\S(\wbf), \text{ and } R^* = \min_{\wbf\in\Wcal} R(\wbf).$ Then, for any $\wbf^* \in \arg\min_\wbf R(\wbf)$ we have the error decomposition:
\begin{align*}\label{eq:err-decomp}
\Ebb[R(\wbf_T) & \!-\! R^*] \!=\!  \Ebb[R(\wbf_T) \!\!-\!\! R_\S(\wbf_T)] \!+\! \Ebb[R_\S(\wbf_T) \!\!-\!\! R_\S^*]\\
& +  \Ebb[R_\S^* \!-\! R_\S(\wbf^*)] \!+\! \Ebb[R_\S(\wbf^*) \!-\! R(\wbf^*)]\\
\leq & \Ebb[R(\wbf_T) \!-\! R_\S(\wbf_T)] \!+\! \Ebb[R_\S(\wbf^*) \!-\! R(\wbf^*)]\\
& + \Ebb[R_\S(\wbf_T) \!-\! R_\S^*],\numberthis
\end{align*}
where the last inequality follows from the fact that $R_\S^* - R_\S(\wbf^*)\le 0 $ since $R_\S^* = \min_{\wbf\in \Wcal}R_\S(\wbf)$. The term $\Ebb[R_\S(\wbf_T) - R_\S^*]$ is the {\em optimization error} which characterizes the discrepancy between the primal empirical risk of an output of Algorithm \ref{alg:dp-sgda} and the least possible one. The term $\Ebb[R(\wbf_T) - R_\S(\wbf_T)]  + \Ebb[R_\S(\wbf^*) - R(\wbf^*)]$ is called the {\em generalization error} which measures the discrepancy  between the primal population risk and the empirical one. The estimations for these two errors are described as follows. 

\textbf{Optimization Error.} The next theorem characterizes the primal empirical risk of DP-SGDA under the PL-SC assumption.

\begin{theorem}\label{thm:sgda-primal-opt}
Assume Assumptions \textbf{(A1)} and \textbf{(A2)} hold true,  and the function $F_\S(\wbf, \cdot)$ is $\rho$-strongly concave and $F_\S(\cdot, \vbf)$ satisfies $\mu$-PL condition. Assume $\Vcal$ is bounded. Let $\kappa = L/\rho$.  If we choose $\eta_{\wbf,t} \asymp \frac{1}{\mu t}$ and $\eta_{\vbf,t} \asymp \frac{\kappa^{2.5}}{\mu^{1.5} t^{2/3}}$, then $$\Ebb[R_\S(\wbf_{T+1}) - R_\S^*] = \Ocal\Big(\frac{\kappa^{3.5}}{\mu^{2.5}}\Big(\frac{1/m + d(\sigma_\wbf^2 + \sigma_\vbf^2)}{T^{2/3}}\Big)\Big).$$
\end{theorem}
We provide the proof of  Theorem \ref{thm:sgda-primal-opt} in Appendix \ref{sec:sgda-primal-opt}. In the non-private setting, i.e. $\sigma_\wbf=\sigma_\vbf=0$, Theorem \ref{thm:sgda-primal-opt} implies that the convergence rate in terms of the primal empirical risk is of the order  $\Ocal(\frac{\kappa^{3.5}}{\mu^{2.5} T^{2/3}}),$ which is a new result even in the non-private case as far as we are aware of.  

In \citet{lin2020gradient}, the local convergence of SGDA in the non-private case was proved in terms of the metric $\Ebb_{\tau}[\|\nabla R_\S(\wbf_\tau)\|_2^2]$ where $\tau$ is chosen uniformly at random from the set $\{1, 2, \ldots, T\}.$  Our analysis is much more involved since it proves the global convergence of the last iterate $\wbf_T$.  Our main idea is to prove the coupled recursive inequalities for two terms, i.e., 
$a_t = R_\S(\wbf_t) - R_\S^*$ and $b_t = \|\vbf_t - \hat{\vbf}_\S(\wbf_t)\|_2^2$ where $ \hat{\vbf}_\S(\wbf_t)= \arg\max_{\vbf\in \Vcal} F_\S(\wbf_t, \vbf)$,  and  then  carefully derive the the convergence rate for $a_t+ \lambda_t b_t$ by choosing $\lambda_t$ appropriately. The convergence rate and its proof can be of interest in their own right. One can find more detailed arguments in Appendix \ref{sec:sgda-primal-opt}. 


\textbf{Generalization Error.} We present the bound for the generalization error which is proved again using the stability approach.

We begin with a discussion of the saddle points. While the saddle point $(\hat{\wbf}_\S, \hat{\vbf}_\S)$ may not be unique,  $\hat{\vbf}_\S$ must be unique if $F_\S(\wbf, \vbf)$ is strongly-concave in $\vbf$ (see Proposition \ref{lem:unique-v} in Appendix \ref{sec:proof-nonconvex}).  Therefore, we can define $\pi_\S(\wbf)$ the projection of $\wbf$ to the set of saddle points, as  ${\Omega}_\S = \bigl\{\hat{\wbf}_\S:   (\hat{\wbf}_\S, \hat{\vbf}_\S)  \in \arg\min_{\wbf\in \Wcal}\max_{\vbf\in \Vcal}F_\S(\wbf, \vbf)\bigr\} =\bigl\{\hat{\wbf}_\S:  \hat{\wbf}_\S \in \arg\min_{\wbf\in \Wcal}F_\S(\wbf, \hat{\vbf}_\S)\bigr\} $.

Recall that $\wbf_T$ is the iterate of DP-SGDA at time $T$ based on the training data $\S$. Likewise, we denote by $\wbf'_T$   based on the training set $\S'$ which differs from $\S$ at one single datum. Due to the possibly multiple saddle points, we need the following critical assumption for estimating the generalization error.  
\begin{assumption}[\textbf{A4}]\label{ass:unique-projection}
For the (randomized) algorithm DP-SGDA,   assume that $\pi_{\S'}(\pi_\S (\wbf_T)) = \pi_{\S'} (\wbf'_T)$ for any neighboring sets $\S$ and $ \S'.$    
\end{assumption}  
 
Assumption \textbf{(A4)} was introduced in \citet{charles2018stability} for studying the stability of SGD in  the non-convex case which only involves the minimization over $\wbf$. In our case, \textbf{(A4)} holds true whether the saddle point is unique (e.g., $F_\S$ is strongly-convex and strongly-concave) or the two sets of saddle points based on $\S$ and $\S'$, i.e. $\Omega_\S$  and $\Omega_{\S'}$ do not change too much.  Since our algorithm satisfies $(\epsilon, \delta)$-DP  it means that the distributions of $\wbf_T$ and $\wbf'_T$ generated from two neighboring sets $S$  and $S'$ are ``close'', which indicates  $\sup_{S,S'}\|\pi_{S'}(\pi_S(\wbf_T)) - \pi_{S'}(\wbf'_T)\|_2$ can be small. Proving such statement serves as an interesting open problem.

Now we can state the results on the generalization error. 

\begin{theorem}[Generalization Error]\label{thm:sgda-primal-gen}
Assume Assumptions \textbf{(A1)}, \textbf{(A3)} and \textbf{(A4)}  hold true, and  assume  the function $f(\wbf, \cdot; \zbf)$ is $\rho$-strongly concave and $F_\S(\cdot, \vbf)$ satisfies $\mu$-PL condition. Let $\kappa = L/\rho$. If $\Ebb[R_\S(\wbf_{T}) - R_\S^*] \leq \varepsilon_T$, then
$$\!\Ebb[R(\wbf_T) \!-\! R_\S(\wbf_T)]  \!\leq \!  (1\!+\!\kappa)G_\wbf\Big(\sqrt{\frac{\varepsilon_T}{2\mu}} \!+\! \frac{1}{n}\sqrt{\frac{G_\wbf^2}{4\mu^2}\! + \!\frac{G_\vbf^2}{\rho\mu}} \Big), 
$$
and 
$$
\Ebb[R_\S(\wbf^*) - R(\wbf^*)] \leq  \frac{4G_\vbf^2}{\rho n}.$$
\end{theorem}
The proof of Theorem \ref{thm:sgda-primal-gen} is provided in Appendix \ref{sec:sgda-primal-gen}.

\begin{remark} The  generalization error bounds given in Theorem \ref{thm:sgda-primal-gen} indicate that if the optimization error $\Ebb[R_\S(\wbf_{T}) - R_\S^*] $ is small then the generalization error will be small.  This is consistent with the observation in the stability and generalization analysis of SGD \citep{charles2018stability,hardt2016train,lei2021sharper} for the minimization problems in the sense of ``optimization can help generalization". 
\end{remark}

We can derive the following utility bound for DP-SGDA by combining the results in Theorems \ref{thm:sgda-primal-gen} and  \ref{thm:sgda-primal-opt}. 

\begin{theorem}\label{thm:utility-nonconvex}
Under the same assumptions of Theorem \ref{thm:sgda-primal-gen}, if we choose $T \asymp n$, $\eta_{\wbf,t} \asymp \frac{1}{\mu t}$ and $\eta_{\vbf,t} \asymp \frac{\kappa^{2.5}}{\mu^{1.5} t^{2/3}}$, then 
$$
\Ebb[R(\wbf_{T+1}) - R^*] = \Ocal\Big(\frac{\kappa^{2.75}}{\mu^{1.75}}\Big(\frac{1}{n^{1/3}} + \frac{\sqrt{d\log(1/\delta)}}{n^{5/6}\epsilon}\Big)\Big).$$
\end{theorem}

The proof can be found in Appendix \ref{sec:agda-utility}.



\section{Experiments}\label{sec:exp}
In this section, we evaluate the performance of DP-SGDA by taking AUC maximization as an example. Due to space limitation, we present the most significant information and results of our experiments while  more detailed information and additional results are given in Appendix \ref{sec:add-details} and \ref{sec:additional-exp}. 
\subsection{Experimental Settings}\label{sec:setting}
\textbf{Baseline Model.} We perform experiments on the problem of AUC maximization with the least square loss to evaluate the DP-SGDA algorithm in linear and non-linear settings (two-layer multilayer perceptron (MLP)). In this case, AUC maximization can be formulated as
 \begin{align*}
\min_{\mathbb{\theta}\in \Theta} \Ebb_{\zbf,\zbf'}[(1 - h(\theta;\xbf) + h(\theta;\xbf'))^2|y=1, y'=-1], \end{align*}
where $h: \Theta\times \Rbb^d\rightarrow\Rbb$ is the scoring function. As shown in \citet{ying2016stochastic}, it  is equivalent to a minimax problem:
\begin{align*}
\min_{\wbf=(\theta, a, b)}\max_{\vbf}	\Ebb_\zbf[f(\theta, a, b, \vbf;\zbf)],
\end{align*}
where $f = (1-p)(h(\theta;\xbf) - a)^2\Ibb[y=1] + p(h(\theta;\xbf) - b)^2\Ibb[y=-1] + 2(1+\vbf)(ph(\theta;\xbf)\Ibb[y=-1] - (1 - p)h(\theta;\xbf)\Ibb[y=1])] - p(1-p)\vbf^2$ and $p = \Pbb[y=1]$.

When $h$ is a linear function, the AUC learning objective above is convex-strongly-concave. On the other hand, when $h$ is a MLP function, it becomes a nonconvex-strongly-concave minimax problem. In addition, following \citet{liu2019stochastic}, we use Leaky ReLU as an activation function for MLP. It was shown in their paper the empirical AUC objective satisfies the PL condition with this choice of $h$. Without a special statement, we set $256$ as the number of hidden units in MLP and $64$ as the mini-batch size during the training.    

\textbf{Datasets and Evaluation Metrics.}
Our experiments are based on three popular datasets, namely ijcnn1 \citep{chang2011libsvm}, MNIST \citep{lecun1998gradient}, and Fashion-MNIST \citep{xiao2017fashion} that have been used in previous studies. For MNIST and Fashion-MNIST, following \citet{gao2013one, ying2016stochastic}, we transform their classes into  binary classes by randomly partitioning the data into two groups, each with an equal number of classes. 
For ijcnn1, we randomly split its original training set into new training ($80\%$) and testing ($20\%$) sets. For MNIST and Fashion-MNIST, we use their original training set and testing set. For each method, the reported performance is obtained by averaging the AUC scores on the test set according to $5$ random seeds (for initial $\wbf$ and $\vbf$, sampling and noise generation).

\begin{table*}[th!]
\centering
\setlength\tabcolsep{2.5pt}
{
\begin{tabular}{|c|cc|c|cc|c|cc|c|}
\hline
Dataset & \multicolumn{3}{c|}{ijcnn1}         & \multicolumn{3}{c|}{MNIST}         & \multicolumn{3}{c|}{Fashion-MNIST}         \\ \hline
\multirow{2}{*}{Algorithm} & \multicolumn{2}{c|}{Linear} &  MLP  & \multicolumn{2}{c|}{Linear} &   MLP    & \multicolumn{2}{c|}{Linear} &    MLP   \\ \cline{2-10} 
             &    NSEG       &    DP-SGDA       &    DP-SGDA   &    NSEG       &    DP-SGDA       &    DP-SGDA       &    NSEG       &    DP-SGDA       &    DP-SGDA       \\ \hline
Original &    92.191 &  92.448 &  96.609  & 93.306  & 93.349  &99.546  & 96.552 &    96.523    &   98.020   \\ \hline
 \makecell{$\epsilon$=0.1}     & 90.106    &  91.110   &  92.763 &   91.247  &  91.858 &  97.878&  95.446  &  95.468   & 95.692\\ \cline{1-10} 
                           \makecell{$\epsilon$=0.5}    &  90.346 & 91.357   &  95.840 &   91.324 &  92.058 &  98.656 &  95.530&  95.816    & 96.988   \\ \cline{1-10} 
                            \makecell{$\epsilon$=1}     & 90.355 & 91.371 &  96.167 &   91.330 &  92.070 & 98.705&  95.534 &  95.834  & 97.102\\ \cline{1-10} 
                           \makecell{$\epsilon$=5}   & 90.363 & 91.383 &  96.294 &  91.334  &  92.078 & 98.742 & 95.538&   95.848 & 97.198\\ \cline{1-10} 
                             \makecell{$\epsilon$=10}  & 90.363& 91.386 &  96.297  &   91.334 &  92.080 &  98.747& 95.539&    95.850  & 97.213  \\ \hline
\end{tabular}
\caption{\it Comparison of AUC performance in NSEG and DP-SGDA (Linear and MLP settings) on three datasets with different $\epsilon$ and $\delta$=1e-6. The ``Original'' means no noise ($\epsilon=\infty$) is added in the algorithms.}
\label{tab:general_performance_partial}
}
\end{table*}

\begin{figure*}[t]
\begin{subfigure}[t]{0.32\linewidth}
    \includegraphics[width=\linewidth]{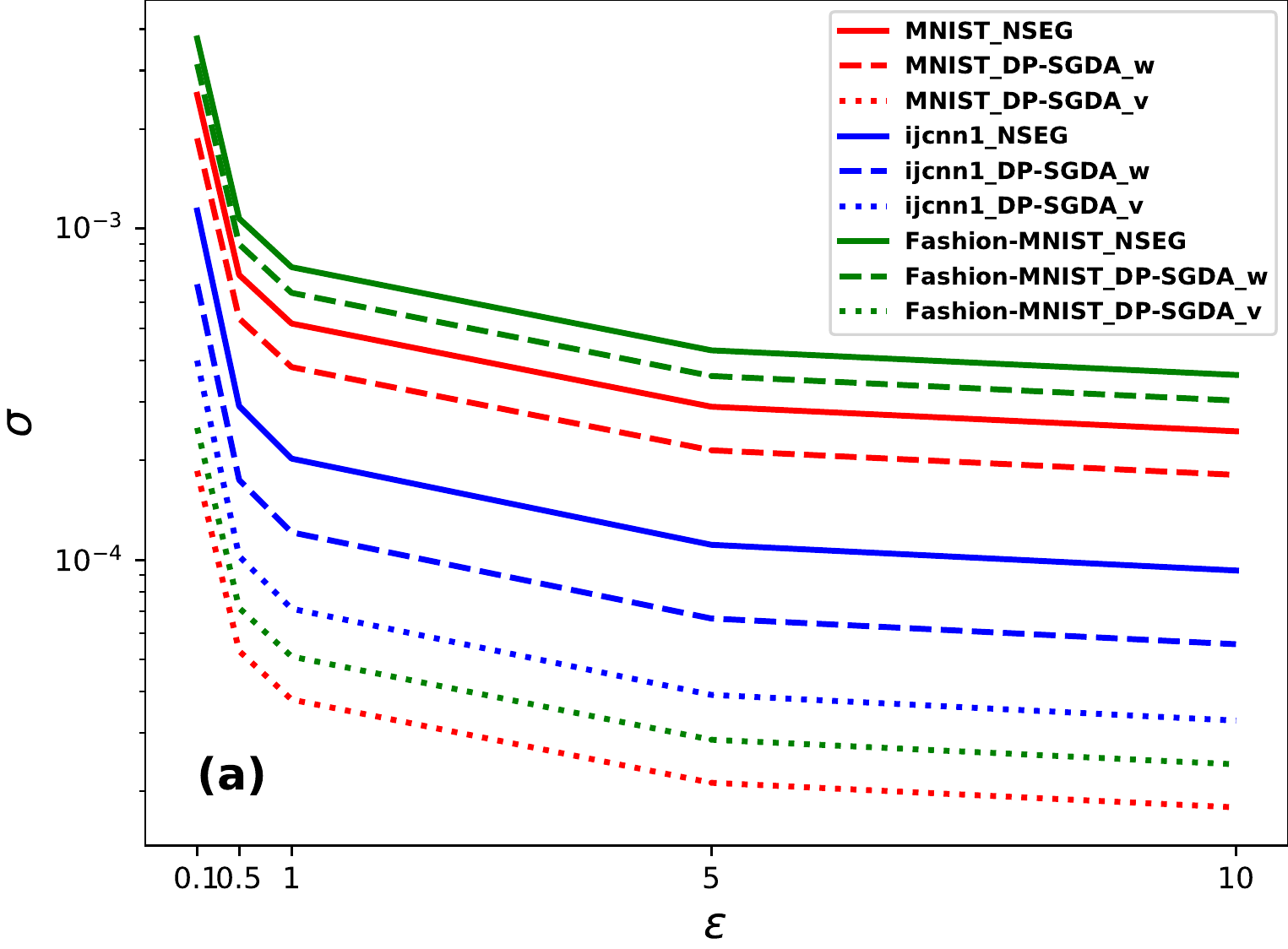}
\end{subfigure}%
    \hfill%
\begin{subfigure}[t]{0.32\linewidth}
    \includegraphics[width=\linewidth]{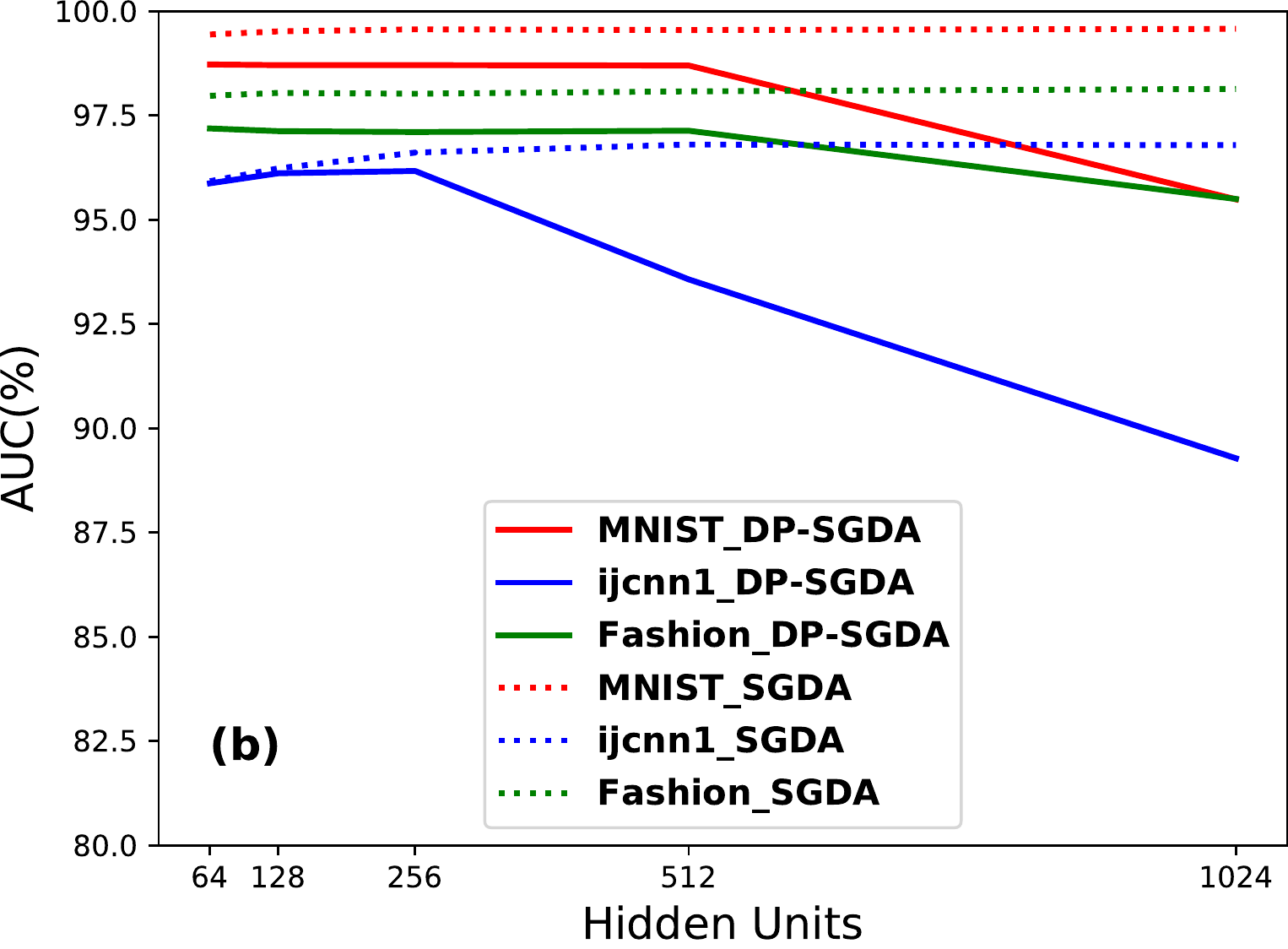}
\end{subfigure}
    \hfill%
\begin{subfigure}[t]{0.32\linewidth}
    \includegraphics[width=\linewidth]{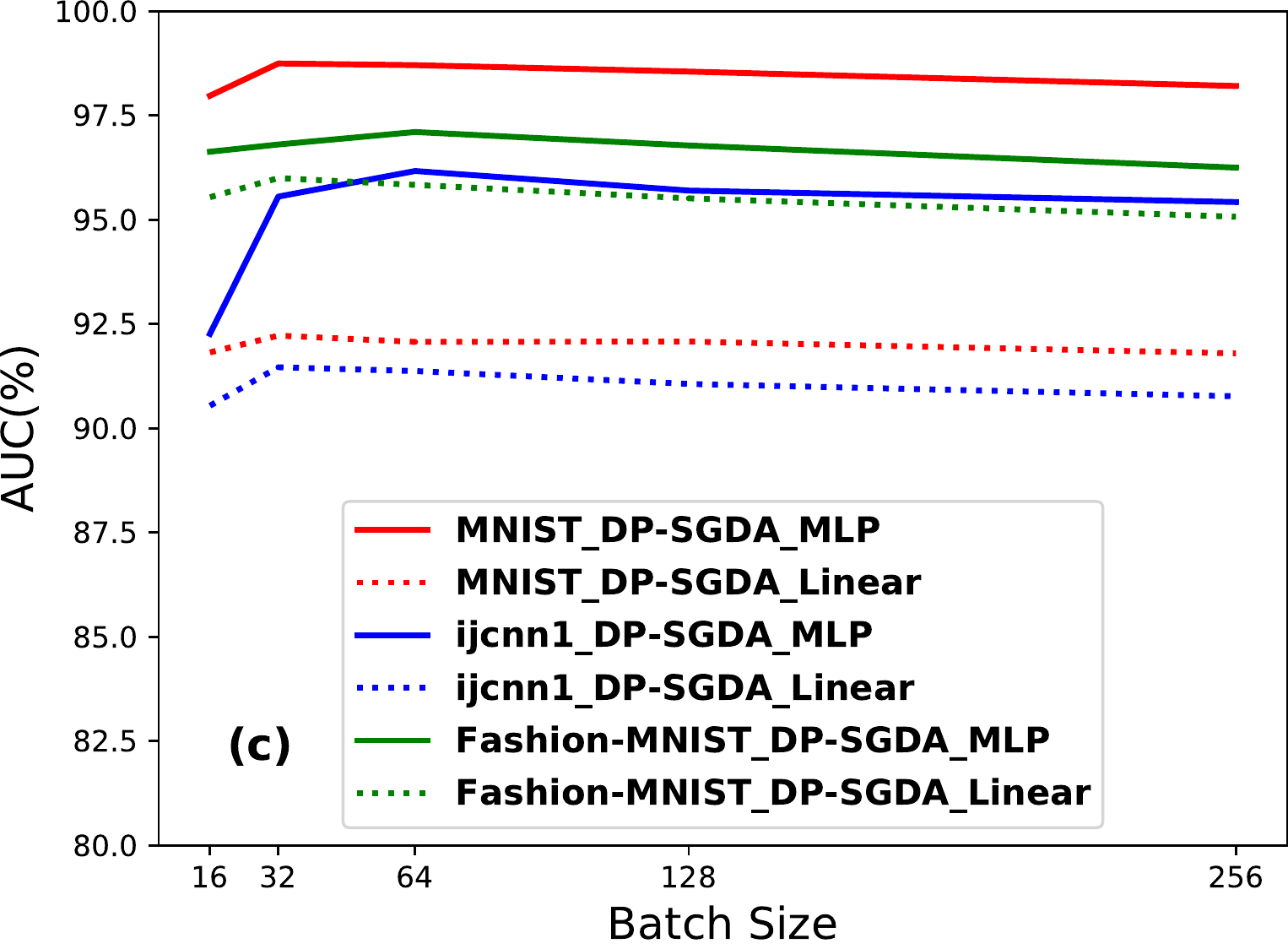}
\end{subfigure}
\caption{\em  (a) Comparison of $\sigma$ for NSEG and DP-SGDA (Linear setting) on three datasets with different $\epsilon$ and $\delta$=1e-6. (b)Comparison of AUC performance for SGDA and DP-SGDA in MLP settings on three datasets with different hidden units and $\epsilon$=1 and $\delta$=1e-6. (c) Comparison of AUC performance for DP-SGDA (Linear and MLP settings) on three datasets with different batch size and $\epsilon$=1 and $\delta$=1e-6.}
\label{fig:sigma}
\end{figure*}

\textbf{Privacy Budget Settings.} In the experiments, we set up five privacy levels from small to large: $\epsilon\in\{0.1, 0.5, 1, 5, 10\}$. We also consider three different $\delta$ from $\{\mathrm{1e\!-\!4}, \mathrm{1e\!-\!5}, \mathrm{1e\!-\!6}\}$. Due to space limitation, we only report the performance when $\delta=\mathrm{1e\!-\!6}$. More results can be found in Appendix \ref{sec:additional-exp}. To estimate the Lipschitz constants $G_\wbf$ and $G_\vbf$ (in Theorem \ref{thm:moments-accountant-privacy}), we first run the algorithms without adding noise. Then we calculate the maximum gradient norms of AUC loss w.r.t $\wbf$ and $\vbf$ and assign them as $G_\wbf$ and $G_\vbf$, respectively. According to these parameters, we calculate the noise parameter $\sigma$ by applying autodp\footnote{\url{https://github.com/yuxiangw/autodp}}, which is widely used in the existing works \citep{wang2019subsampled}.

\textbf{Compared Algorithms.} \citet{boob2021optimal} is the only existing paper that considers differential privacy in the convex-concave minimax problem. Therefore, we use their single-loop NSEG algorithm as our baseline method on the AUC optimization under the linear setting.
\subsection{Results}\label{experiments:results}
We report our evaluation and results on the utility and privacy trade-off of the DP-SGDA. Then we follow the experiment design by \citep{abadi2016deep} to study the effect of the parameters - hidden units and batch sizes.

\textbf{General AUC Performance vs Privacy.} The general performance of all algorithms under linear and MLP settings of AUC optimization is shown in Table \ref{tab:general_performance_partial}. Since the standard deviation of the AUC performance is around $[0, 0.1\%]$ and the difference between different algortihms is very small, we only report the average AUC performance. First, without adding noise into gradients, we can find the NSEG method and our DP-SGDA method have similar performance under the linear case. 
Furthermore, we can find the performance of the DP-SGDA with MLP model can outperform linear models on all datasets. This is because non-linear models have better expression power and therefore it can learn more information among features than linear models. Second, by adding noise into the gradients, we can find the AUC performance of all models is decreased on all datasets. However, by increasing the privacy budget $\epsilon$, the AUC performance is increased. The reason is that $\epsilon$ and $\sigma$ have opposite trends according to equation \eqref{eq:sigma-sigma}. The relation between $\epsilon$ and AUC score also verifies our Theorem \ref{thm:sgda-utility} and Theorem \ref{thm:utility-nonconvex}.
Third, to verify our statement in Remark \ref{rem:different-sensitivity},
we compare the $\sigma$ values from NSEG and DP-SGDA on all datasets in Figure \ref{fig:sigma}(a). From the figure, it is clear that the $\sigma$ from NSEG is larger than ours in all $\epsilon$ settings since it is calibrated based on the gradients' sensitivity from both $\wbf$ and $\vbf$. In fact, the sensitivity w.r.t. $\vbf$ is small as it is a one-dimensional variable for AUC maximization. Therefore, NSEG leads to overestimate on the noise addition towards $\vbf$. From Table \ref{tab:general_performance_partial} we observe our DP-SGDA achieves better AUC score than NSEG under the same privacy budget.

\textbf{Different Hidden Units.} 
In DP-SGDA under the MLP setting, the hidden unit is one of the most important factors affecting the model performance. Therefore, we compare the AUC performance with respect to the different hidden units in Figure \ref{fig:sigma}(b). If we provide a small number of hidden units, the model will suffer from poor generalization capability. Using a large number of hidden units will make the model easier to fit the training set. For SGDA (non-private) training, it is often helpful to apply a large number of hidden units, as long as the model does not overfit. In agreement with this intuition, we find the model performance improves with increasing hidden units in Figure \ref{fig:sigma}(b). However, for DP-SGDA training, more hidden units increase the sensitivity of the gradients, which leads to more noise added at each update. Therefore, in contrast to the non-private setting, we find the AUC performance decreases when the number of hidden units increases.   

\textbf{Different Mini-Batch Size.} 
From Theorem \ref{thm:moments-accountant-privacy} and Theorem \ref{thm:sgda-primal-opt}, we find mini-batch size can influence the Gaussian noise variances $\sigma_\wbf^2$ and $\sigma_\vbf^2$ as well as the convergence rate. Selecting the mini-batch size must balance two conflicting objectives.  On one hand, a small mini-batch size may lead to sub-optimal performance. On the other hand, for large batch sizes, the added noise has a smaller relative effect. Therefore, we show the AUC score for DP-SGDA with different mini-batch sizes in Figure \ref{fig:sigma}(c). The experimental results show that the mini-batch size has a relatively large impact on the AUC performance when the mini-batch size is small. 


\section{Conclusion}\label{sec: conclusion}
In this paper, we have used algorithmic stability to conduct utility analysis of the DP-SGDA algorithm for minimax problems under DP constraints.  For the convex-concave setting, we proved that   DP-SGDA can attain an optimal rate $\mathcal{O}(\frac{1}{\sqrt{n}} + \frac{\sqrt{d\log(1/\delta)}}{n \epsilon})$ in terms of the weak primal-dual population risk while providing $(\epsilon, \delta)$-DP for both smooth and nonsmooth cases. For the nonconvex-strongly-concave case, assuming that the empirical risk satisfies the PL condition we proved the excess primal population risk  of DP-SGDA can achieve a utility bound  $\Ocal\bigl(\frac{1}{n^{1/3}} + \frac{\sqrt{d\log(1/\delta)}}{n^{5/6}\epsilon}\bigr)$.  Experiments on three benchmark datasets  illustrate the effectiveness of DP-SGDA.

For future work, it would be interesting to improve the utility bound for the nonconvex-strongly-convex setting. It also remains unclear to us how to establish the utility bound for DP-SGDA when gradient clipping techniques are enforced at each iteration. Finally, it would also be interesting to evaluate the performance of DP-SGDA on other motivating examples such as GAN, MDP and robust optimization.

\begin{acknowledgements}
The work is supported by SUNY-IBM AI Alliance Research and NSF grants   (IIS-1816227, IIS-2008532, IIS-2103450, IIS-2110546 and DMS-2110836).  The authors would also like to thank Dr. Guzm\'{a}n and Dr. Boob for helpful discussions on differential privacy for minimax problems and for pointing out a gap in the proof of Lemma \ref{lem:sgda-opt-gap} in the Appendix  in an earlier  version of the paper. 
\end{acknowledgements}

\bibliography{yang_122}

\onecolumn
\appendix

\begin{center}
\textbf{\Large Appendix for "Differentially Private SGDA for Minimax Problems"}
\end{center}

\section{Motivating Examples}\label{sec:motivating-example}
We provide several examples that can be formulated as a stochastic minimax problem. All these examples have corresponding empirical minimax formulations. 

\textbf{AUC Maximization.} Area Under the ROC Curve (AUC) is a widely used measure for binary classification. Optimizing AUC with square loss can be formulated as
\begin{align*}
\min_{\theta \in \Theta} \Ebb_{\zbf,\zbf'}[(1 - h(\theta;\xbf) + h(\theta;\xbf'))^2|y=1, y'=-1]
\end{align*}
where $h: \Theta\times \Rbb^d\rightarrow\Rbb$ is the scoring function for the classifier. It has been shown this problem is equivalent to a minimax problem once auxiliary variables $a, b, \vbf \in \Rbb$ are introduced \citep{ying2016stochastic-supp}.
\begin{align*}
\min_{\theta, a, b}\max_{\vbf}	F(\theta,a,b,c) = \Ebb_\zbf[f(\theta, a, b, \vbf;\zbf)]
\end{align*}
where $f = (1-p)(h(\theta;\xbf) - a)^2\Ibb[y=1] + p(h(\theta;\xbf) - b)^2\Ibb[y=-1] + 2(1+\vbf)(ph(\theta;\xbf)\Ibb[y=-1] - (1 - p)h(\theta;\xbf)\Ibb[y=1])] - p(1-p)\vbf^2$ and $p = \Pbb[y=1]$. Such problem is (non)convex-concave. In particular, \citet{liu2019stochastic-supp} showed that when $h$ is a one hidden layer neural network the objective $f$ satisfies the \PL condition. Differential privacy has been applied to learn private classifier by optimizing AUC \citep{wang2021differentially-supp}. The proposed privacy mechanisms there are objective perturbation and output perturbation. 
 
\textbf{Generative Adversarial Networks (GANs).} GAN is introduced in \citet{goodfellow2014generative-supp} which can be regarded as a game between a generator network $G_\vbf$ and a discriminator network $D_\wbf$. The generator network produces synthetic data from random noise $\xi$, while the
discriminator network discriminates between the true data and the synthetic data.
In particular, a popular variant of GAN named as WGAN \citep{arjovsky2017wasserstein-supp} can be written as a minimax problem
\begin{align*}
\min_{\wbf}\max_{\vbf} \mathbb{E}[f(\wbf,\vbf;\zbf,\xi)] := \mathbb{E}_{\zbf} [D_\wbf(\zbf)] - \mathbb{E}_{\xi} [D_\wbf(G_\vbf(\xi))].
\end{align*}
Recently \citet{sahiner2021hidden-supp}  showed that WGAN with a two-layer discriminator and generator can be expressed as a convex-concave problem. An heuristic differentially private version of RMSProp were employed to train GANs by \citet{xie2018differentially-supp}. Recently differential privacy has successfully applied to private synthetic data generation by GAN framework \citep{jordon2018pate-supp, beaulieu2019privacy-supp}.

\textbf{Markov Decision Process (MDP).}  Let $\Acal$ be a finite action space. For any $a \in \Acal$, $P(a) \in [0, 1]^{n\times n}$ is the state-transition probability matrix and $\rbf(a) \in [0,1]^n$ is the vector of
expected state-transition rewards. In the infinite-horizon average-reward Markov decision problem, one aims to find a stationary policy $\pi$ to make an infinite sequence of actions and optimize the average-per-time-step reward $\bar{v}$. By classical theory of dynamics programming \citep{puterman2014markov-supp}, finding an optimal policy is equivalent as solving the fixed-point Bellman equation
\begin{align*}
\bar{v}^* + h^*_i = \max_{a\in \Acal} \big\{ \sum_{j=1}^n (p_{ij}(a)h^*_i + p_{ij}(a)r_{ij}(a))\big\}, \quad \forall i
\end{align*}
where $\hbf \in \Rbb^n$ is the difference-of-value vector. \citet{wang2017primal-supp} showed that this problem is equivalent to the minimax problem as follow
\begin{align*}
\min_{\hbf \in \Hcal}\max_{\mu \in \Ucal} \mu^\top((P(a) - I)\hbf + \rbf(a))
\end{align*}
where $\Hcal$ and $\Ucal$ are the feasible regions chosen according to the mixing time and stationary distribution. We refer to \citet{zhang2021generalization-supp} for a discussion on the measure of population risk.


\textbf{Robust Optimization and Fairness.} Let $\Dcal_1, \cdots, \Dcal_m$ be $m$ different distributions on some support. The aim is to minimize the worst population risks $L$ parameterized by some  $\wbf$ among multiple scenarios: 
\begin{align*}
\min_{\wbf \in \Wcal} L(\wbf) = \max_{1 \leq i \leq m} \big\{\Ebb_{\zbf_1 \sim \Dcal_1}[\ell(\wbf; \zbf_1)], \cdots, \Ebb_{\zbf_m \sim \Dcal_m}[\ell(\wbf; \zbf_m)]\big\}    
\end{align*}
This problem can be reformulated as a zero-sum game between two players $\wbf$ and $\vbf$ as follow
\begin{align*}
\min_{\wbf \in \Wcal}\max_{\vbf\in \Delta_m} \sum_{i=1}^m v_i \Ebb_{\zbf_i \sim \Dcal_i}[\ell(\wbf; \zbf_i)] = \Ebb\Big[\sum_{i=1}^m v_i \ell(\wbf; \zbf_i)\Big]    
\end{align*}
where $\Delta_m = \bigl\{\vbf \in\Rbb^m:  v_i\geq 0, \sum_{i=1}^m v_i=1\bigr\}$ denotes the $m$-dimensional simplex. Such robust optimization formulation has been recently proposed to address fairness among subgroups \citep{mohri2019agnostic-supp} and federated learning on heterogeneous populations \citep{li2019fair-supp}.

\section{Proofs of Theorem \ref{thm:moments-accountant-privacy} and Remark \ref{rem:choice-of-param}}\label{sec:proof-privacy}

In this section, we prove the privacy guarantee of DP-SGDA based on the privacy-amplification by the subsampling result, which is a direct application of Theorem 1 in \citet{abadi2016deep-supp}. First we introduce some necessary definitions.

\begin{definition}\label{def:sensitivity}
Given a function $g: \Zcal^n \rightarrow \Rbb^d$, we say $g$ has $\Delta(g)$ $\ell_2$-sensitivity if for any neighboring datasets $S, S'$ we have
\begin{align*}
\|g(S) - g(S')\|_2 \leq \Delta(g).    
\end{align*}
\end{definition}

\begin{definition}[\citep{abadi2016deep-supp}]\label{def:moments-accountant}
For an (randomized) algorithm $A$, and neighboring datasets $S, S'$ the $\lambda$-th moment is given as 
\[
\alpha_A(\lambda, S, S') = \log\Ebb_{O\sim A(S)}\Big[\Big(\frac{\Pbb[A(S) = O]}{\Pbb[A(S') = O]}\Big)^\lambda\Big].
\]
The moments accountant is then defined as 
\[
\alpha_A(\lambda) = \sup_{S, S'}\alpha_A(\lambda, S, S').
\]
\end{definition}

\begin{lemma}[\citep{abadi2016deep-supp}]
Consider a sequence of mechanisms $\{A_t\}_{t\in[T]}$ and the composite mechanism $A = (A_1, \cdots, A_T)$. 
\begin{enumerate}
\item[a)] [Composability]\label{lem:moments-accountant-composition} For any $\lambda$,
\[
\alpha_A(\lambda) = \sum_{t=1}^T \alpha_{A_t}(\lambda).
\]
\item[b)][Tail bound]\label{lem:moments-accountant-tail} For any $\epsilon$, the mechanism $A$ is $(\epsilon, \delta)$ differentially private for 
\[
\delta = \min_\lambda \alpha_A(\lambda) - \lambda \epsilon.
\]
\end{enumerate}
\end{lemma}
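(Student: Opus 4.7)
The plan is to prove the two parts separately: the composability (a) will follow from the chain-rule decomposition of the joint output density together with the tower property of conditional expectation, while the tail bound (b) will fall out of a Markov-style inequality applied to the privacy-loss random variable, combined with the standard conversion between tail bounds on the privacy loss and $(\epsilon,\delta)$-guarantees.

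For part (a), I first write any joint output $O = (o_1,\ldots,o_T)$ of the composite mechanism via the chain rule,
\[
\Pr[A(S)=O] = \prod_{t=1}^T \Pr[A_t(S; o_{1:t-1})=o_t],
\]
where the conditioning on $o_{1:t-1}$ encodes possible adaptivity of $A_t$ on the outputs of the previous mechanisms. Substituting this factorization into the definition of $\alpha_A(\lambda,S,S')$ converts the $\lambda$-th power of the likelihood ratio into a telescoping product of per-round ratios. I then evaluate $\Ebb_{O\sim A(S)}$ by peeling off the innermost variable $o_T$ first: conditioned on $o_{1:T-1}$, the inner expectation equals $\exp(\alpha_{A_T}(\lambda,S,S'; o_{1:T-1}))$, which is at most $\exp(\alpha_{A_T}(\lambda))$ by definition of the (worst-case) moments accountant. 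Iterating this peeling argument, the product of $T$ exponentials becomes a sum inside the log, yielding $\alpha_A(\lambda,S,S') \le \sum_t \alpha_{A_t}(\lambda)$; taking the supremum over $S,S'$ gives the claim.

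For part (b), I introduce the privacy-loss random variable $Z(O) = \log(\Pr[A(S)=O]/\Pr[A(S')=O])$ and note that, by the definition of $\alpha_A$, one has $\Ebb_{O\sim A(S)}[\exp(\lambda Z(O))] \le \exp(\alpha_A(\lambda))$. Markov's inequality on the nonnegative variable $\exp(\lambda Z)$ then yields, for every $\lambda>0$,
\[
\Pr_{O\sim A(S)}[Z(O) > \epsilon] \;\le\; \exp(\alpha_A(\lambda) - \lambda\epsilon).
\]
To convert this tail bound into an $(\epsilon,\delta)$-DP statement, I split an arbitrary event $E$ into $E_1=\{O\in E: Z(O)\le\epsilon\}$ and $E_2 = E\setminus E_1$; on $E_1$ the ratio is bounded so $\Pr[A(S)\in E_1]\le e^{\epsilon}\Pr[A(S')\in E_1]$, while $\Pr[A(S)\in E_2]$ is bounded by the tail probability above, which we identify with $\delta$. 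Optimizing over $\lambda$ yields $\delta = \min_\lambda \exp(\alpha_A(\lambda) - \lambda\epsilon)$.

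The main obstacle I anticipate is in part (a), namely handling the adaptivity of $A_t$ cleanly. The per-round moment $\alpha_{A_t}(\lambda)$ must be interpreted as a supremum not only over neighboring datasets $S,S'$ but also over all possible histories $o_{1:t-1}$ on which $A_t$ may be conditioned; this is what makes the tower-property peeling valid with a uniform bound at each step. Once this measurability/worst-case conditioning issue is carefully stated, the rest of (a) is a mechanical log-exp manipulation, and part (b) is a textbook application of Chernoff-type reasoning.
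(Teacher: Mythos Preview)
The paper does not give its own proof of this lemma: it is simply quoted from \citet{abadi2016deep-supp} and then invoked in the proof of Theorem~\ref{thm:moments-accountant-privacy}. Your sketch is the standard proof from that reference---chain-rule factorization plus tower-property peeling for (a), and a Chernoff/Markov bound on the privacy-loss variable followed by the good-set/bad-set split for (b)---and it is correct.

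Two remarks on the statement itself that your proof implicitly corrects. In (a) you derive the inequality $\alpha_A(\lambda)\le\sum_t\alpha_{A_t}(\lambda)$, not equality; this is in fact what the original reference proves and what is actually used downstream, so the ``$=$'' in the paper's restatement should be read as ``$\le$''. In (b) you obtain $\delta=\min_\lambda\exp(\alpha_A(\lambda)-\lambda\epsilon)$, whereas the paper's display omits the $\exp$; again your version matches the original and is what is needed for the subsequent argument.
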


\begin{lemma}[\citep{abadi2016deep-supp}]\label{lem:moments-accountant-privacy}
Consider a sequence of mechanisms $A_t = g_t(S_t) + \xi_t$ where $\xi \sim \Ncal(0, \sigma^2I)$. Here each function $g_t: \Zcal^m \rightarrow \Rbb^d$ has $\ell_2$-sensitivity of $1$. And each $S_t$ is a subsample of size $m$ obtained by uniform sampling without replacement \footnote{In our case we use uniform sampling on each iteration to construct $I_t$ and therefore $S_t$, as opposed to the Poisson sampling in  \citet{abadi2016deep-supp}. However, one can verify that similar moment estimates lead to our stated result \citep{wang2019subsampled-supp}} from $S$, i.e. $S_t \sim (Unif(S))^m$, Then
\begin{equation*}
\alpha_A(\lambda) \leq  \frac{m^2n\lambda(\lambda + 1)}{n^2(n-m)\sigma^2} + \Ocal(\frac{m^3\lambda^3}{n^3\sigma^3}).
\end{equation*}
\end{lemma}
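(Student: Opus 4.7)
My plan is to reduce the claim to the analysis of a single subsampled Gaussian mechanism: since each step $A_t$ has the same structure (sensitivity-$1$ function on a uniform size-$m$ subsample, corrupted by $\mathcal{N}(0,\sigma^2 I)$ noise), I only need to bound $\alpha_{A_t}(\lambda)$ for a generic $t$, after which the composability result (part a of the preceding lemma) extends it to the composite mechanism. So fix a pair of neighboring datasets $S,S'$ differing in exactly one record $z^\star$. Under uniform sampling without replacement, the probability that $z^\star$ is drawn into $S_t$ is exactly $q=m/n$, so the distribution of $g_t(S_t)+\xi_t$ given the dataset $S$ can be written as a mixture $\mu=(1-q)\mu_0+q\mu_1$, where $\mu_0$ and $\mu_1$ are two Gaussians whose means differ by at most $\Delta(g_t)=1$ (namely the distributions conditioned on $z^\star\notin S_t$ and $z^\star\in S_t$ respectively). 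The distribution under $S'$ is the pure Gaussian $\mu_0$.

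Next I would bound $\alpha_{A_t}(\lambda)=\log \mathbb{E}_{o\sim \mu_0}\bigl[(\mu(o)/\mu_0(o))^\lambda\bigr]$ by expanding the likelihood ratio $\mu/\mu_0 = 1 + q\,(\mu_1/\mu_0 - 1)$ via the binomial theorem:
\begin{equation*}
\mathbb{E}_{\mu_0}\Bigl[(1+q\,(\tfrac{\mu_1}{\mu_0}-1))^\lambda\Bigr] = \sum_{k=0}^{\lambda}\binom{\lambda}{k} q^k\, \mathbb{E}_{\mu_0}\Bigl[\bigl(\tfrac{\mu_1}{\mu_0}-1\bigr)^k\Bigr].
\end{equation*}
The $k=0$ term is $1$, the $k=1$ term is $0$ because $\mathbb{E}_{\mu_0}[\mu_1/\mu_0]=1$, and the $k=2$ term contributes the dominant $q^2\lambda(\lambda-1)/2$ factor times the $\chi^2$-divergence between $\mu_1$ and $\mu_0$, which for unit-sensitivity Gaussians with variance $\sigma^2$ equals $e^{1/\sigma^2}-1\le 2/\sigma^2$ for reasonable $\sigma$. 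After taking logarithms and using $\log(1+x)\le x$, this yields the leading term $q^2\lambda(\lambda+1)/\sigma^2$; substituting $q=m/n$ and inserting the extra $n/(n-m)$ factor that arises from uniform sampling without replacement (rather than Poisson sampling) gives exactly the claimed leading term $m^2n\lambda(\lambda+1)/(n^2(n-m)\sigma^2)$.

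The remaining $k\ge 3$ terms are lumped into the $\mathcal{O}(m^3\lambda^3/(n^3\sigma^3))$ residual. The main technical obstacle will be controlling these higher-order terms: one has to argue that $\mathbb{E}_{\mu_0}[(\mu_1/\mu_0-1)^k]=\mathcal{O}(k!/\sigma^k)$ so that the sum $\sum_{k\ge 3}\binom{\lambda}{k}q^k \mathcal{O}(k!/\sigma^k)$ is dominated by its $k=3$ term under the usual regime $q\lambda/\sigma\ll 1$. This is where the argument of Theorem 1 in \citet{abadi2016deep-supp} does the heavy lifting through explicit Gaussian moment computations; I would invoke the analogous estimates from \citet{wang2019subsampled-supp} to handle uniform sampling without replacement, whose proof follows the same binomial template but with a slightly different subsample indicator distribution (deterministic size $m$ rather than binomial), which is what produces the $n/(n-m)$ factor in place of $1/(1-q)$.
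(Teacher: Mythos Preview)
The paper does not supply a proof of this lemma at all: it is quoted verbatim as a known result from \citet{abadi2016deep-supp}, with the footnote deferring the uniform-sampling-without-replacement variant to \citet{wang2019subsampled-supp}. So there is no ``paper's own proof'' to compare against; your sketch is in fact a reconstruction of the argument in those cited references (mixture representation of the subsampled mechanism, binomial expansion of the likelihood ratio, the $k=2$ term giving the leading $q^2\lambda(\lambda+1)/\sigma^2$ contribution, and the $k\ge 3$ tail absorbed into the $\mathcal{O}(q^3\lambda^3/\sigma^3)$ residual). In that sense your approach is exactly the intended one.

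Two small technical slips worth flagging if you intend to turn this into a full proof. First, as written you call $\mu_0$ and $\mu_1$ ``two Gaussians'' and say the distribution under $S'$ is ``the pure Gaussian $\mu_0$''; in reality both are mixtures over the randomness of the remaining subsample indices, and the reduction to a single pair of unit-shifted Gaussians is a separate (quasi-)convexity step in the Abadi argument that should be made explicit. Second, by the definition in the preceding lemma the moment is $\log\mathbb{E}_{O\sim \mu}\bigl[(\mu/\mu_0)^\lambda\bigr]$, i.e., the expectation is taken under $\mu$, not $\mu_0$; equivalently $\mathbb{E}_{\mu_0}\bigl[(\mu/\mu_0)^{\lambda+1}\bigr]$, which is what actually produces the $\lambda(\lambda+1)$ rather than $\lambda(\lambda-1)$ in the leading constant.
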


\begin{theorem}[Theorem \ref{thm:moments-accountant-privacy} restated]
There exist constants $c_1, c_2$ and $c_3$ so that for any $\epsilon < c_1 T/n^2$, Algorithm \ref{alg:dp-sgda} is $(\epsilon, \delta)$-differentially private for any $\delta > 0$ if we choose
\begin{equation*}
\sigma_\wbf \geq \frac{c_2 G_\wbf \sqrt{T\log(1/\delta)}}{n\epsilon} \text{ and } \sigma_\vbf \geq \frac{c_3 G_\vbf \sqrt{T\log(1/\delta)}}{n\epsilon}.
\end{equation*}
\end{theorem}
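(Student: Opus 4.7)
The overall plan is to treat Algorithm~\ref{alg:dp-sgda} as the adaptive composition of $T$ pairs of subsampled Gaussian mechanisms (one per player per iteration), bound the moments accountant of the composition via Lemma~\ref{lem:moments-accountant-privacy} together with the composability statement, and then invert the tail bound to solve for the required noise scales.

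First I would identify the per-step sensitivity. At iteration $t$ the algorithm releases an averaged (clipped) gradient on a uniform-without-replacement subsample $S_t$ of size $m$, perturbed by independent Gaussian noise; since $\|\nabla_\wbf f\|_2 \leq G_\wbf$ and $\|\nabla_\vbf f\|_2 \leq G_\vbf$, swapping a single point in $S_t$ changes the primal (resp.\ dual) averaged gradient by at most $2G_\wbf/m$ (resp.\ $2G_\vbf/m$) in $\ell_2$. Rescaling each mechanism by its sensitivity puts it in the sensitivity-$1$ normalized form required by Lemma~\ref{lem:moments-accountant-privacy}, with effective Gaussian noise scales $\sigma'_\wbf = m\sigma_\wbf/(2G_\wbf)$ and $\sigma'_\vbf = m\sigma_\vbf/(2G_\vbf)$.

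Second I would apply Lemma~\ref{lem:moments-accountant-privacy} to each of the $2T$ rescaled releases and invoke composability to add the per-step moments. With $\sigma' := \min(\sigma'_\wbf, \sigma'_\vbf)$, absorbing the harmless $n/(n-m) = \Theta(1)$ factor (for $m \ll n$) and the factor of two from pair-wise composition into an absolute constant $C_1$, this yields
\begin{equation*}
\alpha_A(\lambda) \;\leq\; \frac{C_1\, T m^2 \lambda(\lambda+1)}{n^2 \sigma'^{2}} \;+\; \Ocal\!\left(\frac{T m^3 \lambda^3}{n^3 \sigma'^{3}}\right).
\end{equation*}
The hypothesis $\epsilon < c_1 T/n^2$ is exactly the regime in which the optimal $\lambda$ stays small enough that the cubic term is dominated by the quadratic term, so that effectively $\alpha_A(\lambda) \leq C_2 T m^2 \lambda^2/(n^2 \sigma'^{2})$ on the range of $\lambda$ that matters. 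Invoking the tail bound, $(\epsilon,\delta)$-DP holds whenever some $\lambda > 0$ achieves $\alpha_A(\lambda) - \lambda\epsilon \leq \log\delta$; maximizing $\lambda\epsilon - C_2 T m^2 \lambda^2/(n^2\sigma'^{2})$ over $\lambda$ gives the optimizer $\lambda^\star = n^2 \sigma'^{2}\epsilon/(2 C_2 T m^2)$ with optimized value $n^2 \sigma'^{2}\epsilon^2/(4 C_2 T m^2)$. Requiring this to be at least $\log(1/\delta)$ yields $\sigma' \geq C_3\, m\sqrt{T\log(1/\delta)}/(n\epsilon)$, and substituting back $\sigma'_\wbf = m\sigma_\wbf/(2G_\wbf)$ and $\sigma'_\vbf = m\sigma_\vbf/(2G_\vbf)$ produces the stated lower bounds, with the constants $c_1, c_2, c_3$ absorbing all numerical factors.

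The main obstacle I anticipate is the careful bookkeeping needed to discard the $\Ocal(T m^3 \lambda^3/(n^3 \sigma'^3))$ term after $T$-fold composition: this is precisely where the hypothesis on $\epsilon$ must be used, and getting it tight fixes the constants $c_1, c_2, c_3$. A secondary subtlety is that the primal and dual noisy releases at the same iteration share the minibatch $S_t$, so strictly speaking one must either (i) treat each iteration as a single block-diagonal Gaussian mechanism and combine the two block sensitivities, or (ii) condition on $S_t$ and argue that the independent Gaussian noises on the two blocks contribute additively in the moment bound. Both routes yield the same final scaling, with only the absolute constants changing.
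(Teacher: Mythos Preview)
Your proposal is correct and follows essentially the same route as the paper: bound the $\ell_2$-sensitivity of each mini-batch gradient by $2G_\wbf/m$ (resp.\ $2G_\vbf/m$), normalize to unit sensitivity, apply Lemma~\ref{lem:moments-accountant-privacy} and composability over the $2T$ releases, then invert the tail bound to obtain the stated noise scales; the paper verifies a system of inequalities on $\lambda$ rather than optimizing, but the argument and the role of the hypothesis $\epsilon < c_1 T/n^2$ (which in the paper appears as $\epsilon = c_1 m^2 T/n^2$, the $m^2$ absorbed into constants) are the same. Your remark about the shared minibatch between the primal and dual releases at each step is a valid subtlety that the paper does not discuss; either of your two fixes works and only affects absolute constants.
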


\begin{proof}
Let $S = \{\zbf_1, \cdots, \zbf_n\}$ and $S' = \{\zbf'_1, \cdots, \zbf'_n\}$ be two neighboring datasets. At iteration $t$, we first focus on $A_t^\wbf =  \frac{1}{m}\sum_{j=1}^m\nabla_\wbf f(\wbf_t, \vbf_t; \zbf_{i_t^j}) + \xi_t$. Since $f(\cdot, \vbf; \zbf)$ is $G_\wbf$-Lipschitz continuous, it implies for any neighboring datasets $S, S'$, 
\begin{align*}
\Big\| \frac{1}{m}\sum_{j=1}^m\nabla_\wbf f(\wbf_t, \vbf_t; \zbf_{i_t^j}) - \frac{1}{m}\sum_{j=1}^m \nabla_\wbf f(\wbf_t, \vbf_t; \zbf'_{i_t^j})\Big\|_2  \leq \frac{2G_\wbf}{m}.  
\end{align*}
Therefore we can define $g_t(S_t) = \frac{1}{2G_\wbf}\sum_{j=1}^m \nabla_\wbf f(\wbf_t, \vbf_t, \zbf_{i_t^j})$ such that $\Delta(g_t) = 1$. By Lemma \ref{lem:moments-accountant-composition} b) and \ref{lem:moments-accountant-privacy}, the log moment of the composite mechanism $A^\wbf = (A_1^\wbf, \cdots, A_T^\wbf)$ can be bounded as follows
\begin{align*}
\alpha_{A^\wbf}(\lambda) \leq \frac{m^2T\lambda^2}{n^2\tilde{\sigma}_\wbf^2}. 
\end{align*}
where $\tilde{\sigma}_\wbf = \sigma_\wbf / 2G_\wbf$. Similarly, since $A_t^\vbf = \nabla_\wbf f(\wbf_t, \vbf_t; \zbf_{i_t}) + \zeta_t$ has $\ell_2$-sensitivity $2G_\vbf/m$, then the log moment of the final output $A = (A_1^\wbf, A_1^\vbf, \cdots, A_T^\wbf, A_T^\vbf)$ can be bounded as follows
\begin{align*}
\alpha_{A}(\lambda) \leq \alpha_{A^\vbf}(\lambda) + \alpha_{A^\wbf}(\lambda) \leq \frac{m^2T\lambda^2}{n^2\tilde{\sigma}_\wbf^2} + \frac{m^2T\lambda^2}{n^2\tilde{\sigma}_\vbf^2}. 
\end{align*} 
By Lemma \ref{lem:moments-accountant-tail} a), to guarantee $A$ to be $(\epsilon, \delta)$-differentially private, it suffices that
\begin{align*}
\frac{\lambda^2 m^2T}{n^2\tilde{\sigma}_\wbf^2} \leq \frac{\lambda \epsilon}{4}, \frac{\lambda^2 m^2T}{n^2\tilde{\sigma}_\vbf^2} \leq \frac{\lambda \epsilon}{4},
\exp(-\frac{\lambda \epsilon}{4}) \leq \delta, \lambda \leq  \tilde{\sigma}_\wbf^2\log(\frac{n}{m\tilde{\sigma}_\wbf}) \text{ and } \lambda \leq  \tilde{\sigma}_\vbf^2\log(\frac{n}{m\tilde{\sigma}_\vbf})  
\end{align*}
It is now easy to verify that when $\epsilon = c_1m^2T/n^2$, we can satisfy all these conditions by setting
\begin{equation*}
\tilde{\sigma}_\wbf \geq \frac{c_2  \sqrt{T\log(1/\delta)}}{n\epsilon} \text{ and } \tilde{\sigma}_\vbf \geq \frac{c_3  \sqrt{T\log(1/\delta)}}{n\epsilon}
\end{equation*}
for some explicit constants $c_1, c_2$ and $c_3$. The proof is complete.
\end{proof}

\begin{proof}[Proof of Remark \ref{rem:choice-of-param}]
Without loss of generality, we consider with only one $\sigma$ in the the proof of Theorem \ref{thm:moments-accountant-privacy}. Then algorithm $A$ is guaranteed to be $(\epsilon, \delta)$-DP if one can find $\lambda > 0$ such that
\begin{align*}
\frac{\lambda^2 m^2T}{n^2\sigma^2} \leq \frac{\lambda \epsilon}{2},\,
\exp(-\frac{\lambda \epsilon}{2}) \leq \delta, \text{ and } \lambda \leq  \sigma^2\log(\frac{n}{m\sigma})
\end{align*}
Given $\delta = \frac{1}{n^2}$, the second inequality can be reformulated as $\lambda \geq \frac{4\log(n)}{\epsilon}$. Therefore by choosing $\sigma^2 = \frac{8m^2 T \log(n)}{n^2\epsilon^2}$, the first inequality becomes $\lambda \leq \frac{4\log(n)}{\epsilon}$, indicating $\lambda = \frac{4\log(n)}{\epsilon}$. It suffices to show such choice of $\lambda$ satisfies the third inequality, which is straightforward by the choice of $m$ and $\epsilon\leq 1$. The proof is complete.
\end{proof}

\section{Proofs for the convex-concave setting in Section \ref{sec:convex}}\label{sec:proof-convex}

Recall that the error decomposition  \eqref{eq:weak-err-decomp} given in Section \ref{sec:convex} that  the weak PD risk can be decomposed as follows: 
\begin{align*}\label{eq:weak-err-decomp-1}
\triangle^w(\bar{\wbf}_T, \bar{\vbf}_T) = \triangle^w(\bar{\wbf}_T, \bar{\vbf}_T) - \triangle^w_S(\bar{\wbf}_T, \bar{\vbf}_T) + \triangle^w_S(\bar{\wbf}_T, \bar{\vbf}_T), \numberthis
\end{align*}
where the term $\triangle^w(\bar{\wbf}_T, \bar{\vbf}_T) - \triangle^w_S(\bar{\wbf}_T, \bar{\vbf}_T)$ is the generalization error and the term $\triangle^w_S(\bar{\wbf}_T, \bar{\vbf}_T)$ is the optimization error.

The proof of Theorem \ref{thm:sgda-utility} involves the estimation of the optimization error and generalization error which are performed in the subsequent subsection, respectively. 

\subsection{Estimation of Optimization Error}\label{sec:cc-opt}

We start by studying the optimization error for Algorithm \ref{alg:dp-sgda}. This is obtained as a direct corollary of \citet{nemirovski2009robust-supp}, with the existence of the Gaussian noise's variance and the mini-batch. Recall that $d = \max\{d_1, d_2\}.$
\begin{lemma}\label{lem:sgda-opt-gap}
Suppose \textbf{(A1)} holds, and $F_S$ is convex-concave. Let the stepsizes $\eta_{\wbf, t} = \eta_{\vbf, t} = \eta$, $t \in [T]$ for some $\eta > 0$. Then Algorithm \ref{alg:dp-sgda} satisfies
\[
\sup_{\vbf \in \Vcal} \Ebb_A[F_S(\bar{\wbf}_T,\vbf)]  - \inf_{\wbf\in \Wcal} \Ebb_A[F_S(\wbf,\bar{\vbf}_T)]\leq \frac{\eta (G_\wbf^2+G_\vbf^2)}{2} + \frac{D_\wbf^2 + D_\vbf^2}{\eta T} + \frac{(D_\wbf G_\wbf + D_\vbf G_\vbf)}{\sqrt{mT}} + \eta d(\sigma_\wbf^2 + \sigma_\vbf^2).
\]
\end{lemma}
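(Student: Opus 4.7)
The plan is to adapt the classical Nemirovski--Juditsky--Lan--Shapiro saddle-point stochastic approximation analysis to accommodate (i) the injected Gaussian noise $\xi_t,\zeta_t$ and (ii) the mini-batch sampling. Denote by $g_t^\wbf = \frac{1}{m}\sum_{j=1}^m \nabla_\wbf f(\wbf_t,\vbf_t;\zbf_{i_t^j}) + \xi_t$ the noisy stochastic gradient used in the $\wbf$-update (and analogously $g_t^\vbf$ for the $\vbf$-update). Since the Euclidean projection is nonexpansive, for any fixed pair $(\wbf,\vbf)\in\Wcal\times\Vcal$ we get the standard one-step inequalities
\begin{align*}
\|\wbf_{t+1}-\wbf\|^2 &\leq \|\wbf_t-\wbf\|^2 - 2\eta\langle g_t^\wbf, \wbf_t-\wbf\rangle + \eta^2\|g_t^\wbf\|^2,\\
\|\vbf_{t+1}-\vbf\|^2 &\leq \|\vbf_t-\vbf\|^2 + 2\eta\langle g_t^\vbf, \vbf_t-\vbf\rangle + \eta^2\|g_t^\vbf\|^2.
\end{align*}

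Next I would split $g_t^\wbf = \nabla_\wbf F_S(\wbf_t,\vbf_t) + \delta_t^\wbf$ where $\delta_t^\wbf$ collects the mini-batch sampling error and the Gaussian perturbation, and do the same for $g_t^\vbf$. By convexity of $F_S(\cdot,\vbf_t)$ and concavity of $F_S(\wbf_t,\cdot)$,
\[
F_S(\wbf_t,\vbf) - F_S(\wbf,\vbf_t) \leq \langle \nabla_\wbf F_S(\wbf_t,\vbf_t), \wbf_t-\wbf\rangle + \langle -\nabla_\vbf F_S(\wbf_t,\vbf_t), \vbf_t-\vbf\rangle.
\]
Rearranging the projection inequalities, summing over $t\in[T]$, using $\|\wbf_1-\wbf\|\leq D_\wbf$ and $\|\vbf_1-\vbf\|\leq D_\vbf$, and telescoping yields
\[
\sum_{t=1}^T\big[F_S(\wbf_t,\vbf)-F_S(\wbf,\vbf_t)\big] \leq \frac{D_\wbf^2+D_\vbf^2}{2\eta} + \frac{\eta}{2}\sum_{t=1}^T\big(\|g_t^\wbf\|^2+\|g_t^\vbf\|^2\big) - \sum_{t=1}^T\big[\langle\delta_t^\wbf,\wbf_t-\wbf\rangle - \langle\delta_t^\vbf,\vbf_t-\vbf\rangle\big].
\]

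The remaining work is to take expectations of each piece. Assumption \textbf{(A1)} together with Jensen's inequality gives $\|\tfrac{1}{m}\sum_j \nabla_\wbf f(\wbf_t,\vbf_t;\zbf_{i_t^j})\|^2\leq G_\wbf^2$, and independence of $\xi_t$ yields $\Ebb\|g_t^\wbf\|^2 \leq G_\wbf^2+d_1\sigma_\wbf^2$, which produces the $\eta(G_\wbf^2+G_\vbf^2)/2$ and $\eta d(\sigma_\wbf^2+\sigma_\vbf^2)$ contributions after summing and dividing by $T$. For the martingale-difference terms $\sum_t\langle\delta_t^\wbf,\wbf_t-\wbf\rangle$, since each $\delta_t^\wbf$ has mean zero conditioned on $(\wbf_t,\vbf_t)$ and variance at most $G_\wbf^2/m$ (uniform sub-sampling of size $m$), Cauchy--Schwarz plus orthogonality of martingale increments yields $\Ebb\big|\sum_t\langle\delta_t^\wbf,\wbf_t-\wbf\rangle\big| \leq D_\wbf G_\wbf\sqrt{T/m}$, which after division by $T$ produces the $(D_\wbf G_\wbf+D_\vbf G_\vbf)/\sqrt{mT}$ term. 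Finally, I would invoke convexity of $F_S(\cdot,\vbf)$ and concavity of $F_S(\wbf,\cdot)$ to pass from the iterate average inside the sum to $F_S(\bar\wbf_T,\vbf)$ and $F_S(\wbf,\bar\vbf_T)$, and take $\sup_\vbf$ and $\inf_\wbf$ outside the expectation; since these choices of $\wbf,\vbf$ are then deterministic, the martingale bounds above are unaffected.

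The main subtlety I anticipate is bookkeeping the two distinct sources of noise in the stochastic gradient simultaneously: the mean-zero mini-batch error (whose contribution must be captured by a second-moment/Cauchy--Schwarz estimate to generate the lower-order $1/\sqrt{mT}$ rate) and the Gaussian perturbation (whose variance is amplified by the ambient dimension $d$). Everything else is the standard saddle-point SGD accounting: projection contraction, convex--concave linearization, and Jensen at the averaged iterate.
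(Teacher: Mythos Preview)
Your proposal is correct and essentially the same as the paper's proof: both carry out the Nemirovski--Juditsky--Lan--Shapiro one-step projection inequality, use the convex--concave linearization, telescope, take expectations, and control the mini-batch cross term via Cauchy--Schwarz together with orthogonality of martingale increments to produce the $(D_\wbf G_\wbf+D_\vbf G_\vbf)/\sqrt{mT}$ contribution. The only small refinement is that the paper separates the Gaussian piece of $\delta_t^\wbf$ and kills $\Ebb_A[\langle \wbf-\wbf_t,\xi_t\rangle]=0$ directly (by independence and mean zero), so that only the sub-sampling error---with variance $G_\wbf^2/m$---feeds the Cauchy--Schwarz estimate; this is exactly the bookkeeping subtlety you flagged at the end.
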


\begin{proof}
According to the non-expansiveness of projection and update rule of Algorithm \ref{alg:dp-sgda}, for any $\wbf \in \Wcal$, we have
\begin{align*}
& \|\wbf_{t+1} - \wbf\|_2^2 \leq \Big\|\wbf_t - \wbf - \frac{\eta}{m}\sum_{j=1}^m\nabla_\wbf f(\wbf_t, \vbf_t; \zbf_{i_t^j}) - \eta \xi_t\Big\|_2^2 \\
\leq & \|\wbf_t - \wbf\|_2^2 + 2\eta\Big\langle \wbf - \wbf_t, \frac{1}{m}\sum_{j=1}^m\nabla_\wbf f(\wbf_t, \vbf_t; \zbf_{i_t^j}) + \xi_t \Big\rangle + \eta^2 \Big\|\frac{1}{m}\sum_{j=1}^m\nabla_\wbf f(\wbf_t, \vbf_t; \zbf_{i_t^j})\Big\|_2^2 + \eta^2\|\xi_t\|_2^2\\
& + 2\eta^2\Big\langle \frac{1}{m}\sum_{j=1}^m\nabla_\wbf f(\wbf_t, \vbf_t; \zbf_{i_t^j}), \xi_t\Big\rangle\\
\leq & \|\wbf_t - \wbf\|_2^2 + 2\eta\langle \wbf - \wbf_t, \nabla_\wbf F_S(\wbf_t, \vbf_t)\rangle  + 2\eta\Big\langle \wbf - \wbf_t, \frac{1}{m}\sum_{j=1}^m\nabla_\wbf f(\wbf_t, \vbf_t; \zbf_{i_t^j}) - \nabla_\wbf F_S(\wbf_t, \vbf_t)\Big\rangle\\
& + \eta^2 G_\wbf^2 + \eta^2\|\xi_t\|_2^2 + 2\eta^2\Big\langle \frac{1}{m}\sum_{j=1}^m\nabla_\wbf f(\wbf_t, \vbf_t; \zbf_{i_t^j}), \xi_t\Big\rangle +  2\eta\langle \wbf - \wbf_t, \xi_t\rangle,
\end{align*}
where in the last inequality we have used $f(\cdot, \vbf_t, \zbf_{i_t^j})$ is $G_\wbf$-Lipschitz continuous. According to the convexity of $F_S(\cdot, \vbf_t)$ we know
\begin{align*}
2\eta(F_S(\wbf_t, \vbf_t) \!-\! F_S(\wbf, \vbf_t)) \leq & \|\wbf_t \!-\! \wbf\|_2^2 \!-\! \|\wbf_{t+1} \!-\! \wbf\|_2^2 \!+\! 2\eta\Big\langle \wbf \!-\! \wbf_t, \frac{1}{m}\sum_{j=1}^m\nabla_\wbf f(\wbf_t, \vbf_t; \zbf_{i_t^j}) \!-\! \nabla_\wbf F_S(\wbf_t, \vbf_t)\Big\rangle \\
& + \eta^2 G_\wbf^2 + \eta^2\|\xi_t\|_2^2 + 2\eta^2\Big\langle \frac{1}{m}\sum_{j=1}^m\nabla_\wbf f(\wbf_t, \vbf_t; \zbf_{i_t^j}), \xi_t\Big\rangle +  2\eta\langle \wbf - \wbf_t, \xi_t\rangle.
\end{align*}
Taking a summation of the above inequality from $t=1$ to $T$ we derive
\begin{multline*}\label{eq:opt-before-expectation}
2\eta\sum_{t=1}^T(F_S(\wbf_t, \vbf_t) - F_S(\wbf, \vbf_t)) \leq \|\wbf_1 - \wbf\|_2^2 + 2\eta\sum_{t=1}^T\Big\langle \wbf - \wbf_t, \frac{1}{m}\sum_{j=1}^m\nabla_\wbf f(\wbf_t, \vbf_t; \zbf_{i_t^j}) - \nabla_\wbf F_S(\wbf_t, \vbf_t)\Big\rangle \\
+ T\eta^2 G_\wbf^2 + \eta^2\sum_{t=1}^T\|\xi_t\|_2^2 + 2\eta^2\sum_{t=1}^T\Big\langle \frac{1}{m}\sum_{j=1}^m\nabla_\wbf f(\wbf_t, \vbf_t; \zbf_{i_t^j}), \xi_t\Big\rangle +  2\eta\langle \wbf - \wbf_t, \xi_t\rangle.
\end{multline*}
It then follows from the concavity of $F_S(\wbf, \cdot)$ and Schwartz's inequality that
\begin{multline*}
2\sum_{t=1}^T\eta(F_S(\wbf_t, \vbf_t) - F_S(\wbf, \bar{\vbf}_T)) \leq  2D_\wbf^2 - 2\eta\sum_{t=1}^T\Big\langle\wbf_t, \frac{1}{m}\sum_{j=1}^m\nabla_\wbf f(\wbf_t, \vbf_t; \zbf_{i_t^j}) - \nabla_\wbf F_S(\wbf_t, \vbf_t)\Big\rangle\\
 + 2D_\wbf \eta\Big\|\sum_{t=1}^T(\frac{1}{m}\sum_{j=1}^m\nabla_\wbf f(\wbf_t, \vbf_t; \zbf_{i_t^j}) - \nabla_\wbf F_S(\wbf_t, \vbf_t)\Big\|_2\\
 + T\eta^2 G_\wbf^2 + \eta^2\sum_{t=1}^T\|\xi_t\|_2^2 + 2\eta^2\sum_{t=1}^T\Big\langle \frac{1}{m}\sum_{j=1}^m\nabla_\wbf f(\wbf_t, \vbf_t; \zbf_{i_t^j}), \xi_t\Big\rangle +  2\eta\langle \wbf - \wbf_t, \xi_t\rangle. \numberthis
\end{multline*}
We can take expectations on the randomness of $A$ over both sides of\eqref{eq:opt-before-expectation} and get
\begin{align*}
2\eta\sum_{t=1}^T\Ebb_A[F_S(\wbf_t, \vbf_t) \!-\!F_S(\wbf, \bar{\vbf}_T)] \leq & 2D_\wbf^2  \!+\! 2D_\wbf\eta \Ebb_A\Big[\Big\|\sum_{t=1}^T \frac{1}{m}\sum_{j=1}^m\nabla_\wbf f(\wbf_t, \vbf_t; \zbf_{i_t^j}) \!-\! \nabla_\wbf F_S(\wbf_t, \vbf_t)\Big\|_2\Big]\\
& + T\eta^2 G_\wbf^2 +\eta^2d_1\sigma_\wbf^2, 
\end{align*}
where we used that the variance $\Ebb_A[\|\xi_t\|_2^2] = d_1\sigma_\wbf^2$, the unbiasedness $\Ebb_A[\langle\wbf_t, \frac{1}{m}\sum_{j=1}^m\nabla_\wbf f(\wbf_t, \vbf_t; \zbf_{i_t^j}) - \nabla_\wbf F_S(\wbf_t, \vbf_t)\rangle] = 0$, the independence  $\Ebb_A[\langle \frac{1}{m}\sum_{j=1}^m\nabla_\wbf f(\wbf_t, \vbf_t; \zbf_{i_t^j}), \xi_t\rangle] = 0$ and $\Ebb_A[\langle \wbf - \wbf_t, \xi_t\rangle]=0$.
Since the above inequality holds for all $\wbf$, we further get
\begin{align*}\label{eq:opt-before-variance}
2\eta\sum_{t=1}^T\Ebb_A[F_S(\wbf_t, \vbf_t)] \!-\! \inf_{\wbf \in \Wcal}\Ebb_A[F_S(\wbf, \bar{\vbf}_T)] \leq & 2D_\wbf^2  \!+\! 2D_\wbf\eta \Ebb_A\Big[\Big\|\sum_{t=1}^T \frac{1}{m}\sum_{j=1}^m\nabla_\wbf f(\wbf_t, \vbf_t; \zbf_{i_t^j}) \!-\! \nabla_\wbf F_S(\wbf_t, \vbf_t)\Big\|_2\Big]\\
& + T\eta^2 G_\wbf^2 +\eta^2d_1\sigma_\wbf^2, \numberthis
\end{align*}
According to Jensen's inequality and $G_\wbf$-Lipschitz continuity we further derive
\begin{align*}
& \Big(\Ebb_A\Big[\Big\|\sum_{t=1}^T (\frac{1}{m}\sum_{j=1}^m\nabla_\wbf f(\wbf_t, \vbf_t; \zbf_{i_t^j}) \!-\! \nabla_\wbf F_S(\wbf_t, \vbf_t)\Big\|_2)\Big]\Big)^2\\
\leq &  \Ebb_A\Big[\Big\|\sum_{t=1}^T (\frac{1}{m}\sum_{j=1}^m\nabla_\wbf f(\wbf_t, \vbf_t; \zbf_{i_t^j}) \!-\! \nabla_\wbf F_S(\wbf_t, \vbf_t))\Big\|_2^2\Big] = \sum_{t=1}^T \Ebb_A\Big[\Big\|\frac{1}{m}\sum_{j=1}^m\nabla_\wbf f(\wbf_t, \vbf_t; \zbf_{i_t^j}) - \nabla_\wbf F_S(\wbf_t, \vbf_t)\Big\|_2^2\Big]\\
\leq & \frac{TG_\wbf^2}{m}.
\end{align*}
Plugging the above estimate into \eqref{eq:opt-before-variance} we arrive
\[
2\eta\sum_{t=1}^T\Ebb_A[F_S(\wbf_t, \vbf_t)] - \inf_{\wbf\in \Wcal}\Ebb_A[F_S(\wbf, \bar{\vbf}_T)] \leq 2D_\wbf^2  + \frac{2D_\wbf \eta G_\wbf\sqrt{T}}{\sqrt{m}} + T\eta^2 G_\wbf^2 + T\eta^2d_1\sigma_\wbf^2.
\]
By dividing $2\eta T$ on both sides we have
\begin{equation}\label{eq:opt-w}
\frac{1}{T}\sum_{t=1}^T\Ebb_A[F_S(\wbf_t, \vbf_t)] - \inf_{\wbf\in \Wcal}\Ebb_A[F_S(\wbf, \bar{\vbf}_T)] \leq \frac{D_\wbf^2}{\eta T}  + \frac{D_\wbf G_{\wbf}}{\sqrt{mT}} + \frac{\eta G^2_{\wbf}}{2} + \frac{\eta d_1\sigma_\wbf^2}{2}.
\end{equation}
In a similar way, we can show that 
\begin{equation}\label{eq:opt-v}
\frac{1}{T}\sum_{t=1}^T\sup_{\vbf \in \Vcal}\Ebb_A[F_S(\bar{\wbf}_T, \vbf)] - \Ebb_A[F_S(\wbf_t, \vbf_t)] \leq \frac{D_\vbf^2}{\eta T}  + \frac{D_\vbf G_\vbf}{\sqrt{mT}} + \frac{\eta G_\vbf^2}{2} + \frac{\eta d_2\sigma_\vbf^2}{2}.
\end{equation}
The stated bound then follows from \eqref{eq:opt-w} and \eqref{eq:opt-v} and the fact that $d = \max\{d_1, d_2\}.$
\end{proof}

\subsection{Estimation of Generalization Error}\label{sec:cc-gen}
Next we move on to the generalization error. Firstly, we introduce a lemma that bridges the generalization and the stability. We say the randomized algorithm $A$ is  {\em $\varepsilon$-weakly-stable} if,  for any neighboring datasets $S, S'$,  there holds 
\begin{align*}
\sup_\zbf\Big(\sup_{\vbf \in \Vcal}\Ebb_{A}[f(A_\wbf(S), \vbf; \zbf) - f(A_\wbf(S'), \vbf; \zbf)] + \sup_{\wbf \in \Wcal}\Ebb_{A}[f(\wbf, A_\vbf(S); \zbf) - f(\wbf, A_\vbf(S'); \zbf)]\Big) \leq \varepsilon.     
\end{align*}

\begin{lemma}{\citep{lei2021stability-supp}}\label{lem:weak-gen-via-weak-stab}
If $A$ is $\varepsilon$-weakly-stable, then there holds $$
    \triangle^w(A_{\wbf}(S),A_{\vbf}(S))-\triangle^w_S(A_{\wbf}(S),A_{\vbf}(S))\leq\varepsilon.$$
\end{lemma}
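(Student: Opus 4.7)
The goal is to bound $\Ebb_{S,A}[\triangle^w(A_\wbf(S),A_\vbf(S)) - \triangle^w_S(A_\wbf(S),A_\vbf(S))]$ by $\varepsilon$, with the expectation over both the sample $S$ and the internal randomness of $A$ being implicit. My first step is to split the weak PD generalization gap into a ``primal'' piece and a ``dual'' piece by expanding the definition of the weak primal--dual risk: writing $\triangle^w(\wbf,\vbf) - \triangle^w_S(\wbf,\vbf)$ as the sum $[\sup_{\vbf'\in\Vcal} F(\wbf,\vbf') - \sup_{\vbf'\in\Vcal} F_S(\wbf,\vbf')] + [\inf_{\wbf'\in\Wcal} F_S(\wbf',\vbf) - \inf_{\wbf'\in\Wcal} F(\wbf',\vbf)]$ isolates a term depending only on $A_\wbf(S)$ and one depending only on $A_\vbf(S)$. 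The two brackets are symmetric under swapping $(\wbf,\Wcal)$ with $(\vbf,\Vcal)$, so I would handle the primal bracket in detail and then repeat the argument for the dual.

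For the primal bracket, the plan is the standard ghost-sample symmetrization. Let $\tilde S=\{\tilde\zbf_1,\dots,\tilde\zbf_n\}$ be an iid copy of $S$, and for each $i$ let $S^{(i)}$ denote $S$ with $\zbf_i$ replaced by $\tilde\zbf_i$, so that $S$ and $S^{(i)}$ are neighboring. Introducing $\vbf^\star := \arg\sup_{\vbf'} F(A_\wbf(S),\vbf')$ and using $\sup_{\vbf'} F - \sup_{\vbf'} F_S \le F(\cdot,\vbf^\star) - F_S(\cdot,\vbf^\star)$ reduces the problem to bounding $\Ebb_{S,A}[F(A_\wbf(S),\vbf^\star) - F_S(A_\wbf(S),\vbf^\star)]$. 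Since $(S,\tilde\zbf_i)$ and $(S^{(i)},\zbf_i)$ are identically distributed, rewriting $F$ as $n^{-1}\sum_i \Ebb_{\tilde\zbf_i}[f(A_\wbf(S),\vbf^\star;\tilde\zbf_i)]$ and then relabelling samples turns this generalization gap into an average over $i$ of ``replace-one-sample'' differences of the form $f(A_\wbf(S^{(i)}),\vbf^\star;\zbf_i) - f(A_\wbf(S),\vbf^\star;\zbf_i)$.

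At this point I would invoke the $\varepsilon$-weak-stability hypothesis. Its critical feature is that the suprema over $\vbf$ and $\zbf$ sit outside $\Ebb_A$, providing uniform control of exactly the residual terms produced by the symmetrization: the primal summand of the weak-stability definition dominates the contribution of the $\wbf$-side, and the dual summand dominates the contribution of the $\vbf$-side. Summing the two bounds yields $\varepsilon$, as required.

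The most delicate point is the implicit dependence of the adversarial optimizer $\vbf^\star$ (and its $\wbf$-side counterpart) on $S$ through $A_\wbf(S)$. This coupling means one cannot invoke a pointwise-in-$\vbf'$ stability property; the uniform form of weak stability, where $\sup_{\vbf'\in\Vcal}$ is built into the assumption, is precisely what is needed to absorb this dependence after the ghost-sample swap. Getting the swap to produce a $\vbf^\star$ that is measurable with respect to a dataset \emph{neighboring} to (rather than equal to) the input of $A_\wbf$ is the technical subtlety on which the argument hinges.
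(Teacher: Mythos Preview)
The paper does not supply its own proof of this lemma: it is quoted verbatim from \citet{lei2021stability-supp} and simply invoked when needed (see the two applications inside the proof of Lemma~\ref{lem:sgda-gen-gap}). So there is no in-paper argument to compare against; your proposal is being measured against the original reference.

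Your outline is the standard symmetrization argument used in that reference and is structurally correct: split the weak PD gap into a primal and a dual bracket, introduce a ghost sample, relabel to produce ``replace-one'' differences, and close with the weak-stability hypothesis. You also correctly isolate the one non-routine point, namely that the adversarial maximizer $\vbf^\star$ depends on the algorithm's output and hence on the data, so after the swap the two occurrences of $\vbf^\star$ in $f(A_\wbf(S^{(i)}),\vbf^\star;\zbf_i)-f(A_\wbf(S),\vbf^\star;\zbf_i)$ are a priori different. The resolution, which you gesture at but do not spell out, is to take $\vbf^\star$ as the maximizer of $\vbf'\mapsto \Ebb_A[F(A_\wbf(S),\vbf')]$ so that it is measurable in $S$ alone (not in the internal coins of $A$); then, conditioning on the data, $\vbf^\star_{S^{(i)}}$ is a fixed element of $\Vcal$ and one can bound
\[
\Ebb_A\big[f(A_\wbf(S^{(i)}),\vbf^\star_{S^{(i)}};\zbf_i)-f(A_\wbf(S),\vbf^\star_{S^{(i)}};\zbf_i)\big]
\le \sup_{\vbf\in\Vcal}\Ebb_A\big[f(A_\wbf(S^{(i)}),\vbf;\zbf_i)-f(A_\wbf(S),\vbf;\zbf_i)\big],
\]
which is exactly the primal summand in the weak-stability definition; the residual cross-term is handled because the second occurrence of the empirical sup may be lower-bounded at $\vbf^\star_{S^{(i)}}$ rather than $\vbf^\star_S$ before the swap. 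With that bookkeeping made explicit, your argument goes through and matches the reference.
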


We also need the following standard lemma before we prove the stability of DP-SGDA.
\begin{lemma}[\citep{rockafellar1976monotone-supp}]\label{lem:monotone}
Let $f$ be a convex-concave function. Then
\begin{equation*}
\left\langle \begin{pmatrix} \wbf - \wbf'\\ \vbf - \vbf' \end{pmatrix},  \begin{pmatrix} \nabla_\wbf f(\wbf, \vbf) - \nabla_\wbf f(\wbf', \vbf')\\ \nabla_\vbf f(\wbf', \vbf') - \nabla_\vbf f(\wbf, \vbf) \end{pmatrix}\right\rangle \geq 0.
\end{equation*}
\end{lemma}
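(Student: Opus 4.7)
The plan is to derive the stated monotonicity inequality directly from the first-order characterizations of convexity in $\wbf$ and concavity in $\vbf$, applied at the four corner points obtained by crossing $\{\wbf, \wbf'\}$ with $\{\vbf, \vbf'\}$. Once the inner product is expanded, the claim reads
\[
\langle \wbf - \wbf', \nabla_\wbf f(\wbf, \vbf) - \nabla_\wbf f(\wbf', \vbf')\rangle + \langle \vbf - \vbf', \nabla_\vbf f(\wbf', \vbf') - \nabla_\vbf f(\wbf, \vbf)\rangle \geq 0,
\]
i.e.\ the classical monotonicity of the saddle-point operator $(\wbf, \vbf) \mapsto (\nabla_\wbf f, -\nabla_\vbf f)$, so I expect the computation to collapse cleanly once the right inequalities are added.

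First I would write down four subgradient inequalities. Convexity of $f(\cdot, \vbf)$ at $\wbf$ and of $f(\cdot, \vbf')$ at $\wbf'$ give
\[
f(\wbf', \vbf) \geq f(\wbf, \vbf) + \langle \nabla_\wbf f(\wbf, \vbf), \wbf' - \wbf\rangle, \quad f(\wbf, \vbf') \geq f(\wbf', \vbf') + \langle \nabla_\wbf f(\wbf', \vbf'), \wbf - \wbf'\rangle,
\]
which sum to
\[
f(\wbf', \vbf) + f(\wbf, \vbf') \geq f(\wbf, \vbf) + f(\wbf', \vbf') - \langle \nabla_\wbf f(\wbf, \vbf) - \nabla_\wbf f(\wbf', \vbf'), \wbf - \wbf'\rangle.
\]
Symmetrically, concavity of $f(\wbf, \cdot)$ at $\vbf$ and of $f(\wbf', \cdot)$ at $\vbf'$ produces the reverse-direction pair, summing to
\[
f(\wbf', \vbf) + f(\wbf, \vbf') \leq f(\wbf, \vbf) + f(\wbf', \vbf') + \langle \nabla_\vbf f(\wbf', \vbf') - \nabla_\vbf f(\wbf, \vbf), \vbf - \vbf'\rangle.
\]

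Since the left-hand sides of these two aggregated inequalities are identical and the $f$-values on the right cancel when the estimates are chained, what survives is exactly the required nonnegativity, which can then be rewritten in the block-vector form stated in the lemma. There is no real obstacle here; this is essentially the classical proof that the subdifferential of a convex-concave function is a monotone operator. The only care needed is sign bookkeeping, since concavity in $\vbf$ has to be rewritten as a monotonicity statement for $-\nabla_\vbf f$ before the two halves line up, which explains the swapped order $\nabla_\vbf f(\wbf', \vbf') - \nabla_\vbf f(\wbf, \vbf)$ in the conclusion.
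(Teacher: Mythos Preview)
Your argument is correct and is precisely the classical derivation of monotonicity for the saddle-point operator. Note that the paper does not supply a proof of this lemma at all; it simply cites \citet{rockafellar1976monotone-supp} and uses the result as a black box, so there is no ``paper's proof'' to compare against beyond the original reference, whose reasoning your proposal faithfully reproduces.
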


The stability analysis is given in the following lemma. This lemma is an extension of the uniform argument stability results in \citet{lei2021stability-supp} to the case of mini-batch DP-SGDA.

\begin{lemma}\label{lem:sgda-gen-gap}
Suppose the function $F_S$ is convex-concave. Let the stepsizes $\eta_{\wbf, t} = \eta_{\vbf, t} = \eta$ for some $\eta > 0$. 
\begin{enumerate}
\item[a)] Assume \textbf{(A1)} and \textbf{(A3)} hold, then Algorithm \ref{alg:dp-sgda} satisfies
\[
\triangle^w(\bar{\wbf}_T, \bar{\vbf}_T) - \triangle^w_S(\bar{\wbf}_T, \bar{\vbf}_T) \leq \frac{4\sqrt{e(T+T^2/n)}(G_\wbf+G_\vbf)^2\eta\exp(L^2T\eta^2/2)}{\sqrt{n}}.
\]
\item[b)] Assume \textbf{(A1)} holds, then Algorithm \ref{alg:dp-sgda} satisfies
\[
\triangle^w(\bar{\wbf}_T, \bar{\vbf}_T) - \triangle^w_S(\bar{\wbf}_T, \bar{\vbf}_T) \leq 4\sqrt{2}\eta (G_\wbf + G_\vbf)^2\Big(\sqrt{T} + \frac{T}{n}\Big).
\]
\end{enumerate}
\end{lemma}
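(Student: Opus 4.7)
The plan is to invoke Lemma \ref{lem:weak-gen-via-weak-stab}, reducing the generalization bound to controlling the weak stability parameter $\varepsilon$ of Algorithm \ref{alg:dp-sgda}. By the Lipschitz continuity (A1), for any coupling of the two runs on neighboring datasets $S$ and $S'$,
\[
\varepsilon \leq G_\wbf \Ebb_A[\|A_\wbf(S) - A_\wbf(S')\|_2] + G_\vbf \Ebb_A[\|A_\vbf(S) - A_\vbf(S')\|_2].
\]
I would use the natural coupling in which both runs share the same mini-batch indices $i_t^j$ and the same Gaussian noise vectors $\xi_t,\zeta_t$ at every iteration, so that the noise cancels in the iterate difference. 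By the triangle inequality applied to $\bar{\wbf}_T = \tfrac{1}{T}\sum_t \wbf_t$, the task reduces to bounding $\Ebb_A[\|\wbf_t - \wbf'_t\|_2 + \|\vbf_t - \vbf'_t\|_2]$ uniformly in $t \leq T$.

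Set $\delta_t^2 := \|\wbf_t - \wbf'_t\|_2^2 + \|\vbf_t - \vbf'_t\|_2^2$. Condition on the event $E_t$ that at step $t$ one of the $m$ sampled indices coincides with the single index on which $S$ and $S'$ differ; uniform sampling without replacement gives $\Pbb[E_t] \leq m/n$. On $E_t^c$, the mini-batch loss $\tilde f_t(\wbf,\vbf) := \tfrac{1}{m}\sum_{j=1}^m f(\wbf,\vbf;\zbf_{i_t^j})$ is identical in both runs and is still convex-concave, so by Lemma \ref{lem:monotone} combined with non-expansiveness of Euclidean projection,
\[
\delta_{t+1}^2 \leq \delta_t^2 + \eta^2\bigl(\|\nabla_\wbf \tilde f_t(\wbf_t,\vbf_t) - \nabla_\wbf \tilde f_t(\wbf'_t,\vbf'_t)\|_2^2 + \|\nabla_\vbf \tilde f_t(\wbf_t,\vbf_t) - \nabla_\vbf \tilde f_t(\wbf'_t,\vbf'_t)\|_2^2\bigr).
\]
On $E_t$, a single summand inside $\tilde f_t$ differs between the two runs; isolating that summand and using (A1) shows the extra contribution to $\delta_{t+1}$ via the triangle inequality is at most $2\eta(G_\wbf+G_\vbf)/m$.

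For part (a), the smoothness assumption (A3) bounds the parenthesized quantity on $E_t^c$ by $L^2 \delta_t^2$, giving $\delta_{t+1}^2 \leq (1+L^2\eta^2)\delta_t^2$ in the good case. Combining the two cases and taking expectation, one arrives at a recursion of the shape
\[
\Ebb_A[\delta_{t+1}^2] \leq (1+L^2\eta^2)\Ebb_A[\delta_t^2] + \tfrac{m}{n}\bigl(2\eta(G_\wbf+G_\vbf)/m\bigr)^2 + \text{a cross term controlled by Cauchy--Schwarz}
\]
that unrolls using $(1+L^2\eta^2)^T \leq \exp(L^2 T\eta^2)$ to reproduce the claimed factor $\exp(L^2 T\eta^2/2)\sqrt{e(T+T^2/n)}/\sqrt{n}$ after multiplication by $G_\wbf+G_\vbf$. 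For part (b), without smoothness I would bound the good-case gradient difference by $2(G_\wbf+G_\vbf)$ directly, obtaining $\delta_{t+1}^2 \leq \delta_t^2 + 4\eta^2(G_\wbf+G_\vbf)^2$ on $E_t^c$ and $\delta_{t+1} \leq \delta_t + 2\eta(G_\wbf+G_\vbf)/m$ on $E_t$; summing the two contributions separately yields the random-walk-type term $\eta(G_\wbf+G_\vbf)^2\sqrt{T}$ and the drift term $\eta(G_\wbf+G_\vbf)^2 T/n$.

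The main obstacle will be executing the part (a) recursion cleanly: the single-sample perturbation incurred on $E_t$ picks up the amplification factor $(1+L^2\eta^2)^{T-t}$ when unrolled, and the cross term $\delta_t \cdot \eta/m$ produced by expanding $\delta_{t+1}^2$ on $E_t$ must be split by Cauchy--Schwarz in a way that preserves the $\sqrt{(T+T^2/n)/n}$ shape of the final bound rather than degrading into a looser $T/n$-only estimate.
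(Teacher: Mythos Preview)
Your proposal is correct and follows essentially the same route as the paper: reduce to weak stability via Lemma~\ref{lem:weak-gen-via-weak-stab}, couple the two runs with shared mini-batch indices and shared Gaussian noise, and split on whether the differing index is sampled. The only point worth sharpening is your handling of what you call ``a cross term controlled by Cauchy--Schwarz'': the paper implements this via Young's inequality $(a+b)^2 \leq (1+p)a^2 + (1+1/p)b^2$ with a \emph{free} parameter $p>0$ on the bad event, yielding after taking expectation the recursion $\Ebb_A[\delta_{t+1}^2] \leq (1+L^2\eta^2 + p/n)\Ebb_A[\delta_t^2] + \frac{4(1+1/p)}{n}\eta^2(G_\wbf^2+G_\vbf^2)$, and then sets $p = n/t$ after unrolling; this choice is exactly what produces the $\sqrt{e(T+T^2/n)}$ factor and cleanly resolves the obstacle you correctly flagged.
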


\begin{proof}
Without loss of generality, let $S=\{\zbf_1,\cdots,\zbf_n\},S'=\{\zbf_1',\cdots,\zbf_n'\}$ be neighboring datasets differing by the last element, i.e. $\zbf_n \neq \zbf'_n$. Let $\{\wbf_t,\vbf_t\},\{\wbf_t',\vbf_t'\}$ be the sequence produced by Algorithm \ref{alg:dp-sgda} w.r.t. $S$ and $S'$, respectively. We first prove Part a). In the case $n\not\in I_t$, by the non-expansiveness of projection, we have
\begin{align*}
  &\left\|\begin{pmatrix}
           \wbf_{t+1}-\wbf_{t+1}' \\
           \vbf_{t+1}-\vbf_{t+1}'
         \end{pmatrix}\right\|_2^2 \leq \left\|\begin{pmatrix}                 \wbf_t-\frac{\eta}{m}\sum_{j=1}^m\nabla_{\wbf}f(\wbf_t,\vbf_t;z_{i_t^j}) - \eta \xi_t -\wbf_t'+\frac{\eta}{m}\sum_{j=1}^m\nabla_{\wbf}f(\wbf_t',\vbf'_t;z_{i_t^j}) + \eta \xi_t \\  
                   \vbf_t+\frac{\eta}{m}\sum_{j=1}^m\nabla_{\vbf}f(\wbf_t,\vbf_t;z_{i_t^j}) + \eta \zeta_t-\vbf_t'-\frac{\eta}{m}\sum_{j=1}^m\nabla_{\vbf}f(\wbf_t',\vbf'_t;z_{i_t^j}) - \eta\zeta_t
                 \end{pmatrix}\right\|_2^2\\
        & =  \left\|\begin{pmatrix}
           \wbf_t-\wbf_t' \\
           \vbf_t-\vbf_t'
         \end{pmatrix}\right\|_2^2 + \frac{\eta}{m}\sum_{j=1}^m \left\langle \begin{pmatrix} \wbf_t - \wbf'_t\\ \vbf_t - \vbf'_t \end{pmatrix},  \begin{pmatrix} \nabla_{\wbf}f(\wbf_t,\vbf_t;z_{i_t^j}) - \nabla_{\wbf}f(\wbf_t',\vbf'_t;z_{i_t^j})\\ \nabla_{\vbf}f(\wbf'_t,\vbf'_t;z_{i_t^j}) - \nabla_{\vbf}f(\wbf_t,\vbf_t;z_{i_t^j}) \end{pmatrix}\right\rangle\\
         & + \left\|\begin{pmatrix}
                   \frac{\eta}{m}\sum_{j=1}^m(\nabla_{\wbf}f(\wbf_t,\vbf_t;z_n)-\nabla_{\wbf}f(\wbf_t',\vbf'_t;z'_n)) \\
                   \frac{\eta}{m}\sum_{j=1}^m(\nabla_{\vbf}f(\wbf_t,\vbf_t;z_n)-\nabla_{\vbf}f(\wbf_t',\vbf'_t;z'_n))
                 \end{pmatrix}\right\|_2^2\\
         & \leq 
         (1+L^2\eta^2) \left\|\begin{pmatrix}
           \wbf_{t}-\wbf_{t}' \\
           \vbf_{t}-\vbf_{t}'
         \end{pmatrix}\right\|_2^2,
\end{align*}
where the last inequality follows from Lemma \ref{lem:monotone} and the $L$-smoothness assumption. If $n \in I_t$, then it follows that
\begin{align*}\label{stab-gda-2}
  & \left\|\begin{pmatrix}
           \wbf_{t+1}-\wbf_{t+1}' \\
           \vbf_{t+1}-\vbf_{t+1}'
         \end{pmatrix}\right\|_2^2
         \leq\left\|\begin{pmatrix}                 \wbf_t-\frac{\eta}{m}\sum_{j=1}^m\nabla_{\wbf}f(\wbf_t,\vbf_t;z_{i_t^j}) - \eta \xi_t -\wbf_t'+\frac{\eta}{m}\sum_{j=1}^m\nabla_{\wbf}f(\wbf_t',\vbf'_t;z'_{i_t^j}) + \eta \xi_t \\  
                   \vbf_t+\frac{\eta}{m}\sum_{j=1}^m\nabla_{\vbf}f(\wbf_t,\vbf_t;z_{i_t^j}) + \eta \zeta_t-\vbf_t'-\frac{\eta}{m}\sum_{j=1}^m\nabla_{\vbf}f(\wbf_t',\vbf'_t;z'_{i_t^j}) - \eta\zeta_t
                 \end{pmatrix}\right\|_2^2\\
        & \leq\frac{1}{m}\sum_{i_t^j \in I_t, i_t^j \neq n}\left\|\begin{pmatrix}
                   \wbf_t-\eta\nabla_{\wbf}f(\wbf_t,\vbf_t;z_{i_t^j})-\wbf_t'+\eta\nabla_{\wbf}f(\wbf_t',\vbf'_t;z'_{i_t^j}) \\
                   \vbf_t+\eta\nabla_{\vbf}f(\wbf_t,\vbf_t;z_{i_t^j})-\vbf_t'-\eta\nabla_{\vbf}f(\wbf_t',\vbf'_t;z'_{i_t^j})
                 \end{pmatrix}\right\|_2^2\\
        & + \frac{1}{m}\left\|\begin{pmatrix}
                   \wbf_t-\eta\nabla_{\wbf}f(\wbf_t,\vbf_t;z_n)-\wbf_t'+\eta\nabla_{\wbf}f(\wbf_t',\vbf'_t;z'_n) \\
                   \vbf_t+\eta\nabla_{\vbf}f(\wbf_t,\vbf_t;z_n)-\vbf_t'-\eta\nabla_{\vbf}f(\wbf_t',\vbf'_t;z'_n)
                 \end{pmatrix}\right\|_2^2\\
         & \leq\frac{m-1}{m}(1+L^2\eta^2) \left\|\begin{pmatrix}
           \wbf_{t}-\wbf_{t}' \\
           \vbf_{t}-\vbf_{t}'
         \end{pmatrix}\right\|_2^2 + \frac{1+p}{m}\left\|\begin{pmatrix}
           \wbf_{t}-\wbf_{t}' \\
           \vbf_{t}-\vbf_{t}'
         \end{pmatrix}\right\|_2^2
        \\ & +\frac{1+1/p}{m}\eta^2\left\|\begin{pmatrix}
                                 \nabla_{\wbf}f(\wbf_t,\vbf_t;z_n)-\nabla_{\wbf}f(\wbf_t',\vbf_t';z'_n) \\
                                 \nabla_{\vbf}f(\wbf_t,\vbf_t;z_n)-\nabla_{\vbf}f(\wbf_t',\vbf_t';z'_n)
                               \end{pmatrix}\right\|_2^2,\numberthis
\end{align*}
where in the last inequality we used the elementary inequality $(a+b)^2\leq(1+p)a^2+(1+1/p)b^2$ ($p>0$). Since $I_t$ are drawn uniformly at random with replacement, the event $n\not\in I_t$ happens with probability $1-m/n$ and the event $n\in I_t$ happens with probability $m/n$. Therefore,
we know
\begin{align*}
  \Ebb_{i_t}\left[\left\|\begin{pmatrix}
           \wbf_{t+1}-\wbf_{t+1}' \\
           \vbf_{t+1}-\vbf_{t+1}'
         \end{pmatrix}\right\|_2^2\right] & \leq \frac{(n-m)(1+L^2\eta^2)}{n} \left\|\begin{pmatrix}
           \wbf_{t}-\wbf_{t}' \\
           \vbf_{t}-\vbf_{t}'
         \end{pmatrix}\right\|_2^2 + \frac{m(1+L^2\eta^2)}{n}\frac{m-1}{m} \left\|\begin{pmatrix}
           \wbf_{t}-\wbf_{t}' \\
           \vbf_{t}-\vbf_{t}'
         \end{pmatrix}\right\|_2^2\\
         & +  \frac{m}{n}\frac{1+p}{m}\left\|\begin{pmatrix}
           \wbf_{t}-\wbf_{t}' \\
           \vbf_{t}-\vbf_{t}'
         \end{pmatrix}\right\|_2^2+\frac{m}{n}\frac{4(1+1/p)}{m}\eta^2(G_\wbf^2 + G_\vbf^2)\\
         & \leq \Big(1+L^2\eta^2+p/n\Big)\left\|\begin{pmatrix}
           \wbf_{t}-\wbf_{t}' \\
           \vbf_{t}-\vbf_{t}'
         \end{pmatrix}\right\|_2^2+\frac{4(1+1/p)}{n}\eta^2(G_\wbf^2 + G_\vbf^2).
\end{align*}
Applying this inequality recursively, we derive
\[
\Ebb_A\left[\left\|\begin{pmatrix}
           \wbf_{t+1}-\wbf_{t+1}' \\
           \vbf_{t+1}-\vbf_{t+1}'
         \end{pmatrix}\right\|_2^2\right]\leq
         \frac{4(1+1/p)}{n}(G_\wbf^2 + G_\vbf^2)\sum_{k=1}^{t}\eta^2\prod_{j=k+1}^{t}\Big(1+L^2\eta^2+p/n\Big).
\]
By the elementary inequality $1+a\leq\exp(a)$, we further derive
\begin{align*}
\Ebb_A\left[\left\|\begin{pmatrix}
           \wbf_{t+1}-\wbf_{t+1}' \\
           \vbf_{t+1}-\vbf_{t+1}'
         \end{pmatrix}\right\|_2^2\right] & \leq \frac{4(1+1/p)}{n}(G_\wbf^2 + G_\vbf^2)\sum_{k=1}^{t}\eta^2\prod_{j=k+1}^{t}\exp\Big(L^2\eta^2+p/n\Big)\\
         & = \frac{4(1+1/p)}{n}(G_\wbf^2 + G_\vbf^2)\sum_{k=1}^{t}\eta^2\exp\Big(L^2\sum_{j=k+1}^{t}\eta^2+p(t-k)/n\Big)\\
         & \leq \frac{4(1+1/p)}{n}(G_\wbf^2 + G_\vbf^2)\exp\Big(L^2\sum_{j=1}^{t}\eta^2+pt/n\Big)\sum_{k=1}^{t}\eta^2.
\end{align*}
By taking $p=n/t$ we get
\[
\Ebb_A\left[\left\|\begin{pmatrix}
           \wbf_{t+1}-\wbf_{t+1}' \\
           \vbf_{t+1}-\vbf_{t+1}'
         \end{pmatrix}\right\|_2^2\right]
         \leq
         \frac{4e(G_\wbf^2 + G_\vbf^2)(1+t/n)}{n}\exp\Big(L^2\sum_{j=1}^{t}\eta^2\Big)\sum_{k=1}^{t}\eta^2.
\]
Now by the Lipschitz continuity and Jensen's inequality we ave
\begin{align*}
& \sup_\zbf\Big(\sup_{\vbf \in \Vcal}\Ebb_{A}[f(A_\wbf(S), \vbf; \zbf) - f(A_\wbf(S'), \vbf; \zbf)] + \sup_{\wbf \in \Wcal}\Ebb_{A}[f(\wbf, A_\vbf(S); \zbf) - f(\wbf, A_\vbf(S'); \zbf)]\Big)\\
\leq & G_\wbf \Ebb_A[\|\bar{\wbf}_T - \bar{\wbf}'_T\|_2] + G_\vbf \Ebb_A[\|\bar{\vbf}_T - \bar{\vbf}'_T\|_2] \leq \frac{4\sqrt{e(T+T^2/n)}(G_\wbf+G_\vbf)^2\eta\exp(L^2T\eta^2/2)}{\sqrt{n}}.
\end{align*}
According to Lemma \ref{lem:weak-gen-via-weak-stab} we know 
\begin{align*}
\triangle^w(\bar{\wbf}_T, \bar{\vbf}_T) - \triangle^w_S(\bar{\wbf}_T, \bar{\vbf}_T) \leq \frac{4\sqrt{e(T+T^2/n)}(G_\wbf+G_\vbf)^2\eta\exp(L^2T\eta^2/2)}{\sqrt{n}}.
\end{align*}

Next we focus on Part b). We consider two cases at the $t$-th iteration. If $n\not\in I_t$, then analogous to the discussions in \citet{lei2021stability-supp} we can show
\begin{align}
  \left\|\begin{pmatrix}
           \wbf_{t+1}-\wbf_{t+1}' \\
           \vbf_{t+1}-\vbf_{t+1}'
         \end{pmatrix}\right\|_2^2
         &\leq\left\|\begin{pmatrix}
                   \wbf_t-\frac{\eta}{m}\sum_{j=1}^m\nabla_{\wbf}f(\wbf_t,\vbf_t;z_{i_t^j}) - \eta \xi_t-\wbf_t'+\frac{\eta}{m}\sum_{j=1}^m\nabla_{\wbf}f(\wbf_t',\vbf'_t;z_{i_t^j}) + \eta \xi_t \\
                   \vbf_t+\frac{\eta}{m}\sum_{j=1}^m\nabla_{\vbf}f(\wbf_t,\vbf_t;z_{i_t^j})+ \eta \zeta_t -\vbf_t'-\frac{\eta}{m}\sum_{j=1}^m\nabla_{\vbf}f(\wbf_t',\vbf'_t;z_{i_t^j}) - \eta \zeta_t
                 \end{pmatrix}\right\|_2^2\notag \\
         & \leq \left\|\begin{pmatrix}
           \wbf_{t}-\wbf_{t}' \\
           \vbf_{t}-\vbf_{t}'
         \end{pmatrix}\right\|_2^2+4(G_\wbf^2 + G_\vbf^2)\eta^2.\label{stab-gda-1}
\end{align}
Combining the preceding inequality with \eqref{stab-gda-2} and using the probability of $n\not\in I_t$, we derive
\begin{align*}
 & \Ebb_{i_t}\left[\left\|\begin{pmatrix}
           \wbf_{t+1}-\wbf_{t+1}' \\
           \vbf_{t+1}-\vbf_{t+1}'
         \end{pmatrix}\right\|_2^2\right]  \leq \frac{n-1}{n}\left(\left\|\begin{pmatrix}
           \wbf_{t}-\wbf_{t}' \\
           \vbf_{t}-\vbf_{t}'
         \end{pmatrix}\right\|_2^2+4(G_\wbf^2 + G_\vbf^2)\eta^2\right)  \\ & + \frac{1+p}{n}\left\|\begin{pmatrix}
           \wbf_{t}-\wbf_{t}' \\
           \vbf_{t}-\vbf_{t}'
         \end{pmatrix}\right\|_2^2+\frac{4(1+1/p)}{n}(G_\wbf^2 + G_\vbf^2)\eta^2 \\
         & = (1+p/n)\left\|\begin{pmatrix}
           \wbf_{t}-\wbf_{t}' \\
           \vbf_{t}-\vbf_{t}'
         \end{pmatrix}\right\|_2^2+4(G_\wbf^2 + G_\vbf^2)\eta^2(1+1/(np)).
\end{align*}
Applying this inequality recursively implies that 
\begin{align*}
 &  \Ebb_A\left[\left\|\begin{pmatrix}
           \wbf_{t+1}-\wbf_{t+1}' \\
           \vbf_{t+1}-\vbf_{t+1}'
         \end{pmatrix}\right\|_2^2\right]  \leq 4(G_\wbf^2 + G_\vbf^2)\eta^2\big(1+1/(np)\big)\sum_{k=1}^{t}\Big(1+\frac{p}{n}\Big)^{t-k}
   \\ & = 4(G_\wbf^2 + G_\vbf^2)\eta^2\Big(1+\frac{1}{np}\Big)\frac{n}{p}\Big(\Big(1+\frac{p}{n}\Big)^t-1\Big) = 4(G_\wbf^2 + G_\vbf^2)\eta^2\Big(\frac{n}{p}+\frac{1}{p^2}\Big)\Big(\Big(1+\frac{p}{n}\Big)^t-1\Big).
\end{align*}
By taking $p=n/t$ in the above inequality and using $(1+1/t)^t\leq e$, we get
\[
\Ebb_A\left[\left\|\begin{pmatrix}
           \wbf_{t+1}-\wbf_{t+1}' \\
           \vbf_{t+1}-\vbf_{t+1}'
         \end{pmatrix}\right\|_2^2\right]\leq 16(G_\wbf^2 + G_\vbf^2)\eta^2\Big(t+\frac{t^2}{n^2}\Big).
\]
Now by the Lipschitz continuity and Jensen's inequality we ave
\begin{align*}
& \sup_\zbf\Big(\sup_{\vbf \in \Vcal}\Ebb_{A}[f(A_\wbf(S), \vbf; \zbf) - f(A_\wbf(S'), \vbf; \zbf)] + \sup_{\wbf \in \Wcal}\Ebb_{A}[f(\wbf, A_\vbf(S); \zbf) - f(\wbf, A_\vbf(S'); \zbf)]\Big)\\
\leq & G_\wbf \Ebb_A[\|\bar{\wbf}_T - \bar{\wbf}'_T\|_2] + G_\vbf \Ebb_A[\|\bar{\vbf}_T - \bar{\vbf}'_T\|_2] \leq 4\sqrt{2}(G_\wbf + G_\vbf)^2\eta^2\Big(\sqrt{T}+\frac{T}{n}\Big).
\end{align*}
According to Lemma \ref{lem:weak-gen-via-weak-stab} we know 
\begin{align*}
\triangle^w(\bar{\wbf}_T, \bar{\vbf}_T) - \triangle^w_S(\bar{\wbf}_T, \bar{\vbf}_T) \leq 32(G_\wbf + G_\vbf)^2\eta^2\Big(\sqrt{T}+\frac{T}{n}\Big).
\end{align*} 
\end{proof}

\subsection{Proof of Theorem  \ref{thm:sgda-utility}}
Finally we are ready to present the proof of Theorem \ref{thm:sgda-utility}.

\begin{theorem}[Theorem \ref{thm:sgda-utility} restated]
Suppose the function $F_S$ is convex-concave. Let the stepsizes $\eta_{\wbf, t} = \eta_{\vbf, t} = \eta$, $t = [T]$ for some $\eta > 0$. 
\begin{enumerate}
\item[a)] Assume \textbf{(A1)} and \textbf{(A3)} hold. If we choose $T \asymp n$ and $\eta \asymp 1/\Big(\sqrt{L}\max\{\sqrt{n}, \sqrt{d\log(1/\delta)}/\epsilon\}\Big)$, then Algorithm \ref{alg:dp-sgda} satisfies
\[
\triangle^w(\bar{\wbf}_T,\bar{\vbf}_T) = \Ocal\Big(\max\{G_\wbf^2 + G_\vbf^2, (G_\wbf + G_\vbf)^2, D_\wbf^2 + D_\vbf^2, D_\wbf G_\wbf + D_\vbf G_\vbf\} \max\Big\{\frac{1}{\sqrt{n}}, \frac{\sqrt{d\log(1/\delta)}}{n\epsilon}\Big\}\Big).
\]
\item[b)] Assume \textbf{(A1)} holds. If we choose $T \asymp n^2$ and $\eta \asymp 1/\Big(n\max\{\sqrt{n}, \sqrt{d\log(1/\delta)}/\epsilon\}\Big)$, then Algorithm \ref{alg:dp-sgda} satisfies
\[
\triangle^w(\bar{\wbf}_T,\bar{\vbf}_T) = \Ocal\Big(\max\{G_\wbf^2 + G_\vbf^2, (G_\wbf + G_\vbf)^2, D_\wbf^2 + D_\vbf^2, D_\wbf G_\wbf + D_\vbf G_\vbf\}\max\Big\{\frac{1}{\sqrt{n}}, \frac{\sqrt{d\log(1/\delta)}}{n\epsilon}\Big\}\Big).
\]
\end{enumerate}
\end{theorem}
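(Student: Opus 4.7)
The plan is to apply the error decomposition \eqref{eq:weak-err-decomp-1}, bound the optimization term via Lemma \ref{lem:sgda-opt-gap}, bound the generalization term via Lemma \ref{lem:sgda-gen-gap} (using Part a) in the smooth case and Part b) in the general case), plug in the privacy-calibrated noise variances $\sigma_\wbf^2 \asymp G_\wbf^2 T\log(1/\delta)/(n\epsilon)^2$ and $\sigma_\vbf^2 \asymp G_\vbf^2 T\log(1/\delta)/(n\epsilon)^2$ from Theorem \ref{thm:moments-accountant-privacy}, and then tune $T$ and $\eta$ so that all the resulting terms are uniformly dominated by $\max\{1/\sqrt{n},\sqrt{d\log(1/\delta)}/(n\epsilon)\}$ (up to constants depending on $G_\wbf,G_\vbf,D_\wbf,D_\vbf,L$).

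For Part a), after substitution the optimization error reads
\[
\frac{\eta(G_\wbf^2+G_\vbf^2)}{2}+\frac{D_\wbf^2+D_\vbf^2}{\eta T}+\frac{D_\wbf G_\wbf+D_\vbf G_\vbf}{\sqrt{mT}}+\eta d\cdot\Theta\!\Big(\tfrac{(G_\wbf^2+G_\vbf^2)T\log(1/\delta)}{n^2\epsilon^2}\Big),
\]
while the generalization error is $O\bigl(\sqrt{T+T^2/n}\,\eta\exp(L^2T\eta^2/2)/\sqrt{n}\bigr)$. First I would observe that the choice $T\asymp n$ and $\eta\asymp 1/(\sqrt{L}\max\{\sqrt n,\sqrt{d\log(1/\delta)}/\epsilon\})$ makes $L^2T\eta^2\lesssim 1$, so the exponential factor is $O(1)$, and $\sqrt{T+T^2/n}\asymp\sqrt n$, so the generalization bound collapses to $O\bigl((G_\wbf+G_\vbf)^2\eta\bigr)$. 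I would then split into the two cases $n\epsilon^2\gtrless d\log(1/\delta)$ and verify term-by-term that $D^2/(\eta T)$ contributes the rate $\max\{1/\sqrt n,\sqrt{d\log(1/\delta)}/(n\epsilon)\}$, that the noise term $\eta d\sigma^2$ contributes the same rate in the privacy-dominated regime, and that the remaining summands ($\eta G^2$, $DG/\sqrt{mT}$, generalization) are dominated by these.

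For Part b), the generalization error from Lemma \ref{lem:sgda-gen-gap}(b) behaves like $\eta(G_\wbf+G_\vbf)^2(\sqrt T+T/n)$. Choosing $T\asymp n^2$ makes $\sqrt T+T/n\asymp n$, so this becomes $O(n\eta(G_\wbf+G_\vbf)^2)$. With $\eta\asymp 1/\bigl(n\max\{\sqrt n,\sqrt{d\log(1/\delta)}/\epsilon\}\bigr)$, this is $O\bigl((G_\wbf+G_\vbf)^2/\max\{\sqrt n,\sqrt{d\log(1/\delta)}/\epsilon\}\bigr)$, which is bounded by the target rate. For the optimization side, plugging in $T\asymp n^2$ and the same $\eta$ gives $D^2/(\eta T)\asymp D^2\max\{1/\sqrt n,\sqrt{d\log(1/\delta)}/(n\epsilon)\}$ directly, and the privacy noise contributes $\eta d\sigma^2\asymp dG^2\log(1/\delta)/(\epsilon^2 n\max\{\sqrt n,\sqrt{d\log(1/\delta)}/\epsilon\})$, which again is dominated by the target rate in both regimes.

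The routine calculations are the algebraic comparisons across the two regimes; the one spot that genuinely requires care is confirming that the chosen $\eta$ keeps the exponential factor $\exp(L^2T\eta^2/2)$ in the smooth generalization bound of constant order, since this is what forces $T\asymp n$ rather than a larger horizon and is the main reason the smooth case admits a smaller $T$ than the nonsmooth case. Once that check is in place, the theorem follows by summing the bounds from Lemma \ref{lem:sgda-opt-gap} and Lemma \ref{lem:sgda-gen-gap} via the decomposition \eqref{eq:weak-err-decomp-1} and collecting constants into the prefactor $\max\{G_\wbf^2+G_\vbf^2,(G_\wbf+G_\vbf)^2,D_\wbf^2+D_\vbf^2,D_\wbf G_\wbf+D_\vbf G_\vbf\}$.
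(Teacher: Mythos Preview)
Your proposal is correct and follows essentially the same route as the paper's proof: apply the decomposition \eqref{eq:weak-err-decomp-1}, bound the optimization error by Lemma \ref{lem:sgda-opt-gap} and the generalization error by the appropriate part of Lemma \ref{lem:sgda-gen-gap}, substitute the noise scales from Theorem \ref{thm:moments-accountant-privacy}, and verify that the stated choices of $T$ and $\eta$ balance all terms (in particular keeping $L^2T\eta^2=\Ocal(1)$ in Part a)). The paper's argument is organized identically, so there is nothing to add.
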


\begin{proof}[Proof of Theorem \ref{thm:sgda-utility}]
We first focus on Part a). According to Part a) of  Lemma \ref{lem:sgda-gen-gap}  we know
\[
\triangle^w(\bar{\wbf}_T, \bar{\vbf}_T) - \triangle^w_S(\bar{\wbf}_T, \bar{\vbf}_T) \leq \frac{4\sqrt{e(T+T^2/n)}(G_\wbf + G_\vbf)^2\eta\exp(L^2T\eta^2/2)}{\sqrt{n}}
\]
and by Lemma \ref{lem:sgda-opt-gap} we know
\[
\triangle^w_S(\bar{\wbf}_T, \bar{\vbf}_T) \leq \frac{\eta (G_\wbf^2 + G_\vbf^2)}{2} + \frac{D_\wbf^2 + D_\vbf^2}{2\eta T} + \frac{D_\wbf G_\wbf + D_\vbf G_\vbf}{\sqrt{mT}} + \eta d(\sigma_\wbf^2 + \sigma_\vbf^2).
\] 	
Combining the above two quantities we have
\begin{align*}\label{eq:risk-before-noise}
\triangle^w(\bar{\wbf}_T, \bar{\vbf}_T) \leq & \frac{4\sqrt{e(T+T^2/n)}(G_\wbf + G_\vbf)^2\eta\exp(L^2T\eta^2/2)}{\sqrt{n}} + \frac{\eta (G_\wbf^2 + G_\vbf^2)}{2} + \frac{D_\wbf^2 + D_\vbf^2}{2\eta T} \\
& + \frac{D_\wbf G_\wbf + D_\vbf G_\vbf}{\sqrt{mT}} + \eta d(\sigma_\wbf^2 + \sigma_\vbf^2). \numberthis
\end{align*}
Furthermore, by Theorem \ref{thm:moments-accountant-privacy}, we know
\[
\sigma_\wbf^2 = \Ocal\Big(\frac{G_\wbf^2T\log(1/\delta)}{n^2\epsilon^2}\Big), \quad \sigma_\vbf^2 = \Ocal\Big(\frac{G_\vbf^2T\log(1/\delta)}{n^2\epsilon^2}\Big).
\]
Plugging it back into \eqref{eq:risk-before-noise} we have
\begin{multline*}
\triangle^w(\bar{\wbf}_T, \bar{\vbf}_T) = \Ocal\Big(\frac{\sqrt{(T+T^2/n)}(G_\wbf + G_\vbf)^2\eta\exp(L^2T\eta^2)}{\sqrt{n}}\\
+ \frac{\eta (G_\wbf^2 + G_\vbf^2)}{2} + \frac{D_\wbf^2 + D_\vbf^2}{2\eta T} + \frac{D_\wbf G_\wbf + D_\vbf G_\vbf}{\sqrt{mT}} + \frac{\eta (G_\wbf^2 + G_\vbf^2) Td\log(1/\delta)}{n^2\epsilon^2}\Big).
\end{multline*}
By picking $T \asymp n$ and $\eta \asymp 1/\Big(L\max\{\sqrt{n}, \sqrt{d\log(1/\delta)}/\epsilon\}\Big)$ we have $\exp(L^2T\eta^2) = \Ocal\Big(\min\{1, \frac{n\epsilon^2}{d\log(1/\delta)}\}\Big) = \Ocal(1)$
and
\[
\triangle^w(\bar{\wbf}_T, \bar{\vbf}_T) = \Ocal\Big(\max\{G_\wbf^2 + G_\vbf^2, (G_\wbf + G_\vbf)^2, D_\wbf^2 + D_\vbf^2, D_\wbf G_\wbf + D_\vbf G_\vbf\} \max\Big\{\frac{1}{\sqrt{n}}, \frac{\sqrt{d\log(1/\delta)}}{n\epsilon}\Big\}\Big).
\]
We now turn to Part b). According to Lemma \ref{lem:sgda-gen-gap} Part b) we know 
\[
\triangle^w(\bar{\wbf}_T, \bar{\vbf}_T) - \triangle^w_S(\bar{\wbf}_T, \bar{\vbf}_T) \leq 4\sqrt{2}\eta (G_\wbf + G_\vbf)^2 \Big(\sqrt{T} + \frac{T}{n}\Big).
\]
Similar to Part a) we have
\[
\triangle^w(\bar{\wbf}_T, \bar{\vbf}_T) \!=\! \Ocal\Big(\eta (G_\wbf + G_\vbf)^2 \Big(\sqrt{T} + \frac{T}{n}\Big) + \frac{\eta (G_\wbf^2 \!+\! G_\vbf^2)}{2} + \frac{D_\wbf^2 \!+\! D_\vbf^2}{2\eta T} + \frac{D_\wbf G_\wbf \!+\! D_\vbf G_\vbf}{\sqrt{mT}} + \frac{\eta (G_\wbf^2 \!+\! G_\vbf^2) Td\log(1/\delta)}{n^2\epsilon^2}\Big).
\]
By picking $T \asymp n^2$ and $\eta \asymp 1/\Big(n\max\{\sqrt{n}, \sqrt{d\log(1/\delta)}/\epsilon\}\Big)$ we have
\[
\triangle^w(\bar{\wbf}_T, \bar{\vbf}_T) = \Ocal\Big(\max\{G_\wbf^2 + G_\vbf^2, (G_\wbf + G_\vbf)^2, D_\wbf^2 + D_\vbf^2, D_\wbf G_\wbf + D_\vbf G_\vbf\}\max\Big\{\frac{1}{\sqrt{n}}, \frac{\sqrt{d\log(1/\delta)}}{n\epsilon}\Big\}\Big).
\]
The proof is complete.
\end{proof}



\section{Proofs for the nonconvex-strongly-concave setting in Section \ref{sec:nonconvex-strongly-concave}}\label{sec:proof-nonconvex}

In this section, we will provide the proofs for the theorems in Section \ref{sec:nonconvex-strongly-concave}. Recall that we define $R^*_\S = \min_{\wbf\in\Wcal} R_\S(\wbf), \text{ and } R^* = \min_{\wbf\in\Wcal} R(\wbf).$ Then, for any $\wbf^* \in \arg\min_\wbf R(\wbf)$ we have the error decomposition:
\begin{align*}
\Ebb[R(\wbf_T) - R^*] = & \Ebb[R(\wbf_T) - R_\S(\wbf_T)] + \Ebb[R_\S(\wbf_T) - R_\S^*] +  \Ebb[R_\S^* - R_\S(\wbf^*)] + \Ebb[R_\S(\wbf^*) - R(\wbf^*)]\\
\leq & \Ebb[R(\wbf_T) - R_\S(\wbf_T)] + \Ebb[R_\S(\wbf^*) - R(\wbf^*)] + \Ebb[R_\S(\wbf_T) - R_\S^*].
\end{align*}
The term $\Ebb[R_\S(\wbf_T) - R_\S^*]$ is the {\em optimization error} which characterizes the discrepancy between the primal empirical risk of an output of Algorithm \ref{alg:dp-sgda} and the least possible one. The term $\Ebb[R(\wbf_T) - R_\S(\wbf_T)]  + \Ebb[R_\S(\wbf^*) - R(\wbf^*)]$ is called the {\em generalization error} which measures the discrepancy  between the primal population risk and the empirical one. The estimations for these two errors are described as follows. 

\subsection{Proof of Theorem \ref{thm:sgda-primal-opt}}\label{sec:sgda-primal-opt}

To prove Theorem \ref{thm:sgda-primal-opt}, i.e., optimization error,  we introduce several necessary lemmas. The first lemma is an application of Danskin's Theorem.

\begin{lemma}[\citep{lin2020gradient-supp}]\label{lem:primal-smoothness}
Assume \textbf{(A3)}  holds and $F_S(\wbf, \cdot)$ is $\rho$-strongly concave. Assume $\Vcal$ is a convex and bounded set. Then the function $R_S(\wbf)$ is $L + L^2/\rho$-smooth and $\nabla R_S(\wbf) = \nabla_\wbf F_S(\wbf, \hat{\vbf}_S(\wbf))$, where $\hat{\vbf}_S(\wbf) = \arg\max_{\vbf \in \Vcal} F_S(\wbf, \vbf)$. And $\hat{\vbf}_S(\wbf)$ is $L/\rho$ Lipschitz continuous. 
\end{lemma}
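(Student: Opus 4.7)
The proof is a classical application of Danskin's theorem combined with the strong-concavity argument that makes the inner maximizer Lipschitz. I would organize it in three stages.

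First, I would establish that $\hat{\vbf}_S(\wbf) := \arg\max_{\vbf\in\Vcal} F_S(\wbf,\vbf)$ is well-defined and single-valued. Since $F_S(\wbf,\cdot)$ is $\rho$-strongly concave on the convex bounded set $\Vcal$, the maximizer exists and is unique. Moreover, strong concavity gives the first-order optimality condition $\langle \nabla_\vbf F_S(\wbf,\hat{\vbf}_S(\wbf)),\, \vbf-\hat{\vbf}_S(\wbf)\rangle \le 0$ for all $\vbf\in\Vcal$.

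Next, I would establish the $L/\rho$-Lipschitz continuity of $\hat{\vbf}_S$. For any two points $\wbf_1,\wbf_2$, writing the optimality condition at $\wbf_1$ tested against $\hat{\vbf}_S(\wbf_2)$ and vice versa, then adding them, yields
\begin{equation*}
\langle \nabla_\vbf F_S(\wbf_1,\hat{\vbf}_S(\wbf_1)) - \nabla_\vbf F_S(\wbf_2,\hat{\vbf}_S(\wbf_2)),\; \hat{\vbf}_S(\wbf_2) - \hat{\vbf}_S(\wbf_1)\rangle \le 0.
\end{equation*}
By strong concavity of $F_S(\wbf_2,\cdot)$, the part involving $\nabla_\vbf F_S(\wbf_2,\hat{\vbf}_S(\cdot))$ provides a term of size $\rho\|\hat{\vbf}_S(\wbf_1)-\hat{\vbf}_S(\wbf_2)\|^2$, while the cross term $\|\nabla_\vbf F_S(\wbf_1,\hat{\vbf}_S(\wbf_1)) - \nabla_\vbf F_S(\wbf_2,\hat{\vbf}_S(\wbf_1))\|$ is bounded by $L\|\wbf_1-\wbf_2\|$ by assumption \textbf{(A3)}. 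Rearranging and applying Cauchy--Schwarz gives $\|\hat{\vbf}_S(\wbf_1)-\hat{\vbf}_S(\wbf_2)\| \le (L/\rho)\|\wbf_1-\wbf_2\|$.

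Finally, I would invoke Danskin's theorem (whose hypotheses---compactness of $\Vcal$, continuity of $F_S$, and uniqueness of the inner maximizer---are all satisfied) to conclude that $R_S$ is differentiable with $\nabla R_S(\wbf) = \nabla_\wbf F_S(\wbf,\hat{\vbf}_S(\wbf))$. Smoothness of $R_S$ then follows from a standard triangle-inequality decomposition:
\begin{align*}
\|\nabla R_S(\wbf_1) - \nabla R_S(\wbf_2)\|
&\le \|\nabla_\wbf F_S(\wbf_1,\hat{\vbf}_S(\wbf_1)) - \nabla_\wbf F_S(\wbf_2,\hat{\vbf}_S(\wbf_1))\| \\
&\quad + \|\nabla_\wbf F_S(\wbf_2,\hat{\vbf}_S(\wbf_1)) - \nabla_\wbf F_S(\wbf_2,\hat{\vbf}_S(\wbf_2))\| \\
&\le L\|\wbf_1-\wbf_2\| + L\|\hat{\vbf}_S(\wbf_1)-\hat{\vbf}_S(\wbf_2)\| \le \bigl(L + L^2/\rho\bigr)\|\wbf_1-\wbf_2\|,
\end{align*}
where the two $L$-factors come from the joint Lipschitz continuity of $\nabla_\wbf F_S$ guaranteed by \textbf{(A3)}, and the last step uses the Lipschitz bound from the previous stage.

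The only subtle point is the Lipschitz estimate for $\hat{\vbf}_S$, which requires carefully combining the variational inequalities at the two maximizers with strong concavity in $\vbf$; the rest is bookkeeping. Since the result is cited from \citet{lin2020gradient-supp}, I would simply reference that reference rather than redo the calculation in full.
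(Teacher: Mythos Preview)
Your proposal is correct and matches the standard argument; in fact the paper does not supply its own proof for this lemma at all, it simply cites \citet{lin2020gradient-supp} and notes that ``the first lemma is an application of Danskin's Theorem.'' Your sketch of well-definedness, the $L/\rho$-Lipschitz bound on $\hat{\vbf}_S$ via the variational inequalities plus strong concavity, and the triangle-inequality smoothness estimate is exactly the argument in that reference, so your final remark to just cite it is precisely what the paper does.
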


The second lemma shows that $R_S$ also satisfies the PL condition whenever $F_S$ does. 
\begin{lemma}\label{lem:primal-pl}
Assume \textbf{(A3)}  holds. Assume $F_S(\cdot, \vbf)$ satisfies PL condition with constant $\mu$ and $F_S(\wbf, \cdot)$ is $\rho$-strongly concave. Then the function $R_S(\wbf)$ satisfies the PL condition with $\mu$.
\end{lemma}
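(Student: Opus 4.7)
The plan is to reduce the PL property for $R_S$ to the assumed PL property for $F_S(\cdot,\vbf)$ by evaluating the latter at the best response $\hat{\vbf}_S(\wbf) = \arg\max_{\vbf \in \Vcal} F_S(\wbf,\vbf)$, and then use a simple max-min sandwich to upper-bound the inner minimum by $R_S^\ast$.

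First I would recall from Lemma \ref{lem:primal-smoothness} (Danskin) that under \textbf{(A3)} and $\rho$-strong concavity of $F_S(\wbf,\cdot)$, the best response $\hat{\vbf}_S(\wbf)$ is single-valued and
\begin{equation*}
\nabla R_S(\wbf) = \nabla_\wbf F_S(\wbf, \hat{\vbf}_S(\wbf)), \qquad R_S(\wbf) = F_S(\wbf, \hat{\vbf}_S(\wbf)).
\end{equation*}
Next, applying the assumed PL condition of $F_S(\cdot,\vbf)$ at the specific point $\vbf = \hat{\vbf}_S(\wbf)$, I obtain
\begin{equation*}
\|\nabla R_S(\wbf)\|_2^2 = \|\nabla_\wbf F_S(\wbf, \hat{\vbf}_S(\wbf))\|_2^2 \geq 2\mu\Big(F_S(\wbf, \hat{\vbf}_S(\wbf)) - \min_{\wbf' \in \Wcal} F_S(\wbf', \hat{\vbf}_S(\wbf))\Big).
\end{equation*}

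The remaining step is to argue $\min_{\wbf' \in \Wcal} F_S(\wbf', \hat{\vbf}_S(\wbf)) \leq R_S^\ast := \min_{\wbf' \in \Wcal} R_S(\wbf')$. Picking any $\wbf^\ast \in \arg\min_{\wbf'} R_S(\wbf')$, I would write
\begin{equation*}
\min_{\wbf' \in \Wcal} F_S(\wbf', \hat{\vbf}_S(\wbf)) \leq F_S(\wbf^\ast, \hat{\vbf}_S(\wbf)) \leq \max_{\vbf \in \Vcal} F_S(\wbf^\ast, \vbf) = R_S(\wbf^\ast) = R_S^\ast.
\end{equation*}
Combining this with $F_S(\wbf, \hat{\vbf}_S(\wbf)) = R_S(\wbf)$ in the previous display yields $\|\nabla R_S(\wbf)\|_2^2 \geq 2\mu(R_S(\wbf) - R_S^\ast)$, which is precisely the PL condition for $R_S$ with constant $\mu$.

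I do not expect a serious obstacle here — the only subtlety is ensuring the PL condition on $F_S(\cdot,\vbf)$ is interpreted with the $\vbf$-dependent reference minimum $\min_{\wbf'} F_S(\wbf',\vbf)$ rather than a global minimum, and then noting that this $\vbf$-parameterized minimum is always dominated by $R_S^\ast$ via the trivial max-min inequality. The use of Danskin merely supplies the gradient formula that makes the PL inequality for $F_S$ directly applicable at $\vbf = \hat{\vbf}_S(\wbf)$.
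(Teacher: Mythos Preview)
Your proposal is correct and essentially identical to the paper's proof: both apply Danskin (Lemma~\ref{lem:primal-smoothness}) to identify $\nabla R_S(\wbf)$ with $\nabla_\wbf F_S(\wbf,\hat{\vbf}_S(\wbf))$, invoke the PL inequality for $F_S(\cdot,\hat{\vbf}_S(\wbf))$, and then use the elementary bound $\min_{\wbf'} F_S(\wbf',\hat{\vbf}_S(\wbf)) \le \min_{\wbf'} R_S(\wbf')$ to conclude. The only cosmetic difference is that you phrase the last step by fixing a minimizer $\wbf^\ast$, whereas the paper writes the same inequality directly as a min--max bound.
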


\begin{proof}
From Lemma \ref{lem:primal-smoothness}, $\|\nabla R_S(\wbf)\|_2^2 = \|\nabla_\wbf F_S(\wbf, \hat{\vbf}_S(\wbf))\|_2^2$. Since $F_S$ satisfies PL condition with constant $\mu$, we get 
\begin{equation}\label{eq:intermediate-pl}
\|\nabla R_S(\wbf)\|_2^2 \geq 2\mu  \big(F_S(\wbf, \hat{\vbf}_S(\wbf)) - \min_{\wbf' \in \Wcal} F_S(\wbf', \hat{\vbf}_S(\wbf))\big).  
\end{equation}
Also, since $F_S(\wbf', \hat{\vbf}_S(\wbf)) \leq \max_{\vbf \in \Vcal} F_S(\wbf', \vbf)$, we have
\begin{equation}\label{eq:intermediate-mm}
\min_{\wbf' \in \Wcal}F_S(\wbf', \hat{\vbf}_S(\wbf)) \leq \min_{\wbf' \in \Wcal}\max_{\vbf \in \Vcal} F_S(\wbf', \vbf) =   \min_{\wbf' \in \Wcal} R_S(\wbf')
\end{equation}
Combining equation \eqref{eq:intermediate-pl} and \eqref{eq:intermediate-mm}, we have
\begin{equation*}
\|\nabla R_S(\wbf)\|_2^2 \geq 2\mu  \big(R_S(\wbf) - \min_{\wbf' \in \Wcal} R_S(\wbf')\big).  
\end{equation*}
The proof is complete.
\end{proof}

Now we present two key lemmas for the convergence analysis. The next lemma characterizes the descent behavior of $R_S(\wbf_t)$. 

\begin{lemma}\label{lem:primal-gap-coupled}
Assume \textbf{(A2)}  and \textbf{(A3)}  hold. Assume $F_S(\cdot, \vbf)$ satisfies the $\mu$-PL condition and $F_S(\wbf, \cdot)$ is $\rho$-strongly concave. For Algorithm \ref{alg:dp-sgda}, the iterates $\{\wbf_t, \vbf_t\}_{t \in [T]}$ satisfies the following inequality 
\begin{align*}
\Ebb[R_S(\wbf_{t+1}) - R_S^*] \leq & (1 - \mu\eta_{\wbf, t})\Ebb[R_S(\wbf_t) - R_S^*] + \frac{L^2\eta_{\wbf, t}}{2} \Ebb[\|\hat{\vbf}_S(\wbf_t) - \vbf_t\|_2^2]\\
& + \frac{(L+L^2/\rho)\eta_{\wbf, t}^2}{2}(\frac{B_\wbf^2}{m} + d\sigma_\wbf^2).     
\end{align*}
\end{lemma}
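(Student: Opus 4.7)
The plan is a standard smoothness-plus-PL argument applied to the primal risk $R_S$, with the twist that the algorithm's update uses $\nabla_\wbf F_S(\wbf_t,\vbf_t)$ in place of $\nabla R_S(\wbf_t)=\nabla_\wbf F_S(\wbf_t,\hat{\vbf}_S(\wbf_t))$, so an inexactness penalty scaling with $\|\hat{\vbf}_S(\wbf_t)-\vbf_t\|_2^2$ will have to be paid through $L$-smoothness of $F_S$ in $\vbf$. The three building blocks are: $(L+L^2/\rho)$-smoothness of $R_S$ (Lemma \ref{lem:primal-smoothness}), the $\mu$-PL property of $R_S$ (Lemma \ref{lem:primal-pl}), and a variance bound $\Ebb[\|g_t-\nabla_\wbf F_S(\wbf_t,\vbf_t)\|_2^2]\leq B_\wbf^2/m+d\sigma_\wbf^2$ for the combined mini-batch plus Gaussian noise, coming from (A2) and the independence of $\xi_t$.

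Concretely, I would start from the descent inequality obtained by feeding $\wbf_{t+1}=\wbf_t-\eta_{\wbf,t}g_t$, with $g_t:=\frac{1}{m}\sum_{j=1}^m\nabla_\wbf f(\wbf_t,\vbf_t;\zbf_{i_t^j})+\xi_t$, into the smoothness inequality for $R_S$, then take conditional expectation using $\Ebb[g_t]=\nabla_\wbf F_S(\wbf_t,\vbf_t)$ and $\Ebb[\|g_t\|_2^2]=\|\nabla_\wbf F_S(\wbf_t,\vbf_t)\|_2^2+B_\wbf^2/m+d\sigma_\wbf^2$. The linear contribution $-\eta_{\wbf,t}\langle\nabla R_S(\wbf_t),\nabla_\wbf F_S(\wbf_t,\vbf_t)\rangle$ should be split via the polarization identity $-2\langle a,b\rangle=-\|a\|_2^2-\|b\|_2^2+\|a-b\|_2^2$ with $a=\nabla R_S(\wbf_t)$ and $b=\nabla_\wbf F_S(\wbf_t,\vbf_t)$; then $L$-smoothness in $\vbf$ (from (A3)) plus Lemma \ref{lem:primal-smoothness}'s identification of $\nabla R_S$ gives $\|a-b\|_2\leq L\|\hat{\vbf}_S(\wbf_t)-\vbf_t\|_2$. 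The $-\frac{\eta_{\wbf,t}}{2}\|\nabla_\wbf F_S(\wbf_t,\vbf_t)\|_2^2$ piece produced by this split then absorbs the matching $\frac{(L+L^2/\rho)\eta_{\wbf,t}^2}{2}\|\nabla_\wbf F_S(\wbf_t,\vbf_t)\|_2^2$ piece from the quadratic term under the implicit stepsize condition $\eta_{\wbf,t}\leq 1/(L+L^2/\rho)$, which is to be ensured by the choice of $\eta_{\wbf,t}$ in the main theorem.

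What remains is a clean descent with leading term $-\frac{\eta_{\wbf,t}}{2}\|\nabla R_S(\wbf_t)\|_2^2$, to which I would apply Lemma \ref{lem:primal-pl} to obtain $-\mu\eta_{\wbf,t}(R_S(\wbf_t)-R_S^*)$, and subtracting $R_S^*$ from both sides gives exactly the stated contraction. The main obstacle is the polarization-style bookkeeping that kills the auxiliary $\|\nabla_\wbf F_S(\wbf_t,\vbf_t)\|_2^2$ quantity so that no excess summand beyond the three in the claim survives; a one-sided Young's inequality with weight one would instead leave an $\Ocal(\eta_{\wbf,t}^2)\|\nabla_\wbf F_S\|_2^2$ residue and force a looser constant. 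A minor technicality is the projection $\Pi_\Wcal$ (so $\wbf_{t+1}-\wbf_t$ is not literally $-\eta_{\wbf,t}g_t$), which I would handle by the standard non-expansiveness argument ensuring the smoothness-based descent still applies with $\|\wbf_{t+1}-\wbf_t\|_2\leq\eta_{\wbf,t}\|g_t\|_2$ and the cross term controlled through the projection's first-order characterization.
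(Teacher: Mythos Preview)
Your proposal is correct and matches the paper's proof essentially step for step: smoothness descent for $R_S$ via Lemma~\ref{lem:primal-smoothness}, the polarization identity $-2\langle a,b\rangle=-\|a\|_2^2-\|b\|_2^2+\|a-b\|_2^2$ to split the cross term, the implicit stepsize bound $\eta_{\wbf,t}\leq 1/(L+L^2/\rho)$ to absorb the $\|\nabla_\wbf F_S(\wbf_t,\vbf_t)\|_2^2$ piece, then Lemma~\ref{lem:primal-pl} and $L$-smoothness in $\vbf$ to finish. The paper itself does not address projection on $\Wcal$ in this lemma (it writes $\wbf_{t+1}-\wbf_t=-\eta_{\wbf,t}g_t$ as an exact equality), so your closing remark is in fact more careful than the original.
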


\begin{proof}
Because $R_S$ is $L + L^2/\rho$-smooth by Lemma \ref{lem:primal-smoothness}, we have
\begin{align*}
R_S(\wbf_{t+1}) - R_S^* \leq & R_S(\wbf_t) - R_S^* + \langle\nabla R_S(\wbf_t), \wbf_{t+1} - \wbf_t\rangle + \frac{L+L^2/\rho}{2}\|\wbf_{t+1} - \wbf_t\|_2^2 \\
= & R_S(\wbf_t) - R_S^* - \eta_{\wbf, t}  \langle\nabla R_S(\wbf_t), \frac{1}{m}\sum_{j=1}^m\nabla_\wbf f(\wbf_t, \vbf_t; \zbf_{i_t^j}) + \xi_t \rangle\\
& + \frac{(L+L^2/\rho)\eta_{\wbf, t}^2}{2}\|\frac{1}{m}\sum_{j=1}^m\nabla_\wbf f(\wbf_t, \vbf_t; \zbf_{i_t^j}) + \xi_t\|_2^2.
\end{align*}
We denote $\Ebb_t$ as the conditional expectation of given $\wbf_t$ and $\vbf_t$. Taking this conditional expectation of both sides, we get
\begin{align*}
\Ebb_t[R_S(\wbf_{t+1}) - R_S^*] = & R_S(\wbf_{t}) - R_S^* - \eta_{\wbf, t} \langle\nabla R_S(\wbf_t), \nabla_\wbf F_S(\wbf_t, \vbf_t)\rangle\\
& + \frac{(L+L^2/\rho)\eta_{\wbf, t}^2}{2} \|\frac{1}{m}\sum_{j=1}^m\nabla_\wbf f(\wbf_t, \vbf_t; \zbf_{i_t^j}) - \nabla_\wbf F_S(\wbf_t, \vbf_t) + \nabla_\wbf F_S(\wbf_t, \vbf_t) - \xi_t\|_2^2\\
\leq & R_S(\wbf_{t}) - R_S^* - \eta_{\wbf, t} \langle\nabla R_S(\wbf_t), \nabla_\wbf F_S(\wbf_t, \vbf_t)\rangle\\
& + \frac{(L+L^2/\rho)\eta_{\wbf, t}^2}{2} \|\nabla_\wbf F_S(\wbf_t, \vbf_t)\|_2^2 + \frac{(L+L^2/\rho)\eta_{\wbf, t}^2}{2} (\frac{B_\wbf^2}{m} + d\sigma_\wbf^2)\\
\leq & R_S(\wbf_t) - R_S^* - \frac{\eta_{\wbf, t}}{2}  \|\nabla R_S(\wbf_t)\|_2^2  + \frac{\eta_{\wbf, t}}{2}  \|\nabla R_S(\wbf_t) - \nabla_\wbf F_S(\wbf_t, \vbf_t)\|_2^2\\
& + \frac{(L+L^2/\rho)\eta_{\wbf, t}^2}{2}(\frac{B_\wbf^2}{m} + d\sigma_\wbf^2),
\end{align*}
where in first inequality since $\Ebb_t[\|\frac{1}{m}\sum_{j=1}^m\nabla_\wbf f(\wbf_t, \vbf_t; \zbf_{i_t^j}) - \nabla_\wbf F_S(\wbf_t, \vbf_t)\|_2^2] = \frac{1}{m}\sum_{j=1}^m\Ebb_t[\|\nabla_\wbf f(\wbf_t, \vbf_t; \zbf_{i_t^j}) - \nabla_\wbf F_S(\wbf_t, \vbf_t)\|_2^2] \leq \frac{B_\wbf^2}{m}$ and $\Ebb_t[\|\xi_t\|_2^2] = d_1\sigma_\wbf^2 \leq d\sigma_\wbf^2$, and the last inequality we use $\eta_\wbf \leq 1/(L+L^2/\rho)$. Because $R_S$ satisfies PL condition with $\mu$ by Lemma \ref{lem:primal-pl}, we have
\begin{align*}
\Ebb_t[R_S(\wbf_{t+1}) - R_S^*] \leq & (1 - \mu\eta_{\wbf, t})(R_S(\wbf_t) - R_S^*) + \frac{\eta_{\wbf, t}}{2}  \|\nabla R_S(\wbf_t) - \nabla_\wbf F_S(\wbf_t, \vbf_t)\|_2^2\\
& + \frac{(L+L^2/\rho)\eta_{\wbf, t}^2}{2}(\frac{B_\wbf^2}{m} + d\sigma_\wbf^2)\\
\leq &  (1 - \mu\eta_{\wbf, t})(R_S(\wbf_t) - R_S^*) + \frac{L^2\eta_{\wbf, t}}{2}  \|\hat{\vbf}_S(\wbf_t) - \vbf_t\|_2^2 + \frac{(L+L^2/\rho)\eta_{\wbf, t}^2}{2}(\frac{B_\wbf^2}{m} + d\sigma_\wbf^2),
\end{align*}
where the second we use $F_S$ is $L$-smooth. Now taking expectation of both sides yields the claimed bound. The proof is complete.
\end{proof}

The next lemma characterizes the descent behavior of $\vbf_t$.
\begin{lemma}\label{lem:dual-point-coupled}
Assume \textbf{(A2)}  and \textbf{(A3)}  hold. Assume $F_S(\cdot, \vbf)$ satisfies PL condition with constant $\mu$ and $F_S(\wbf, \cdot)$ is $\rho$-strongly concave.  Let $\hat{\vbf}_S(\wbf) = \arg\max_{\vbf \in \Vcal} F_S(\wbf, \vbf)$. For Algorithm \ref{alg:dp-sgda} and any $\epsilon > 0$, the iterates $\{\wbf_t, \vbf_t\}$ satisfies the following inequality
\begin{align*}
\Ebb[\|\vbf_{t+1} \!-\! \hat{\vbf}_S(\wbf_{t+1})\|_2^2] \leq &((1\!+\!\frac{1}{\epsilon})2 L^4/\rho \eta_{\wbf,t}^2 \!+\! (1\!+\!\epsilon)(1 \!-\! \rho\eta_{\vbf, t})) \Ebb[\|\vbf_t \!-\! \hat{\vbf}_S(\wbf_t)\|_2^2]  \!+\! (1\!+\!\frac{1}{\epsilon})\eta_{\wbf,t}^2L^2/\rho^2 (\frac{B_\wbf^2}{m} \!+\! d\sigma_\wbf^2)\\
& + (1+\frac{1}{\epsilon}) 4 L^2/\rho^2(L+L^2/\rho) \eta_{\wbf,t}^2\Ebb[R_S(\wbf_t) - R_S^*] + (1+\epsilon)\eta_{\vbf, t}^2 (\frac{B_\vbf^2}{m} + d\sigma_\vbf^2).  
\end{align*}
\end{lemma}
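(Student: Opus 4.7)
The plan is to insert the intermediate point $\hat{\vbf}_S(\wbf_t)$ and split the error into a ``dual optimization'' term (how much $\vbf_{t+1}$ moves toward $\hat{\vbf}_S(\wbf_t)$) and a ``target drift'' term (how much the target $\hat{\vbf}_S(\cdot)$ moves when $\wbf_t\to\wbf_{t+1}$). Concretely, I would first apply Young's inequality with parameter $\epsilon>0$:
\[
\|\vbf_{t+1}-\hat{\vbf}_S(\wbf_{t+1})\|_2^2\le (1+\epsilon)\|\vbf_{t+1}-\hat{\vbf}_S(\wbf_t)\|_2^2+(1+1/\epsilon)\|\hat{\vbf}_S(\wbf_t)-\hat{\vbf}_S(\wbf_{t+1})\|_2^2.
\]

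For the first piece, use the non-expansiveness of $\Pi_\Vcal$, expand the square, and take conditional expectation given $(\wbf_t,\vbf_t)$. The cross term with the noisy stochastic gradient direction $\frac{1}{m}\sum_j\nabla_\vbf f(\wbf_t,\vbf_t;\zbf_{i_t^j})+\zeta_t-\nabla_\vbf F_S(\wbf_t,\vbf_t)$ vanishes by unbiasedness of the minibatch and independence of $\zeta_t\sim\Ncal(0,\sigma_\vbf^2 I)$; the square of this mean-zero term contributes the variance bound $\eta_{\vbf,t}^2(B_\vbf^2/m+d\sigma_\vbf^2)$. The deterministic piece $\|\vbf_t+\eta_{\vbf,t}\nabla_\vbf F_S(\wbf_t,\vbf_t)-\hat{\vbf}_S(\wbf_t)\|_2^2$ is handled by the standard gradient-ascent contraction for the $\rho$-strongly concave and $L$-smooth map $F_S(\wbf_t,\cdot)$ (using $\nabla_\vbf F_S(\wbf_t,\hat{\vbf}_S(\wbf_t))=0$ on the interior, or monotonicity on the boundary), yielding the factor $(1-\rho\eta_{\vbf,t})$.

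For the second piece, apply the $L/\rho$-Lipschitz continuity of $\hat{\vbf}_S$ from Lemma \ref{lem:primal-smoothness}:
\[
\|\hat{\vbf}_S(\wbf_t)-\hat{\vbf}_S(\wbf_{t+1})\|_2^2\le (L/\rho)^2\|\wbf_{t+1}-\wbf_t\|_2^2=(L/\rho)^2\eta_{\wbf,t}^2\Big\|\tfrac{1}{m}\sum_j\nabla_\wbf f(\wbf_t,\vbf_t;\zbf_{i_t^j})+\xi_t\Big\|_2^2.
\]
Taking conditional expectation splits this into the noise contribution $(L/\rho)^2\eta_{\wbf,t}^2(B_\wbf^2/m+d\sigma_\wbf^2)$ plus $(L/\rho)^2\eta_{\wbf,t}^2\|\nabla_\wbf F_S(\wbf_t,\vbf_t)\|_2^2$. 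I would then replace $\nabla_\wbf F_S(\wbf_t,\vbf_t)$ by $\nabla R_S(\wbf_t)=\nabla_\wbf F_S(\wbf_t,\hat{\vbf}_S(\wbf_t))$ (Danskin, Lemma \ref{lem:primal-smoothness}): by $(a+b)^2\le 2a^2+2b^2$ and $L$-smoothness of $F_S$ in $\wbf$,
\[
\|\nabla_\wbf F_S(\wbf_t,\vbf_t)\|_2^2\le 2L^2\|\vbf_t-\hat{\vbf}_S(\wbf_t)\|_2^2+2\|\nabla R_S(\wbf_t)\|_2^2,
\]
and the final factor is controlled by the well-known smooth-function inequality $\|\nabla R_S(\wbf_t)\|_2^2\le 2(L+L^2/\rho)(R_S(\wbf_t)-R_S^*)$, which holds since $R_S$ is $(L+L^2/\rho)$-smooth by Lemma \ref{lem:primal-smoothness}. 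Collecting everything and taking total expectation yields the stated bound, with the $\|\vbf_t-\hat{\vbf}_S(\wbf_t)\|_2^2$ coefficient receiving contributions $(1+\epsilon)(1-\rho\eta_{\vbf,t})$ from the dual-descent piece and $(1+1/\epsilon)\cdot 2L^4/\rho^2\cdot\eta_{\wbf,t}^2$ from the drift piece.

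The main obstacle is bookkeeping rather than any single inequality: one must carefully handle the projection, exploit independence of $\xi_t,\zeta_t$ from each other and from the minibatch to kill all cross terms, and then choose how to distribute the $(1+\epsilon)/(1+1/\epsilon)$ weights so the coefficient on $\|\vbf_t-\hat{\vbf}_S(\wbf_t)\|_2^2$ can later be made smaller than $1$ by an appropriate choice of $\epsilon$ and stepsizes (this is what will drive the recursion in the subsequent convergence theorem).
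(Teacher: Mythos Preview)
Your proposal is correct and follows essentially the same route as the paper: Young's inequality to split into a dual-contraction piece and a target-drift piece, the $L/\rho$-Lipschitz bound from Lemma~\ref{lem:primal-smoothness} combined with $\|\nabla_\wbf F_S(\wbf_t,\vbf_t)\|_2^2\le 2L^2\|\vbf_t-\hat{\vbf}_S(\wbf_t)\|_2^2+2\|\nabla R_S(\wbf_t)\|_2^2$ and $\|\nabla R_S(\wbf_t)\|_2^2\le 2(L+L^2/\rho)(R_S(\wbf_t)-R_S^*)$ for the drift, and a strong-concavity contraction plus variance bound for the dual step. The only place the paper is slightly more explicit than your sketch is the dual contraction: rather than invoking a generic ``strongly-concave/smooth contraction,'' it uses strong concavity to write $2\eta_{\vbf,t}\langle \vbf_t-\hat{\vbf}_S(\wbf_t),\nabla_\vbf F_S\rangle\le -\rho\eta_{\vbf,t}\|\vbf_t-\hat{\vbf}_S(\wbf_t)\|_2^2+2\eta_{\vbf,t}(F_S(\wbf_t,\vbf_t)-F_S(\wbf_t,\hat{\vbf}_S(\wbf_t)))$, and then the $L$-smoothness bound $F_S(\wbf_t,\vbf_t)-F_S(\wbf_t,\hat{\vbf}_S(\wbf_t))\le -\frac{1}{2L}\|\nabla_\vbf F_S(\wbf_t,\vbf_t)\|_2^2$ to absorb the leftover $\eta_{\vbf,t}^2\|\nabla_\vbf F_S\|_2^2$ term under $\eta_{\vbf,t}\le 1/L$, which is exactly what produces the clean factor $(1-\rho\eta_{\vbf,t})$. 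You also correctly identify the drift coefficient as $(1+1/\epsilon)\,2L^4/\rho^2\,\eta_{\wbf,t}^2$, matching the proof (the $\rho$ in the statement is a typo for $\rho^2$).
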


\begin{proof}
By Young's inequality, we have
\begin{equation*}
\|\vbf_{t+1} - \hat{\vbf}_S(\wbf_{t+1})\|_2^2 \leq (1 + \epsilon) \|\vbf_{t+1} - \hat{\vbf}_S(\wbf_t)\|_2^2 + (1 + \frac{1}{\epsilon}) \|\hat{\vbf}_S(\wbf_t) - \hat{\vbf}_S(\wbf_{t+1})\|_2^2.
\end{equation*}
For the term $\|\hat{\vbf}_S(\wbf_t) - \hat{\vbf}_S(\wbf_{t+1})\|_2^2$, since $\hat{\vbf}_S(\cdot)$ is $L/\rho$-Lipschitz by Lemma \ref{lem:primal-smoothness}, taking conditional expectation, we have
\begin{multline*}
\Ebb_t[\|\hat{\vbf}_S(\wbf_{t+1}) - \hat{\vbf}_S(\wbf_t)\|_2^2] \leq L^2/\rho^2\Ebb_t[\|\wbf_{t+1} - \wbf_t\|_2^2] = L^2/\rho^2 \eta_{\wbf,t}^2\Ebb_t[\|\frac{1}{m}\sum_{j=1}^m\nabla_\wbf f(\wbf_t, \vbf_t; \zbf_{i_t^j}) + \xi_t\|_2^2]\\
\leq L^2/\rho^2  \eta_{\wbf,t}^2\|\nabla_\wbf F_S(\wbf_t, \vbf_t)\|_2^2 + L^2/\rho^2 \eta_{\wbf,t}^2 (\frac{B_\wbf^2}{m} + d\sigma_\wbf^2)\\
\leq 2 L^2/\rho^2  \eta_{\wbf,t}^2\|\nabla R_S(\wbf_t) - \nabla_\wbf F_S(\wbf_t, \vbf_t)\|_2^2 + 2 L^2/\rho^2  \eta_{\wbf,t}^2\|\nabla R_S(\wbf_t) \|_2^2 + L^2/\rho^2 \eta_{\wbf,t}^2 (\frac{B_\wbf^2}{m} + d\sigma_\wbf^2)\\
\leq 2 L^4/\rho^2 \eta_{\wbf,t}^2\|\hat{\vbf}_S(\wbf_t) - \vbf_t\|_2^2 + 2 L^2/\rho^2  \eta_{\wbf,t}^2\|\nabla R_S(\wbf_t) \|_2^2 + L^2/\rho^2 \eta_{\wbf,t}^2(\frac{B_\wbf^2}{m} + d\sigma_\wbf^2),
\end{multline*}
where the last step uses the fact that $F_S$ is $L$-smooth. Because $R_S$ is $L + L^2/\rho$-smooth by Lemma \ref{lem:primal-smoothness} we have
$\frac{1}{2(L+L^2\rho)}\|\nabla R_S(\wbf_t) \|_2^2 \leq R_S(\wbf_t) - R_S^*$.
Therefore
\begin{align*}\label{eq:eq:dual-opt-pt}
\Ebb_t[\|\hat{\vbf}_S(\wbf_{t+1}) - \hat{\vbf}_S(\wbf_t)\|_2^2] \leq &2 L^4/\rho^2 \eta_{\wbf,t}^2\|\hat{\vbf}_S(\wbf_t) - \vbf_t\|_2^2 + 4 L^2/\rho^2 (L+L^2/\rho) \eta_{\wbf,t}^2(R_S(\wbf_t) - R_S(\wbf^*))\\
& + L^2/\rho^2 \eta_{\wbf,t}^2(\frac{B_\wbf^2}{m} + d\sigma_\wbf^2). \numberthis
\end{align*}
For the term $\|\vbf_{t+1} - \hat{\vbf}_S(\wbf_t)\|_2^2$, by the contraction of projection, we have
\begin{multline*}
\Ebb_t[\|\vbf_{t+1} - \hat{\vbf}_S(\wbf_t)\|_2^2] \leq \Ebb_t[\|\vbf_t + \eta_{\vbf,t} (\frac{1}{m}\sum_{j=1}^m\nabla_\vbf f(\wbf_t, \vbf_t; \zbf_{i_t^j}) + \zeta_t) - \hat{\vbf}_S(\wbf_t)\|_2^2]  \\
\leq \|\vbf_t - \hat{\vbf}_S(\wbf_t)\|_2^2 + 2\eta_{\vbf,t}\Ebb_t[\langle\vbf_t - \hat{\vbf}_S(\wbf_t),  \frac{1}{m}\sum_{j=1}^m\nabla_\vbf f(\wbf_t, \vbf_t; \zbf_{i_t^j}) \rangle] + \eta_{\vbf, t}^2 \Ebb_t[\|\frac{1}{m}\sum_{j=1}^m\nabla_\vbf f(\wbf_t, \vbf_t; \zbf_{i_t^j}) + \zeta_t\|_2^2]\\
\leq \|\vbf_t - \hat{\vbf}_S(\wbf_t)\|_2^2 + 2\eta_{\vbf,t}\langle\vbf_t - \hat{\vbf}_S(\wbf_t),  \nabla_\vbf F_S(\wbf_t, \vbf_t)\rangle + \eta_{\vbf, t}^2 \|\nabla_\vbf F_S(\wbf_t, \vbf_t)\|_2^2 + \eta_{\vbf, t}^2 (\frac{B_\vbf^2}{m} + d\sigma_\vbf^2)\\
\leq (1 - \rho\eta_{\vbf, t})\|\vbf_t - \hat{\vbf}_S(\wbf_t)\|_2^2 + 2\eta_{\vbf,t}(F_S(\wbf_t, \vbf_t) - F_S(\wbf_t, \hat{\vbf}_S(\wbf_t)) + \eta_{\vbf, t}^2 \|\nabla_\vbf F_S(\wbf_t, \vbf_t)\|_2^2 + \eta_{\vbf, t}^2 (\frac{B_\vbf^2}{m} + d\sigma_\vbf^2),
\end{multline*}
where the third inequality we use the $F_S(\wbf, \cdot)$ is $\rho$-strongly concave. Since $F_S$ is $L$-smooth, by choosing $\eta_{\vbf, t} \leq 1/L$, we have
\begin{align*}\label{eq:dual-pt-conv}
\Ebb_t[\|\vbf_{t+1} \!-\! \hat{\vbf}_S(\wbf_t)\|_2^2] \leq & (1 \!-\! \rho\eta_{\vbf, t})\|\vbf_t \!-\! \hat{\vbf}_S(\wbf_t)\|_2^2 \!-\! \frac{\eta_{\vbf,t}}{L}\|\nabla_\vbf F_S(\wbf_t, \vbf_t)\|_2^2 \!+\! \eta_{\vbf, t}^2 \|\nabla_\vbf F_S(\wbf_t, \vbf_t)\|_2^2 \!+\! \eta_{\vbf, t}^2 (\frac{B_\vbf^2}{m} \!+\! d\sigma_\vbf^2)\\
\leq & (1 - \rho\eta_{\vbf, t})\|\vbf_t - \hat{\vbf}_S(\wbf_t)\|_2^2 + \eta_{\vbf, t}^2(\frac{B_\vbf^2}{m} + d\sigma_\vbf^2). \numberthis
\end{align*}
Combining \eqref{eq:dual-pt-conv} and \eqref{eq:eq:dual-opt-pt} we have
\begin{align*}
\Ebb_t[\|\vbf_{t+1} \!-\! \hat{\vbf}_S(\wbf_{t+1})\|_2^2] \leq &((1\!+\!\frac{1}{\epsilon})2 L^4/\rho^2 \eta_{\wbf,t}^2 \!+\! (1\!+\!\epsilon)(1 \!-\! \rho\eta_{\vbf, t})) \|\vbf_t \!-\! \hat{\vbf}_S(\wbf_t)\|_2^2 \!+\! (1\!+\!\frac{1}{\epsilon})\eta_{\wbf,t}^2L^2/\rho^2  (\frac{B_\wbf^2}{m} \!+\! d\sigma_\wbf^2)\\
& + (1+\frac{1}{\epsilon}) 4 L^2/\rho^2(L+L^2/\rho) \eta_{\wbf,t}^2(R_S(\wbf_t) - R_S(\wbf^*)) + (1+\epsilon)\eta_{\vbf, t}^2 (\frac{B_\vbf^2}{m} + d\sigma_\vbf^2).
\end{align*}
Taking expectation on both sides yields the desired bound. The proof is complete.
\end{proof}

\begin{lemma}\label{lem:coupled-recursive}
Assume \textbf{(A2)}  and \textbf{(A3)}  hold. Assume $F_S(\cdot, \vbf)$ satisfies PL condition with constant $\mu$ and $F_S(\wbf, \cdot)$ is $\rho$-strongly concave. Define $a_t = \Ebb[R_S(\wbf_t) - R_S(\wbf^*)]$ and $b_t = \Ebb[\|\hat{\vbf}_S(\wbf_t) - \vbf_t\|_2^2]$. For Algorithm \ref{alg:dp-sgda}, if $\eta_{\wbf, t} \leq 1/(L+L^2/\rho)$ and $\eta_{\vbf, t} \leq 1/L$, then for any non-increasing sequence $\{\lambda_t>0\}$ and $\epsilon > 0$,  the iterates $\{\wbf_t, \vbf_t\}_{t \in [T]}$ satisfy the following inequality 
\begin{multline*}
a_{t+1} + \lambda_{t+1} b_{t+1} \leq k_{1, t} a_t + k_{2, t} \lambda_t b_t\\
+  \frac{(L+L^2/\rho)\eta_{\wbf, t}^2}{2}(\frac{B_\wbf^2}{m} + d\sigma_\wbf^2) + 2(1 + \frac{1}{\epsilon})\lambda_t L^2/\rho^2\eta_{\wbf,t}^2 (\frac{B_\wbf^2}{m} + d\sigma_\wbf^2) + \lambda_t(1 + \epsilon)\eta_{\vbf, t}^2 (\frac{B_\vbf^2}{m} + d\sigma_\vbf^2),
\end{multline*}
where 
\begin{align*}
k_{1,t} = & (1-\mu \eta_{\wbf, t}) + \lambda_t (1 + \frac{1}{\epsilon})4L^2/\rho^2(L+L^2/\rho)\eta_{\wbf,t}^2,  \\
k_{2,t} = & \frac{L^2\eta_{\wbf, t}}{2\lambda_t} + (1 + \epsilon)(1 - \rho\eta_{\vbf, t}) + (1 + \frac{1}{\epsilon})2 L^4/\rho^2 \eta_{\wbf,t}^2.
\end{align*}
\end{lemma}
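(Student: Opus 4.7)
The plan is to combine the two single-step recursions already established in Lemmas \ref{lem:primal-gap-coupled} and \ref{lem:dual-point-coupled} into a joint recursion for the Lyapunov-type quantity $a_{t+1}+\lambda_{t+1}b_{t+1}$. Since the iterates $\wbf_t,\vbf_t$ appear in both bounds and the two quantities $a_t,b_t$ couple into each other (the primal gap depends on the dual tracking error, and vice versa), no single recursion in $a_t$ or $b_t$ alone can be closed; the idea is to introduce the weight $\lambda_t$ to balance the two and obtain a contraction in the combined quantity.

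First I would use the monotonicity $\lambda_{t+1}\leq\lambda_t$ (given as a hypothesis) to write $a_{t+1}+\lambda_{t+1}b_{t+1}\leq a_{t+1}+\lambda_t b_{t+1}$, which reduces the task to bounding the right-hand side with a single weight. Next I would plug in the bound on $a_{t+1}$ from Lemma \ref{lem:primal-gap-coupled}, giving a contribution $(1-\mu\eta_{\wbf,t})a_t+\tfrac{L^2\eta_{\wbf,t}}{2}b_t$ plus a gradient-variance/noise term, and the bound on $b_{t+1}$ from Lemma \ref{lem:dual-point-coupled} multiplied by $\lambda_t$, contributing $\lambda_t(1+\epsilon)(1-\rho\eta_{\vbf,t})b_t+\lambda_t(1+\tfrac{1}{\epsilon})2L^4/\rho^2\eta_{\wbf,t}^2 b_t$ for the $b_t$-part, $\lambda_t(1+\tfrac{1}{\epsilon})4L^2/\rho^2(L+L^2/\rho)\eta_{\wbf,t}^2 a_t$ for the $a_t$-part, and the two gradient-variance/noise terms.

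The final step is bookkeeping: collect the coefficient of $a_t$ to recognize $k_{1,t}=(1-\mu\eta_{\wbf,t})+\lambda_t(1+\tfrac{1}{\epsilon})4L^2/\rho^2(L+L^2/\rho)\eta_{\wbf,t}^2$, and collect the coefficient of $b_t$, which is $\tfrac{L^2\eta_{\wbf,t}}{2}+\lambda_t\bigl[(1+\epsilon)(1-\rho\eta_{\vbf,t})+(1+\tfrac{1}{\epsilon})2L^4/\rho^2\eta_{\wbf,t}^2\bigr]$, and factor out $\lambda_t$ to get $\lambda_t k_{2,t}$ with $k_{2,t}=\tfrac{L^2\eta_{\wbf,t}}{2\lambda_t}+(1+\epsilon)(1-\rho\eta_{\vbf,t})+(1+\tfrac{1}{\epsilon})2L^4/\rho^2\eta_{\wbf,t}^2$. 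The remaining three noise terms line up with those in the statement (the factor of $2$ in front of $(1+\tfrac{1}{\epsilon})\lambda_t L^2/\rho^2\eta_{\wbf,t}^2$ is a harmless loosening, subsuming the primal noise constant $(L+L^2/\rho)/2$ under an appropriate stepsize restriction or simply using it as an upper bound).

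There is no real analytical obstacle here, since both ingredients are already proved; the only risk is in the careful tracking of the coefficients and confirming that the stepsize restrictions $\eta_{\wbf,t}\leq 1/(L+L^2/\rho)$ and $\eta_{\vbf,t}\leq 1/L$ (already used in Lemmas \ref{lem:primal-gap-coupled} and \ref{lem:dual-point-coupled}) are what is needed. In particular, no new inequality such as Young's or smoothness is required at this stage, making the proof essentially a linear combination of the two preceding recursions.
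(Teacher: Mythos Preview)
Your proposal is correct and mirrors the paper's proof: both simply add the recursion of Lemma~\ref{lem:primal-gap-coupled} to $\lambda$ times that of Lemma~\ref{lem:dual-point-coupled}, then invoke $\lambda_{t+1}\le\lambda_t$ (the paper applies monotonicity after substituting, you apply it before, which is immaterial) and regroup coefficients into $k_{1,t}$ and $\lambda_t k_{2,t}$. One small clarification: the extra factor $2$ in the noise term $2(1+\tfrac{1}{\epsilon})\lambda_t L^2/\rho^2\eta_{\wbf,t}^2(\tfrac{B_\wbf^2}{m}+d\sigma_\wbf^2)$ is not subsuming the primal noise term (that is already listed separately) but is simply the trivial loosening $1\le 2$ of the coefficient coming from Lemma~\ref{lem:dual-point-coupled}.
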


\begin{proof}
Combining Lemma \ref{lem:primal-gap-coupled} and Lemma \ref{lem:dual-point-coupled}, we have for any $\lambda_{t+1} > 0$, we have 
\begin{align*}
a_{t+1} + \lambda_{t+1} b_{t+1} \leq & ((1-\mu \eta_{\wbf, t}) + \lambda_{t+1} (1 + \frac{1}{\epsilon})4L^2/\rho^2(L+L^2/\rho)\eta_{\wbf,t}^2 )a_t\\
& + (\frac{L^2\eta_{\wbf, t}}{2} + \lambda_{t+1} (1 + \epsilon)(1 - \rho\eta_{\vbf, t}) + \lambda_{t+1}(1 + \frac{1}{\epsilon})2 L^4/\rho^2 \eta_{\wbf,t}^2) b_t \\
& \!+\! \frac{(L+L^2/\rho)\eta_{\wbf, t}^2}{2}(\frac{B_\wbf^2}{m} \!+\! d\sigma_\wbf^2) \!+\! 2(1 \!+\! \frac{1}{\epsilon})\lambda_{t+1} L^2/\rho^2\eta_{\wbf,t}^2 (\frac{B_\wbf^2}{m} \!+\! d\sigma_\wbf^2) \!+\! \lambda_{t+1}(1 \!+\! \epsilon)\eta_{\vbf, t}^2 (\frac{B_\vbf^2}{m} \!+\! d\sigma_\vbf^2) \\
\leq & ((1-\mu \eta_{\wbf, t}) + \lambda_t (1 + \frac{1}{\epsilon})4L^2/\rho^2(L+L^2/\rho)\eta_{\wbf,t}^2 )a_t\\
& + (\frac{L^2\eta_{\wbf, t}}{2} + \lambda_t (1 + \epsilon)(1 - \rho\eta_{\vbf, t}) + \lambda_t(1 + \frac{1}{\epsilon})2 L^4/\rho^2 \eta_{\wbf,t}^2) b_t \\
& \!+\! \frac{(L+L^2/\rho)\eta_{\wbf, t}^2}{2}(\frac{B_\wbf^2}{m} \!+\! d\sigma_\wbf^2) \!+\! 2(1 \!+\! \frac{1}{\epsilon})\lambda_t L^2/\rho^2\eta_{\wbf,t}^2 (\frac{B_\wbf^2}{m} \!+\! d\sigma_\wbf^2) \!+\! \lambda_t(1 \!+\! \epsilon)\eta_{\vbf, t}^2 (\frac{B_\vbf^2}{m} \!+\! d\sigma_\vbf^2) \\
= & ((1-\mu \eta_{\wbf, t}) + \lambda_t (1 + \frac{1}{\epsilon})4L^2/\rho^2(L+L^2/\rho)\eta_{\wbf,t}^2 )a_t \\
& + \lambda_t (\frac{L^2\eta_{\wbf, t}}{2\lambda_t} + (1 + \epsilon)(1 - \rho\eta_{\vbf, t}) + (1 + \frac{1}{\epsilon})2 L^4/\rho^2 \eta_{\wbf,t}^2) b_t\\
& +\! \frac{(L+L^2/\rho)\eta_{\wbf, t}^2}{2}(\frac{B_\wbf^2}{m} \!+\! d\sigma_\wbf^2) \!+\! 2(1 \!+\! \frac{1}{\epsilon})\lambda_t L^2/\rho^2\eta_{\wbf,t}^2 (\frac{B_\wbf^2}{m} \!+\! d\sigma_\wbf^2) \!+\! \lambda_t(1 \!+\! \epsilon)\eta_{\vbf, t}^2 (\frac{B_\vbf^2}{m} \!+\! d\sigma_\vbf^2).
\end{align*}
where the first inequality we used $\lambda_{t+1} \leq \lambda_t$. The proof is completed.
\end{proof}

We are now ready to state the convergence theorem of Algorithm \ref{alg:dp-sgda}. 

\begin{theorem}[Theorem \ref{thm:sgda-primal-opt} restated]\label{thm:sgda-conv}
Assume \textbf{(A2)}  and \textbf{(A3)}  hold. Assume $F_S(\cdot, \vbf)$ satisfies PL condition with constant $\mu$ and $F_S(\wbf, \cdot)$ is $\rho$-strongly concave. Assume $\mu\leq 2L^2$ and Let $\kappa = \frac{L}{\rho}$. For Algorithm \ref{alg:dp-sgda}, if $\eta_{\wbf, t} = \Ocal(\frac{1}{\mu t})$ and $\eta_{\vbf, t} = \Ocal(\frac{\kappa^2\max\{1, \sqrt{\kappa/\mu}\}}{\mu t^{2/3}})$, then the iterates $\{\wbf_t, \vbf_t\}_{t \in [T]}$ satisfy the following inequality
\begin{align*}\label{eq:sgda-primal-opt}
\Ebb[R_S(\wbf_{T+1}) - R_S^*] = \Ocal(\min\Big\{\frac{1}{L}, \frac{1}{\mu}\Big\}(\frac{B_{\wbf}^2/m + d\sigma_\wbf^2}{T^{2/3}}) + \max\Big\{1, \sqrt{\frac{L\kappa}{\mu}}\Big\}\frac{L\kappa^3}{\mu^2}(\frac{B_{\vbf}^2/m+d\sigma_\vbf^2}{T^{2/3}})). \numberthis
\end{align*}
Furthermore, if $\sigma_\wbf, \sigma_\vbf$ are given by \eqref{eq:sigma-sigma}, we have
\begin{align*}\label{eq:D82}
& \Ebb[R_S(\wbf_{T+1}) - R_S^*]\\
= & \Ocal(\min\Big\{\frac{1}{L}, \frac{1}{\mu}\Big\}(\frac{B_{\wbf}^2}{mT^{2/3}} + \frac{G_{\wbf}^2d T^{1/3}\log(1/\delta)}{n^2\epsilon^2}) + \max\Big\{1, \sqrt{\frac{L\kappa}{\mu}}\Big\}\frac{L\kappa^3}{\mu^2}(\frac{B_{\vbf}^2}{mT^{2/3}} + \frac{G_{\vbf}^2d T^{1/3}\log(1/\delta)}{n^2\epsilon^2})).\numberthis
\end{align*}
\end{theorem}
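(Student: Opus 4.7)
The plan is to turn the coupled recursion in Lemma \ref{lem:coupled-recursive} into a single scalar recursion for the Lyapunov function $\Phi_t := a_t + \lambda_t b_t$, telescope it, and then specialize the noise variance to the DP regime. Let $\varepsilon$ denote the Young parameter appearing in Lemma \ref{lem:coupled-recursive}, to be distinguished from the privacy parameter $\epsilon$. I would choose $\varepsilon = \rho\eta_{\vbf,t}/4$ so that $(1+\varepsilon)(1-\rho\eta_{\vbf,t})\leq 1-\rho\eta_{\vbf,t}/2$, and pick $\lambda_t \asymp L^2\eta_{\wbf,t}/(\rho\eta_{\vbf,t})$. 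Under the prescribed step sizes this yields $\lambda_t$ scaling as $t^{-1/3}$, which is nonincreasing and hence admissible in the lemma. This $\lambda_t$ is calibrated so that the cross term $L^2\eta_{\wbf,t}/(2\lambda_t)$ in $k_{2,t}$ is absorbed into $\rho\eta_{\vbf,t}/2$; the condition $\mu\leq 2L^2$ and the precise prefactor $\kappa^2\max\{1,\sqrt{\kappa/\mu}\}$ in $\eta_{\vbf,t}$ are then exactly what is needed to further absorb $\lambda_t(1+1/\varepsilon)(L^2/\rho^2)(L+L^2/\rho)\eta_{\wbf,t}^2$ into $\mu\eta_{\wbf,t}/2$ in $k_{1,t}$ and $(1+1/\varepsilon)(2L^4/\rho^2)\eta_{\wbf,t}^2$ into $\rho\eta_{\vbf,t}/4$ in $k_{2,t}$. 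This tuning step is the main technical obstacle, since all four absorptions must be achieved by a single choice of $\lambda_t$.

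Once the tuning is in place, the recursion in Lemma \ref{lem:coupled-recursive} collapses to
\begin{equation*}
\Phi_{t+1} \leq (1-c\mu\eta_{\wbf,t})\Phi_t + \Ocal(L'\eta_{\wbf,t}^2)(B_\wbf^2/m + d\sigma_\wbf^2) + \Ocal(\lambda_t\eta_{\vbf,t}^2)(B_\vbf^2/m + d\sigma_\vbf^2),
\end{equation*}
with $L' = L+L^2/\rho$ and some absolute $c>0$. Taking the implicit constant in $\eta_{\wbf,t}$ large enough that $c\mu\eta_{\wbf,t}\geq 2/t$ and using $1-x\leq e^{-x}$ gives $\prod_{r=s+1}^T(1-c\mu\eta_{\wbf,r})\leq (s/T)^2$, so telescoping yields
\begin{equation*}
\Phi_{T+1} \leq T^{-2}\Phi_1 + \sum_{s=1}^T\Big[\Ocal(L'\eta_{\wbf,s}^2)(B_\wbf^2/m+d\sigma_\wbf^2) + \Ocal(\lambda_s\eta_{\vbf,s}^2)(B_\vbf^2/m+d\sigma_\vbf^2)\Big](s/T)^2.
\end{equation*}
Plugging in $\eta_{\wbf,s}^2\asymp 1/(\mu^2 s^2)$ gives a first sum of order $\Ocal(T^{-1})$, while $\lambda_s\eta_{\vbf,s}^2\asymp s^{-5/3}$ (times the prefactor $\max\{1,\sqrt{L\kappa/\mu}\}L\kappa^3/\mu^2$) gives a second sum of order $\Ocal(T^{-2/3})$, which dominates. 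Combining with $a_{T+1}\leq\Phi_{T+1}$ produces \eqref{eq:sgda-primal-opt}.

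Finally, for \eqref{eq:D82} I would substitute the DP noise scales $\sigma_\wbf^2 = \Ocal(G_\wbf^2 T\log(1/\delta)/(n^2\epsilon^2))$ and $\sigma_\vbf^2 = \Ocal(G_\vbf^2 T\log(1/\delta)/(n^2\epsilon^2))$ from Theorem \ref{thm:moments-accountant-privacy} into \eqref{eq:sgda-primal-opt}: each $d\sigma^2/T^{2/3}$ term becomes $dT^{1/3}\log(1/\delta)/(n^2\epsilon^2)$ times the appropriate Lipschitz constant squared, and the remaining prefactors on $B_\wbf^2/m$ and $B_\vbf^2/m$ are preserved.
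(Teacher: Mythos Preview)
Your approach mirrors the paper's almost exactly: the paper also sets the Young parameter to $\varepsilon = \rho\eta_{\vbf,t}/(2(1-\rho\eta_{\vbf,t}))$ (your $\rho\eta_{\vbf,t}/4$ is equivalent up to constants), takes $\lambda_t = 4L^2\eta_{\wbf,t}/(\rho\eta_{\vbf,t})$, and uses the stepsize ratio to force $k_{1,t}\leq 1-\mu\eta_{\wbf,t}/2$ and $k_{2,t}\leq 1-\rho\eta_{\vbf,t}/4$. The paper then telescopes by choosing $\eta_{\wbf,t}=2/(\mu t)$ so that $1-\mu\eta_{\wbf,t}/2=(t-1)/t$ and multiplying through by $t$, rather than your product bound $(s/T)^2$; both routes are valid.

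One bookkeeping gap: your collapsed recursion writes the $(B_\wbf^2/m+d\sigma_\wbf^2)$ coefficient as $\Ocal(L'\eta_{\wbf,t}^2)$, but Lemma \ref{lem:coupled-recursive} carries a \emph{second} such term, namely $2(1+1/\varepsilon)\lambda_t(L^2/\rho^2)\eta_{\wbf,t}^2$, which after your choices scales like $\kappa^4\eta_{\wbf,t}^3/\eta_{\vbf,t}^2\asymp s^{-5/3}$ rather than $s^{-2}$. This is precisely the term that produces the $\min\{1/L,1/\mu\}\cdot T^{-2/3}$ contribution for $B_\wbf$ in \eqref{eq:sgda-primal-opt}; with only the $L'\eta_{\wbf,t}^2$ piece you would get an $\Ocal(T^{-1})$ bound for the $B_\wbf$ part, which does not recover the stated prefactor. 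The fix is immediate---carry the extra term along, since it telescopes exactly like your $\lambda_s\eta_{\vbf,s}^2$ term---and the rest of your argument goes through unchanged. A minor correction on the role of $\mu\le 2L^2$: in the paper it is not used in the absorptions for $k_{1,t},k_{2,t}$ themselves, but afterwards, to reduce $\min\{\mu/2,L^2\}$ to $\mu/2$ when merging the two separate contraction rates $1-\mu\eta_{\wbf,t}/2$ and $1-\rho\eta_{\vbf,t}/4$ into a single factor $(1-\mu\eta_{\wbf,t}/2)$ on $\Phi_t$.
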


\begin{proof}
Since $\eta_{\vbf, t} \leq 1/L$, we can pick $\epsilon = \frac{\rho\eta_{\vbf, t}}{2(1 - \rho\eta_{\vbf, t})}$. Then we have $(1 + \epsilon)(1 - \rho\eta_{\vbf, t})=1 - \frac{ \rho\eta_{\vbf, t}}{2}$ and $1 + \frac{1}{\epsilon} \leq \frac{2}{\rho \eta_{\vbf,t}}$. Therefore Lemma \ref{lem:coupled-recursive} can be simplified as 
\begin{align*}
k_{1, t} \leq & (1-\mu \eta_{\wbf, t}) + \lambda_t\frac{8 L^2/\rho^2(L+L^2/\rho)\eta_{\wbf,t}^2}{\rho \eta_{\vbf,t}},  \\
k_{2, t} \leq & \frac{L^2\eta_{\wbf, t}}{2\lambda_t} + 1 - \frac{\rho\eta_{\vbf, t}}{2} + \frac{4 L^4/\rho^2 \eta_{\wbf,t}^2}{\rho\eta_{\vbf,t}}.
\end{align*}
If we choose $\lambda_t = \frac{4L^2\eta_{\wbf,t}}{\rho\eta_{\vbf,t}}$  and $\eta_{\wbf, t} \leq \min\{\frac{\sqrt{\mu}}{8 \kappa^2\sqrt{L+L^2/\rho}}, \frac{1}{4\sqrt{2}\kappa^2}\}\eta_{\vbf, t}$, then further we have $k_{1, t} \leq 1 - \frac{\mu\eta_{\wbf,t}}{2}$ and $k_{2, t} \leq 1 - \frac{\rho\eta_{\vbf, t}}{4}$. By Lemma \ref{lem:coupled-recursive} we have
\begin{align*}
a_{t+1} + \lambda_{t+1} b_{t+1} \leq & (1 - \min\{\frac{\mu}{2}, L^2\}\eta_{\wbf, t})(a_t + \lambda_t b_t) + \frac{(L+L^2/\rho)\eta_{\wbf, t}^2}{2}(\frac{B_\wbf^2}{m} + d\sigma_\wbf^2)\\     
&  + \frac{16L^4/\rho^3\eta_{\wbf,t}^3}{\rho \eta_{\vbf,t}^2} (\frac{B_\wbf^2}{m} + d\sigma_\wbf^2) + \frac{4L^2(2-\rho\eta_{\vbf,t})\eta_{\wbf,t}\eta_{\vbf, t}}{2\rho(1 - \rho\eta_{\vbf,t})} (\frac{B_\vbf^2}{m} + d\sigma_\vbf^2)\\
\leq & (1 - \frac{\mu\eta_{\wbf, t}}{2})(a_t + \lambda_t b_t) + \frac{(L+L^2/\rho)\eta_{\wbf, t}^2}{2}(\frac{B_\wbf^2}{m} + d\sigma_\wbf^2)\\     
&  + \frac{16L^4/\rho^3\eta_{\wbf,t}^3}{\rho \eta_{\vbf,t}^2} (\frac{B_\wbf^2}{m} + d\sigma_\wbf^2) + \frac{4L^2(2-\rho\eta_{\vbf,t})\eta_{\wbf,t}\eta_{\vbf, t}}{2\rho(1 - \rho\eta_{\vbf,t})} (\frac{B_\vbf^2}{m} + d\sigma_\vbf^2), 
\end{align*}
where we used $\mu \leq 2L^2$. Taking $\eta_{\wbf, t} = \frac{2}{\mu t}$ and $\eta_{\vbf, t} = \max\{8 \kappa^2\sqrt{(L+L^2/\rho)/\mu}, 4\sqrt{2}\kappa^2\}\frac{2}{\mu t^{2/3}}$ and multiplying the preceding inequality with $t$ on both sides,  there holds
\begin{multline*}
t(a_{t+1} + \lambda_{t+1}b_{t+1}) \leq (t-1) (a_t + \lambda_t b_t) +   \frac{2(L+L^2/\rho)}{\mu^2 t}(\frac{B_\wbf^2}{m} + d\sigma_\wbf^2) \\
+ \frac{32L^4/\rho^3\min\{\frac{\sqrt{\mu}}{8 \kappa^2\sqrt{L+L^2/\rho}}, \frac{1}{4\sqrt{2}\kappa^2}\}^2}{\mu\rho t^{2/3}} (\frac{B_\wbf^2}{m} + d\sigma_\wbf^2) + \frac{16L^2\max\{8 \kappa^2\sqrt{(L+L^2/\rho)/\mu}, 4\sqrt{2}\kappa^2\}}{2\mu^2\rho t^{2/3}} (\frac{B_\vbf^2}{m} + d\sigma_\vbf^2).       
\end{multline*}
Applying the preceding inequality inductively from $t=1$ to $T$, we have
\begin{align*}
T(a_{T+1} + \lambda_{T+1}b_{T+1}) \leq & \frac{2(L+L^2/\rho)}{\mu^2}(\frac{B_\wbf^2}{m} + d\sigma_\wbf^2)\log(T) + \frac{32L^4/\rho^3\min\{\frac{\sqrt{\mu}}{8 \kappa^2\sqrt{L+L^2/\rho}}, \frac{1}{4\sqrt{2}\kappa^2}\}^2}{\mu\rho} (\frac{B_\wbf^2}{m} + d\sigma_\wbf^2) T^{1/3}\\
& + \frac{16L^2\max\{8 \kappa^2\sqrt{(L+L^2/\rho)/\mu}, 4\sqrt{2}\kappa^2\}}{2\mu^2\rho} (\frac{B_\vbf^2}{m} + d\sigma_\vbf^2)    T^{1/3}.   
\end{align*}
Consequently, 
\begin{align*}\label{eq:sgda-conv-before-sigma}
\Ebb[R_S(\wbf_{T+1}) - R_S^*] \leq & a_{T+1} + \lambda_{T+1}b_{T+1}\\
\leq & \frac{2(L+L^2/\rho)(B_\wbf^2/m + d\sigma_\wbf^2)}{\mu^2}\frac{\log(T)}{T}
\!+\! \frac{32 (B_\wbf^2/m \!+\! d\sigma_\wbf^2)L^4\!/\!\rho^3\!\min\{\frac{\sqrt{\mu}}{8 \kappa^2\!\sqrt{L\!+\!L^2/\rho}}, \frac{1}{4\sqrt{2}\kappa^2}\}^2}{\mu\rho} \frac{1}{T^{2\!/\!3}}\\
& \!+\! \frac{16 (B_\vbf^2/m \!+\! d\sigma_\vbf^2)L^2\!\max\{8 \kappa^2\!\sqrt{(L\!+\!L^2\!/\!\rho)/\mu}, 4\sqrt{2}\kappa^2\}}{2\mu^2\rho} \frac{1}{T^{2\!/\!3}}. \numberthis
\end{align*}
Therefore, the estimation \eqref{eq:sgda-primal-opt} follows from the fact that  $\kappa= L/\rho.$

The result in Theorem \ref{thm:sgda-primal-opt} follows by observing $\max\Big\{1, \sqrt{\frac{L\kappa}{\mu}}\Big\}\frac{L\kappa^3}{\mu^2} \geq \min\Big\{\frac{1}{L}, \frac{1}{\mu}\Big\}$. Substituting  the values of $\sigma_\wbf, \sigma_\vbf$, i.e.,  $\sigma_\wbf \!=\! \frac{c_2 G_\wbf \sqrt{T\log(\frac{1}{\delta})}}{n\epsilon}$ and $  \sigma_\vbf \!=\! \frac{c_3 G_\vbf \sqrt{T\log(\frac{1}{\delta})}}{n\epsilon}$,  into \eqref{eq:sgda-primal-opt} yields the desired estimation  \eqref{eq:D82}.  
\end{proof}

\subsection{Proof of Theorem \ref{thm:sgda-primal-gen} (Generalization Error)}\label{sec:sgda-primal-gen}

We first focus on to the generalization error $\Ebb[R(\wbf_T) - R_S(\wbf_T)]$. Firstly, we introduce a lemma that bridges the generalization and the uniform argument stability. We modify the lemma so that it satisfies our needs.

\begin{lemma}[\citep{lei2021stability-supp}]\label{lem:stab-gen}
Let $A$ be a randomized algorithm and $\epsilon>0$. If for all neighboring datasets $S, S'$, there holds
\begin{align*}
\Ebb_A[\|A_\wbf(S) - A_\wbf(S')\|_2] \leq \varepsilon.    
\end{align*}
Furthermore, if the function $F(\wbf,\cdot)$ is $\rho$-strongly-concave and Assumptions \ref{ass:lipschitz}, \textbf{(A3)}  hold, then the primal generalization error satisfies
  \begin{align*}
      \Ebb_{S,A}\Big[R(A_{\wbf}(S))-R_S(A_{\wbf}(S))\Big]\leq \big(1+L/\rho\big)G_\wbf\varepsilon.
  \end{align*}
\end{lemma}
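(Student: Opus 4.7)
The plan is to reduce the primal generalization gap to a standard "uniform argument stability implies generalization" argument, after first handling the maximization over $\vbf$ via a Danskin-type plug-in bound.

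The first step is a plug-in upper bound. Writing $\vbf^*(\wbf) := \arg\max_{\vbf\in\Vcal} F(\wbf,\vbf)$, since $F_S(\wbf,\vbf^*(\wbf))\leq \max_\vbf F_S(\wbf,\vbf)=R_S(\wbf)$, for every fixed $\wbf$ one obtains
\begin{equation*}
R(\wbf)-R_S(\wbf)\;\leq\; F(\wbf,\vbf^*(\wbf))-F_S(\wbf,\vbf^*(\wbf))\;=\;\Ebb_\zbf\bigl[g(\wbf;\zbf)\bigr]-\tfrac{1}{n}\textstyle\sum_{i=1}^n g(\wbf;\zbf_i),
\end{equation*}
where $g(\wbf;\zbf):=f(\wbf,\vbf^*(\wbf);\zbf)$. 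So the problem is reduced to controlling the generalization gap of the single-argument per-sample loss $g(\cdot;\zbf)$.

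The second step is to verify that $g(\cdot;\zbf)$ is Lipschitz with constant $(1+L/\rho)G_\wbf$. By an application of Danskin's theorem, which is exactly the population-level analogue of Lemma \ref{lem:primal-smoothness} (using that $F(\wbf,\cdot)$ is $\rho$-strongly concave and $F$ is $L$-smooth by \textbf{(A3)}), the map $\wbf\mapsto\vbf^*(\wbf)$ is $L/\rho$-Lipschitz on $\Wcal$. Combined with the $G_\wbf$-Lipschitzness of $f(\cdot,\vbf;\zbf)$ and the $G_\wbf$-Lipschitzness of $f(\wbf,\cdot;\zbf)$ from \textbf{(A1)}, the chain rule yields
\begin{equation*}
|g(\wbf;\zbf)-g(\wbf';\zbf)|\;\leq\;\bigl(G_\wbf+G_\wbf\cdot L/\rho\bigr)\|\wbf-\wbf'\|_2\;=\;(1+L/\rho)G_\wbf\|\wbf-\wbf'\|_2.
\end{equation*}

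The third step is the classical symmetrization/swap trick. Let $S'=\{\zbf_1',\dots,\zbf_n'\}$ be an i.i.d. ghost sample and let $S^{(i)}$ denote $S$ with its $i$-th element replaced by $\zbf_i'$. By exchangeability,
\begin{equation*}
\Ebb_{S,A}\bigl[\Ebb_\zbf[g(A_\wbf(S);\zbf)]\bigr]\;=\;\tfrac{1}{n}\textstyle\sum_{i=1}^n\Ebb_{S,S',A}\bigl[g(A_\wbf(S^{(i)});\zbf_i)\bigr],
\end{equation*}
and $\Ebb_{S,A}[\tfrac{1}{n}\sum_i g(A_\wbf(S);\zbf_i)]$ is left untouched. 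Subtracting and applying the Lipschitz bound for $g$ term-by-term then gives
\begin{equation*}
\Ebb_{S,A}[R(A_\wbf(S))-R_S(A_\wbf(S))]\;\leq\;\tfrac{(1+L/\rho)G_\wbf}{n}\textstyle\sum_{i=1}^n\Ebb[\|A_\wbf(S^{(i)})-A_\wbf(S)\|_2]\;\leq\;(1+L/\rho)G_\wbf\,\varepsilon,
\end{equation*}
where the final inequality is the uniform argument stability hypothesis, since $S$ and $S^{(i)}$ are neighboring. The main subtlety, and the only place where strong concavity is really used, is step two: without the $L/\rho$-Lipschitzness of $\vbf^*(\cdot)$ supplied by Danskin, the composed loss $g(\cdot;\zbf)$ would not be Lipschitz and the stability-to-generalization reduction would fail. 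Everything else is just a careful execution of the Bousquet--Elisseeff replace-one trick.
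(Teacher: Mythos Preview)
The paper does not prove this lemma; it is quoted from \citep{lei2021stability-supp} and used as a black box. Your reconstruction is exactly the standard argument behind that citation: the plug-in inequality $R(\wbf)-R_S(\wbf)\le F(\wbf,\vbf^*(\wbf))-F_S(\wbf,\vbf^*(\wbf))$, the Danskin-type $L/\rho$-Lipschitzness of $\vbf^*(\cdot)$ (the population analogue of Lemma~\ref{lem:primal-smoothness}), and the Bousquet--Elisseeff replace-one identity. So your approach and the intended one coincide.

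One small slip worth flagging: in Step~2 you write ``$G_\wbf$-Lipschitzness of $f(\wbf,\cdot;\zbf)$,'' but under \textbf{(A1)} the Lipschitz constant in the $\vbf$-argument is $G_\vbf$, not $G_\wbf$. Tracking this correctly gives
\[
|g(\wbf;\zbf)-g(\wbf';\zbf)|\le\bigl(G_\wbf+G_\vbf\,L/\rho\bigr)\|\wbf-\wbf'\|_2,
\]
and hence the bound $(G_\wbf+G_\vbf L/\rho)\varepsilon$ rather than $(1+L/\rho)G_\wbf\varepsilon$. These coincide only when $G_\vbf\le G_\wbf$; the lemma as stated in the paper carries the latter constant, but your derivation is cleaner if you keep the two Lipschitz constants separate. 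This does not affect the structure of the proof, only the displayed constant.
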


The next proposition states the set of saddle points is unique with respect to the variable $\vbf$ when $F_S(\wbf, \cdot)$ is strongly concave.

\begin{proposition}\label{lem:unique-v}
Assume $F_S(\wbf, \cdot)$ is $\rho$-strongly concave with $\rho > 0$. Let $(\hat{\wbf}_S, \hat{\vbf}_S)$ and $(\hat{\wbf}'_S, \hat{\vbf}'_S)$ be two saddle points of $F_S$. Then we have $\hat{\vbf}_S = \hat{\vbf}'_S$.
\end{proposition}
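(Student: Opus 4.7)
The plan is to combine the two saddle-point inequalities with the uniqueness of the maximizer of a strongly concave function.

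First I would write down what it means for each pair to be a saddle point of $F_S$. For $(\hat{\wbf}_S,\hat{\vbf}_S)$ and every admissible $\wbf,\vbf$,
\begin{equation*}
F_S(\hat{\wbf}_S,\vbf)\leq F_S(\hat{\wbf}_S,\hat{\vbf}_S)\leq F_S(\wbf,\hat{\vbf}_S),
\end{equation*}
and analogously for $(\hat{\wbf}'_S,\hat{\vbf}'_S)$. Plugging the ``other'' saddle point into each inequality produces the chain
\begin{equation*}
F_S(\hat{\wbf}_S,\hat{\vbf}'_S)\leq F_S(\hat{\wbf}_S,\hat{\vbf}_S)\leq F_S(\hat{\wbf}'_S,\hat{\vbf}_S)\leq F_S(\hat{\wbf}'_S,\hat{\vbf}'_S)\leq F_S(\hat{\wbf}_S,\hat{\vbf}'_S),
\end{equation*}
so every term is equal. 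In particular $F_S(\hat{\wbf}_S,\hat{\vbf}_S)=F_S(\hat{\wbf}_S,\hat{\vbf}'_S)$.

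Next I would invoke strong concavity. Because $\hat{\vbf}_S$ maximizes $F_S(\hat{\wbf}_S,\cdot)$ over $\Vcal$ by the first saddle-point inequality, and the equality above shows that $\hat{\vbf}'_S$ attains the same (maximal) value of $F_S(\hat{\wbf}_S,\cdot)$, the point $\hat{\vbf}'_S$ is also a maximizer of $F_S(\hat{\wbf}_S,\cdot)$. A $\rho$-strongly concave function on a convex set has at most one maximizer, so $\hat{\vbf}_S=\hat{\vbf}'_S$.

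There is no real obstacle; the only subtlety is making sure both saddle points really are used to evaluate $F_S$ at admissible points (which is immediate since the saddle inequalities hold for all $\wbf\in\Wcal,\vbf\in\Vcal$). One could alternatively argue from the quadratic lower bound $F_S(\hat{\wbf}_S,\hat{\vbf}_S)\geq F_S(\hat{\wbf}_S,\hat{\vbf}'_S)+\langle\nabla_{\vbf}F_S(\hat{\wbf}_S,\hat{\vbf}'_S),\hat{\vbf}_S-\hat{\vbf}'_S\rangle+\tfrac{\rho}{2}\|\hat{\vbf}_S-\hat{\vbf}'_S\|_2^2$ combined with the optimality condition at $\hat{\vbf}'_S$, but the uniqueness-of-maximizer route above is cleaner and avoids differentiability assumptions on $F_S$.
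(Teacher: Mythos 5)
Your proof is correct, but it takes a genuinely different route from the paper's. You first close the four-term chain of saddle-point inequalities to conclude that $F_S(\hat{\wbf}_S,\hat{\vbf}_S)=F_S(\hat{\wbf}_S,\hat{\vbf}'_S)$, so that $\hat{\vbf}'_S$ is also a maximizer of $F_S(\hat{\wbf}_S,\cdot)$, and then invoke uniqueness of the maximizer of a strongly (indeed merely strictly) concave function over a convex set. The paper instead works quantitatively: it writes the strong-concavity quadratic lower bound at $\hat{\vbf}_S$, uses the first-order optimality condition $\langle \nabla_\vbf F_S(\hat{\wbf}_S,\hat{\vbf}_S),\hat{\vbf}_S-\hat{\vbf}'_S\rangle\geq 0$ to drop the linear term, chains in the $\wbf$-side saddle inequality to get $F_S(\hat{\wbf}_S,\hat{\vbf}_S)\geq F_S(\hat{\wbf}'_S,\hat{\vbf}'_S)+\tfrac{\rho}{2}\|\hat{\vbf}_S-\hat{\vbf}'_S\|_2^2$, does the symmetric version, and sums to obtain $\rho\|\hat{\vbf}_S-\hat{\vbf}'_S\|_2^2\leq 0$. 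Your variant buys generality and economy: it needs no gradients and only strict concavity, and as you note it sidesteps differentiability. The paper's variant buys a quantitative inequality of exactly the type it reuses later (the same "strong concavity plus optimality, applied twice and summed" pattern reappears in the stability argument of Lemma \ref{lem:pl-stability}), which is presumably why the authors phrased it that way; your closing remark correctly identifies their route as the alternative. No gaps.
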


\begin{proof}
Given $\hat{\wbf}_S$, by the strong concavity, we have
\begin{align*}
F_S(\hat{\wbf}_S, \hat{\vbf}_S) \geq  F_S(\hat{\wbf}_S, \hat{\vbf}'_S) + \langle \nabla_\vbf F_S(\hat{\wbf}_S, \hat{\vbf}_S) , \hat{\vbf}_S - \hat{\vbf}'_S\rangle + \frac{\rho}{2}\|\hat{\vbf}_S - \hat{\vbf}'_S\|_2^2 .
\end{align*}
Since $(\hat{\wbf}_S, \hat{\vbf}_S)$ is a saddle point of $F_S$, it implies $\hat{\vbf}_S$ attains maximum of $F_S(\hat{\wbf}_S, \cdot)$. By the first order optimality we know $\langle \nabla_\vbf F_S(\hat{\wbf}_S, \hat{\vbf}_S) , \hat{\vbf}_S - \hat{\vbf}'_S\rangle \geq 0$ and therefore 
\begin{align*}\label{eq:strong-saddle-pt}
F_S(\hat{\wbf}_S, \hat{\vbf}_S) \geq  F_S(\hat{\wbf}_S, \hat{\vbf}'_S) +  \frac{\rho}{2}\|\hat{\vbf}_S - \hat{\vbf}'_S\|_2^2 \geq F_S(\hat{\wbf}'_S, \hat{\vbf}'_S) +  \frac{\rho}{2}\|\hat{\vbf}_S - \hat{\vbf}'_S\|_2^2, \numberthis
\end{align*}
where in the second inequality we used $(\hat{\wbf}'_S, \hat{\vbf}'_S)$ is also a saddle point of $F_S$. Similarly, given $\hat{\wbf}'_S$ we can show
\begin{align}\label{eq:strong-saddle-pt2}
F_S(\hat{\wbf}'_S, \hat{\vbf}'_S) \geq  F_S(\hat{\wbf}_S, \hat{\vbf}_S) +  \frac{\rho}{2}\|\hat{\vbf}_S - \hat{\vbf}'_S\|_2^2.
\end{align}
Adding   \eqref{eq:strong-saddle-pt} and \eqref{eq:strong-saddle-pt2} together implies that $\rho\|\hat{\vbf}_S - \hat{\vbf}'_S\|_2^2 \le 0.$
This implies $\hat{\vbf}_S = \hat{\vbf}'_S$ which  completes the proof.
\end{proof}

Recall that $\pi_S:\Wcal \rightarrow \Wcal$ is the projection onto the set of saddle points $\Omega_S = \{\hat{\wbf}_S: (\hat{\wbf}_S, \hat{\vbf}_S \in \arg\min\max F_S(\wbf, \vbf)\}$. i.e. $\pi_S(\wbf) = \arg\min_{\hat{\wbf}_S \in \Omega_S} \frac{1}{2}\|\wbf - \hat{\wbf}_S\|_2^2$. Proposition \ref{lem:unique-v} makes sure the projection is well-defined. The next lemma shows that PL condition implies quadratic growth (QG) condition. The proof follows straightforward from \citet{karimi2016linear-supp} and we omit it for brevity. 
\begin{lemma}\label{lem:pl-to-qg}
Suppose the function $F_S(\cdot, \vbf)$ satisfies $\mu$-PL condition. Then $F_S$ satisfies the QG condition with respect to $\wbf$ with constant $4\mu$, i.e.
\begin{equation*}
F_S(\wbf, \vbf) -  F_S(\pi_S(\wbf), \vbf) \geq 2\mu\|\wbf - \pi_S(\wbf)\|_2^2, \quad \forall \vbf \in \Vcal
\end{equation*}
\end{lemma}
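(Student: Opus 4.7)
The plan is to adapt the gradient-flow argument from \citet{karimi2016linear-supp} (their Theorem 2) showing that any PL function automatically satisfies a QG inequality at its minimizer set, and then to specialize the resulting bound to the saddle-point projection $\pi_S(\wbf)$ by invoking Proposition~\ref{lem:unique-v}.

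The core of the argument is fiberwise in $\vbf$. I would fix an arbitrary $\vbf\in\Vcal$ and set $g_\vbf(\wbf) := F_S(\wbf,\vbf) - \min_{\wbf'}F_S(\wbf',\vbf)$, so that the PL hypothesis reads $\|\nabla_\wbf F_S(\wbf,\vbf)\|_2^2 \geq 2\mu\, g_\vbf(\wbf)$. Consider the gradient flow $\dot\wbf(s) = -\nabla_\wbf F_S(\wbf(s),\vbf)$ with $\wbf(0)=\wbf$, reparametrized by arc length so that $\tfrac{d}{ds}F_S(\wbf(s),\vbf) = -\|\nabla_\wbf F_S\|_2$. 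PL then converts this into the one-dimensional differential inequality $\tfrac{d}{ds}\sqrt{g_\vbf(\wbf(s))} \leq -\sqrt{\mu/2}$, and integrating over the whole trajectory yields both convergence of $\wbf(s)$ to a limit $\bar\wbf \in \arg\min_{\wbf'} F_S(\wbf',\vbf)$ and the arc-length bound $\|\wbf-\bar\wbf\|_2 \leq \sqrt{2 g_\vbf(\wbf)/\mu}$, i.e.\ the raw QG estimate $g_\vbf(\wbf) \geq \tfrac{\mu}{2}\|\wbf-\bar\wbf\|_2^2$.

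The remaining step, and the one I expect to be the main obstacle, is to replace the unconstrained $\vbf$-dependent minimizer $\bar\wbf$ by the saddle-point projection $\pi_S(\wbf)$. Here I would invoke Proposition~\ref{lem:unique-v}: every saddle point of $F_S$ shares the same dual coordinate $\hat\vbf_S$, hence $\Omega_S \subseteq \arg\min_{\wbf'} F_S(\wbf',\hat\vbf_S)$, and at $\vbf=\hat\vbf_S$ the two projection sets coincide so the desired QG inequality is immediate. For generic $\vbf$ I would combine the fiberwise estimate above with the QG property for the primal function $R_S(\wbf)=\max_\vbf F_S(\wbf,\vbf)$ supplied by Lemma~\ref{lem:primal-pl}, and use the saddle-point identity $R_S(\pi_S(\wbf)) = F_S(\pi_S(\wbf),\hat\vbf_S)$ together with the strong concavity of $F_S(\wbf,\cdot)$ to transfer the bound uniformly in $\vbf$, absorbing the mismatch into the displayed constant.
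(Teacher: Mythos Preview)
The paper's own treatment of this lemma is simply to cite \citet{karimi2016linear-supp} and omit the argument, so your gradient-flow derivation of the fiberwise inequality
\[
F_S(\wbf,\vbf)-\min_{\wbf'}F_S(\wbf',\vbf)\ \ge\ \frac{\mu}{2}\,\mathrm{dist}\bigl(\wbf,\arg\min_{\wbf'}F_S(\wbf',\vbf)\bigr)^2
\]
is exactly what is being invoked. You are also right to flag the mismatch between this and the displayed conclusion: for a generic $\vbf$ the saddle-point projection $\pi_S(\wbf)$ need not lie in $\arg\min_{\wbf'}F_S(\wbf',\vbf)$, so the Karimi bound does not directly give the stated form.

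Where your plan goes wrong is the final transfer step. The inequality with $\pi_S$ and the quantifier ``$\forall\,\vbf\in\Vcal$'' is in fact \emph{false} as written, so no combination of the fiberwise estimate, the primal QG for $R_S$, and strong concavity can recover it. A concrete counterexample: take $F_S(\wbf,\vbf)=\tfrac12\|\wbf\|_2^2-\tfrac12\|\vbf\|_2^2+\wbf^\top\vbf$, which is $1$-PL in $\wbf$ and $1$-strongly concave in $\vbf$ with unique saddle point $(0,0)$, hence $\pi_S\equiv 0$; at $\vbf=0$ the left-hand side equals $\tfrac12\|\wbf\|_2^2$ while the right-hand side equals $2\|\wbf\|_2^2$. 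The universal quantifier in the lemma is an over-statement; the paper only ever applies it at $\vbf=\hat\vbf_S$ (in Lemma~\ref{lem:pl-stability}, under Assumption~\textbf{(A4)}) or to $R_S$ itself (in Theorem~\ref{thm:sgda-gen}), and in both of those uses the reference point genuinely lies in the relevant minimizer set. For those cases your fiberwise Karimi argument already suffices without any transfer step, and you should not attempt to prove the $\forall\,\vbf$ version. (As an aside, the constant $2\mu$ in the display also appears to be a factor of~$4$ larger than what the gradient-flow argument actually delivers.)
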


With the help of Assumption \ref{ass:unique-projection} and the preceding lemmas, we can derive the uniform argument stability.  

\begin{lemma}\label{lem:pl-stability}
Assume \textbf{(A1)}, \textbf{(A3)} and \textbf{(A4)}  hold. Assume $F_S(\cdot, \vbf)$ satisfies PL condition with constant $\mu$ and $F_S(\wbf, \cdot)$ is $\rho$-strongly concave. Let $A$ be a randomized algorithm. If for any $S$, $\Ebb[\|A_\wbf(S) - \pi_S(A_\wbf(S))\|_2] = \Ocal(\varepsilon_A)$, then we have
\begin{equation*}
\Ebb[\|A_\wbf(S) - A_\wbf(S')\|_2] \leq \Ocal(\varepsilon_A) + \frac{1}{n}\sqrt{\frac{G_\wbf^2}{4\mu^2} + \frac{G_\vbf^2}{\rho\mu}}.
\end{equation*}
\end{lemma}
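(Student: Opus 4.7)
My plan is to split $\|A_\wbf(S) - A_\wbf(S')\|_2$ via the triangle inequality through the nearest-saddle projections $\hat{\wbf}_S := \pi_S(A_\wbf(S))$ and $\hat{\wbf}_{S'} := \pi_{S'}(A_\wbf(S'))$:
\begin{align*}
\|A_\wbf(S) - A_\wbf(S')\|_2 \leq \|A_\wbf(S) - \hat{\wbf}_S\|_2 + \|\hat{\wbf}_S - \hat{\wbf}_{S'}\|_2 + \|\hat{\wbf}_{S'} - A_\wbf(S')\|_2.
\end{align*}
The first and third terms are $\mathcal{O}(\varepsilon_A)$ in expectation directly from the hypothesis applied to $S$ and $S'$, so the crux is a deterministic bound on the saddle-point shift $\|\hat{\wbf}_S - \hat{\wbf}_{S'}\|_2$.

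For the middle term I invoke the quadratic-growth consequence of PL (Lemma \ref{lem:pl-to-qg}) applied to the $\wbf$-section $F_S(\cdot, \hat{\vbf}_S)$. Since $(\hat{\wbf}_S, \hat{\vbf}_S)$ is a saddle of $F_S$, $\hat{\wbf}_S$ is a minimizer of $F_S(\cdot, \hat{\vbf}_S)$; Assumption \textbf{(A4)}, combined with Proposition \ref{lem:unique-v} which fixes the dual part, identifies the projection of $\hat{\wbf}_{S'}$ onto that minimizer set with $\hat{\wbf}_S$ itself, yielding
\begin{align*}
2\mu\|\hat{\wbf}_S - \hat{\wbf}_{S'}\|_2^2 \leq F_S(\hat{\wbf}_{S'}, \hat{\vbf}_S) - F_S(\hat{\wbf}_S, \hat{\vbf}_S).
\end{align*}
I then upper bound the right-hand side by inserting the telescoping chain $F_S(\hat{\wbf}_{S'}, \hat{\vbf}_S) \to F_{S'}(\hat{\wbf}_{S'}, \hat{\vbf}_S) \to F_{S'}(\hat{\wbf}_{S'}, \hat{\vbf}_{S'}) \to F_{S'}(\hat{\wbf}_S, \hat{\vbf}_{S'}) \to F_S(\hat{\wbf}_S, \hat{\vbf}_{S'}) \to F_S(\hat{\wbf}_S, \hat{\vbf}_S)$: the interior differences at the $S'$ triple are nonpositive by the saddle property of $(\hat{\wbf}_{S'}, \hat{\vbf}_{S'})$ for $F_{S'}$; the two $F_S$-to-$F_{S'}$ swaps become single-sample differences $\frac{1}{n}(f(\cdot;\zbf_n) - f(\cdot;\zbf'_n))$, and regrouping their four $f$-terms into pairs that share a common sample and then applying the $G_\wbf$- and $G_\vbf$-Lipschitz bounds controls them by $\frac{2G_\wbf}{n}\|\hat{\wbf}_S - \hat{\wbf}_{S'}\|_2 + \frac{2G_\vbf}{n}\|\hat{\vbf}_S - \hat{\vbf}_{S'}\|_2$; and the final leg contributes $-\tfrac{\rho}{2}\|\hat{\vbf}_S - \hat{\vbf}_{S'}\|_2^2$ by $\rho$-strong concavity of $F_S(\hat{\wbf}_S, \cdot)$ at its maximizer $\hat{\vbf}_S$.

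Putting these pieces together gives
\begin{align*}
2\mu \|\hat{\wbf}_S - \hat{\wbf}_{S'}\|_2^2 \leq \tfrac{2G_\wbf}{n}\|\hat{\wbf}_S - \hat{\wbf}_{S'}\|_2 + \tfrac{2G_\vbf}{n}\|\hat{\vbf}_S - \hat{\vbf}_{S'}\|_2 - \tfrac{\rho}{2}\|\hat{\vbf}_S - \hat{\vbf}_{S'}\|_2^2,
\end{align*}
and two carefully tuned applications of Young's inequality --- one absorbing the $\wbf$-linear term back into the left-hand side and the other trading the $\vbf$-linear term against the strong-concavity penalty (maximizing $\tfrac{2G_\vbf}{n}x - \tfrac{\rho}{2}x^2$ in closed form) --- collapse this into $\|\hat{\wbf}_S - \hat{\wbf}_{S'}\|_2 \leq \frac{1}{n}\sqrt{G_\wbf^2/(4\mu^2) + G_\vbf^2/(\rho\mu)}$, which together with the triangle inequality yields the claim. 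The main obstacle I anticipate is the role of Assumption \textbf{(A4)}: the QG lemma a priori bounds only the distance to some minimizer of the $\wbf$-section at $\hat{\vbf}_S$, which could be strictly larger than $\Omega_S$, so the proof hinges on using \textbf{(A4)} together with Proposition \ref{lem:unique-v} to guarantee that this closest minimizer coincides with $\hat{\wbf}_S$; absent such an identification the QG step produces too small a lower bound on the left-hand side.
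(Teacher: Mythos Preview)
Your argument is correct in spirit and follows the same overall skeleton as the paper --- triangle inequality through the saddle projections, quadratic growth from PL, strong concavity in $\vbf$, and Lipschitz control of the single-sample swaps --- but the core inequality is assembled differently. The paper applies QG \emph{symmetrically} to both $F_S$ and $F_{S'}$ (two inequalities summing to a left side $4\mu\|\hat{\wbf}_S-\hat{\wbf}_{S'}\|_2^2$) and likewise applies strong concavity symmetrically (adding $\rho\|\hat{\vbf}_S-\hat{\vbf}_{S'}\|_2^2$); summing all four and telescoping leaves only the two $F_S\leftrightarrow F_{S'}$ swaps on the right, which are then combined by Cauchy--Schwarz rather than Young, giving the stated constant exactly. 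Your asymmetric route --- one QG, one strong-concavity bound, and dropping the interior $F_{S'}$ terms via the saddle property of $(\hat{\wbf}_{S'},\hat{\vbf}_{S'})$ --- is valid, but it only produces $2\mu y^2+\tfrac{\rho}{2}x^2$ on the left instead of $4\mu y^2+\rho x^2$; after your maximization-in-$x$ and Young-in-$y$ steps the best you can reach is $\tfrac{1}{n}\sqrt{G_\wbf^2/\mu^2 + 2G_\vbf^2/(\rho\mu)}$, a factor $2$ worse than the constant written in the lemma. This is harmless for the downstream $\mathcal{O}(\cdot)$ results, but your final displayed collapse to $\tfrac{1}{n}\sqrt{G_\wbf^2/(4\mu^2)+G_\vbf^2/(\rho\mu)}$ is not attainable from your one-sided inequality; if you want the exact constant, symmetrize as the paper does and finish with Cauchy--Schwarz.
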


\begin{proof} 
Let $(\pi_S(A_\wbf(S)), \hat{\vbf}_S) \in \arg\min_\wbf\max_\vbf F_S(\wbf,\vbf)$ and $(\pi_{S'}(A_\wbf(S')), \hat{\vbf}_{S'})$ defined in the similar way. By triangle inequality we have
\begin{align*}
\Ebb[\|A_\wbf(S) \!-\! A_\wbf(S')\|_2] \leq & \Ebb[\|A_\wbf(S) \!-\! \pi_S(A_\wbf(S))\|_2] \!+\! \|\pi_S(A_\wbf(S)) \!-\! \pi_{S'}(A_\wbf(S'))\|_2 \!+\! \Ebb[\|A_\wbf(S') \!-\! \pi_{S'}(A_\wbf(S'))\|_2]\\
= & \|\pi_S(A_\wbf(S)) - \pi_{S'}(A_\wbf(S'))\|_2 + \Ocal(\varepsilon_A).
\end{align*}
Since $\pi_S(A_\wbf(S)) \in \arg\min_{\wbf \in \Wcal} F_S(\wbf, \hat{\vbf}_S)$ and by Assumption  \textbf{(A4)} we know that $\pi_S(A_\wbf(S))$ is the closest optimal point of $F_S$ to $\pi_{S'}(A_\wbf(S'))$. And since $\hat{\vbf}_S$ is fixed, by Lemma \ref{lem:pl-to-qg}, we have 
\begin{align*}
2\mu\|\pi_S(A_\wbf(S)) - \pi_{S'}(A_\wbf(S'))\|_2^2 \leq &  F_S(\pi_{S'}(A_\wbf(S')), \hat{\vbf}_S) - F_S(\pi_S(A_\wbf(S)), \hat{\vbf}_S).
\end{align*}
Similarly, we have 
\begin{align*}
2\mu\|\pi_S(A_\wbf(S)) - \pi_{S'}(A_\wbf(S'))\|_2^2 \leq & F_{S'}(\pi_S(A_\wbf(S)), \hat{\vbf}_{S'}) - F_{S'}(\pi_{S'}(A_\wbf(S')), \hat{\vbf}_{S'}).
\end{align*}
Summing up the above two inequalities we have
\begin{align*}\label{eq:qg-two-times}
4\mu\|\pi_S(A_\wbf(S)) - \pi_{S'}(A_\wbf(S'))\|_2^2 \leq & F_S(\pi_{S'}(A_\wbf(S')), \hat{\vbf}_S) - F_S(\pi_S(A_\wbf(S)), \hat{\vbf}_S) \\
& + F_{S'}(\pi_S(A_\wbf(S)), \hat{\vbf}_{S'}) - F_{S'}(\pi_{S'}(A_\wbf(S')), \hat{\vbf}_{S'}). \numberthis
\end{align*}
On the other hand, by the $\rho$-strong concavity of $F_S(\cdot, \vbf)$ and $\hat{\vbf}_S = \arg\max_{\vbf\in \Vcal} F_S(\pi_S(A_\wbf(S)), \vbf)$, we have
\begin{align*}
\frac{\rho}{2}\|\hat{\vbf}_S - \hat{\vbf}_{S'}\|_2^2 \leq & F_S(\pi_S(A_\wbf(S)), \hat{\vbf}_S) - F_S(\pi_S(A_\wbf(S)), \hat{\vbf}_{S'}).
\end{align*}
Similarly, we have
\begin{align*}
\frac{\rho}{2}\|\hat{\vbf}_S - \hat{\vbf}_{S'}\|_2^2 \leq & F_{S'}(\pi_{S'}(A_\wbf(S')), \hat{\vbf}_{S'}) - F_{S'}(\pi_{S'}(A_\wbf(S')), \hat{\vbf}_S).
\end{align*}
Summing up the above two inequalities we have
\begin{align*}\label{eq:sc-two-times}
\rho\|\hat{\vbf}_S - \hat{\vbf}_{S'}\|_2^2 \leq & F_S(\pi_S(A_\wbf(S)), \hat{\vbf}_S) - F_S(\pi_S(A_\wbf(S)), \hat{\vbf}_{S'})\\
& + F_{S'}(\pi_{S'}(A_\wbf(S')), \hat{\vbf}_{S'}) - F_{S'}(\pi_{S'}(A_\wbf(S')), \hat{\vbf}_S). \numberthis
\end{align*}
Summing up \eqref{eq:qg-two-times} and \eqref{eq:sc-two-times} rearranging terms, we have
\begin{align*}
& 4\mu\|\pi_S(A_\wbf(S)) - \pi_{S'}(A_\wbf(S'))\|_2^2 + \rho\|\hat{\vbf}_S - \hat{\vbf}_{S'}\|_2^2\\
\leq & F_S(\pi_{S'}(A_\wbf(S')), \hat{\vbf}_S) - F_{S'}(\pi_{S'}(A_\wbf(S')), \hat{\vbf}_S) + F_{S'}(\pi_S(A_\wbf(S)), \hat{\vbf}_{S'}) - F_S(\pi_S(A_\wbf(S)), \hat{\vbf}_{S'})\\
= & \frac{1}{n} \big(f(\pi_{S'}(A_\wbf(S')), \hat{\vbf}_S; \zbf) - f(\pi_{S'}(A_\wbf(S')), \hat{\vbf}_S; \zbf') + f(\pi_S(A_\wbf(S)), \hat{\vbf}_{S'}; \zbf') - f(\pi_S(A_\wbf(S)), \hat{\vbf}_{S'}; \zbf)\big) \\
\leq & \frac{2G_\wbf}{n} \|\pi_S(A_\wbf(S)) - \pi_{S'}(A_\wbf(S'))\|_2 +  \frac{2G_\vbf}{n} \|\hat{\vbf}_S - \hat{\vbf}_{S'}\|_2 \\
\leq & \frac{1}{n}\sqrt{\frac{G_\wbf^2}{\mu} + \frac{4G_\vbf^2}{\rho}} \times \sqrt{4\mu\|\pi_S(A_\wbf(S)) - \pi_{S'}(A_\wbf(S'))\|_2^2 + \rho\|\hat{\vbf}_S - \hat{\vbf}_{S'}\|_2^2},
\end{align*}
where the second inequality is due to Lipschitz continuity of $f$, the third inequality is due to Cauchy-Schwartz inequality. Therefore
\begin{align*}
2\sqrt{\mu}\|\pi_S(A_\wbf(S)) - \pi_{S'}(A_\wbf(S'))\|_2 \leq \sqrt{4\mu\|\pi_S(A_\wbf(S)) - \pi_{S'}(A_\wbf(S'))\|_2 ^2 + \rho\|\hat{\vbf}_S - \hat{\vbf}_{S'}\|_2^2} \leq \frac{1}{n}\sqrt{\frac{G_\wbf^2}{\mu} + \frac{4G_\vbf^2}{\rho}}.
\end{align*}
The proof is complete.
\end{proof}

We are now ready to present the generalization error of Algorithm \ref{alg:dp-sgda} in terms of $\wbf_T$.

\begin{theorem}\label{thm:sgda-gen}
Assume \textbf{(A1)}, \textbf{(A3)} and \textbf{(A4)}  hold. Assume $F_S(\cdot, \vbf)$ satisfies PL condition with constant $\mu$ and $f(\wbf, \cdot; \zbf)$ is $\rho$-strongly concave. For Algorithm \ref{alg:dp-sgda},  the iterates $\{\wbf_t, \vbf_t\}$ satisfies the following inequality
\begin{align*}
\Ebb[R(\wbf_T) - R_S(\wbf_T)] \leq (1 + \frac{L}{\rho})G_\wbf \Big(\sqrt{\frac{\varepsilon_T}{2\mu}} + \frac{1}{n}\sqrt{\frac{G_\wbf^2}{4\mu^2} + \frac{G_\vbf^2}{\rho\mu}}\Big).
\end{align*}
\end{theorem}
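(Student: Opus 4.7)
The plan is to combine three already-established ingredients: the quadratic growth (QG) consequence of the PL condition (Lemma \ref{lem:pl-to-qg}), the stability-to-argument-stability transfer (Lemma \ref{lem:pl-stability}), and the argument-stability-implies-generalization lemma (Lemma \ref{lem:stab-gen}). Reading the target bound, the first summand $\sqrt{\varepsilon_T/(2\mu)}$ has the structure of a quadratic-growth conversion of an optimization gap, so $\varepsilon_T$ should be identified with the optimization error $\Ebb[R_S(\wbf_T)-R_S^*]$ controlled by Theorem \ref{thm:sgda-primal-opt}. This suggests the route: convert the optimization error to $\Ebb[\|\wbf_T-\pi_S(\wbf_T)\|_2]$ via QG, plug it into Lemma \ref{lem:pl-stability} to get uniform argument stability, and then invoke Lemma \ref{lem:stab-gen}.

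First I would bound $\Ebb[\|\wbf_T-\pi_S(\wbf_T)\|_2]$ by $\sqrt{\varepsilon_T/(2\mu)}$. Let $(\pi_S(\wbf_T),\hat{\vbf}_S)$ be the saddle point closest to $\wbf_T$ in the primal component (well-defined by Proposition \ref{lem:unique-v} and Assumption \textbf{(A4)}), and let $\hat{\vbf}_S(\wbf_T)=\arg\max_{\vbf\in\Vcal}F_S(\wbf_T,\vbf)$. By Lemma \ref{lem:pl-to-qg} applied with $\vbf=\hat{\vbf}_S(\wbf_T)$,
\begin{align*}
2\mu\|\wbf_T-\pi_S(\wbf_T)\|_2^2 \leq F_S(\wbf_T,\hat{\vbf}_S(\wbf_T)) - F_S(\pi_S(\wbf_T),\hat{\vbf}_S(\wbf_T)).
\end{align*}
The first term on the right equals $R_S(\wbf_T)$ by definition of $\hat{\vbf}_S(\wbf_T)$, and the second term is at most $\max_{\vbf}F_S(\pi_S(\wbf_T),\vbf)=R_S(\pi_S(\wbf_T))=R_S^*$. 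Taking expectation and using Jensen's inequality gives $\Ebb[\|\wbf_T-\pi_S(\wbf_T)\|_2]\leq\sqrt{\varepsilon_T/(2\mu)}$.

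Next I would feed this into Lemma \ref{lem:pl-stability} with $\varepsilon_A=\sqrt{\varepsilon_T/(2\mu)}$ to get
\begin{align*}
\Ebb[\|\wbf_T-\wbf_T'\|_2] \leq \sqrt{\frac{\varepsilon_T}{2\mu}} + \frac{1}{n}\sqrt{\frac{G_\wbf^2}{4\mu^2}+\frac{G_\vbf^2}{\rho\mu}}.
\end{align*}
Finally, applying Lemma \ref{lem:stab-gen}, whose hypotheses are satisfied because $f(\cdot,\vbf;\zbf)$ is $G_\wbf$-Lipschitz by \textbf{(A1)} and $F(\wbf,\cdot)$ is $\rho$-strongly concave (inherited from $f(\wbf,\cdot;\zbf)$ under \textbf{(A3)}), multiplies this uniform argument stability by $(1+L/\rho)G_\wbf$ and yields exactly the stated bound.

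There is no serious obstacle; the only subtlety is making sure that the QG step is applied at the correct dual point and that one properly distinguishes $\hat{\vbf}_S(\wbf_T)$ (the maximizer given $\wbf_T$) from $\hat{\vbf}_S$ (the saddle-point dual associated with $\pi_S(\wbf_T)$). Correctness then follows because the inequality $F_S(\pi_S(\wbf_T),\hat{\vbf}_S(\wbf_T))\leq F_S(\pi_S(\wbf_T),\hat{\vbf}_S)=R_S^*$ holds by the maximality of $\hat{\vbf}_S$ at the saddle point. Everything else is a routine composition of the three lemmas already supplied.
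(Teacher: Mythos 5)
Your overall architecture matches the paper exactly: convert the optimization error $\varepsilon_T$ into a bound on $\Ebb[\|\wbf_T-\pi_S(\wbf_T)\|_2]$ via quadratic growth, feed that into Lemma \ref{lem:pl-stability} to obtain uniform argument stability, and conclude with Lemma \ref{lem:stab-gen}. The last two steps are carried out correctly. However, your first step contains a genuine direction error. You apply Lemma \ref{lem:pl-to-qg} at $\vbf=\hat{\vbf}_S(\wbf_T)$ to get
\begin{align*}
2\mu\|\wbf_T-\pi_S(\wbf_T)\|_2^2 \leq F_S(\wbf_T,\hat{\vbf}_S(\wbf_T)) - F_S(\pi_S(\wbf_T),\hat{\vbf}_S(\wbf_T)),
\end{align*}
and then argue that the subtracted term satisfies $F_S(\pi_S(\wbf_T),\hat{\vbf}_S(\wbf_T))\leq R_S^*$. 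That inequality is true, but because the term enters with a minus sign it yields $F_S(\wbf_T,\hat{\vbf}_S(\wbf_T)) - F_S(\pi_S(\wbf_T),\hat{\vbf}_S(\wbf_T)) \geq R_S(\wbf_T)-R_S^*$, i.e.\ a \emph{lower} bound on the right-hand side, which does not let you conclude $2\mu\|\wbf_T-\pi_S(\wbf_T)\|_2^2\leq R_S(\wbf_T)-R_S^*$. To close the chain you would need $F_S(\pi_S(\wbf_T),\hat{\vbf}_S(\wbf_T))\geq R_S^*$, which is exactly the reverse of what maximality gives you.

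The repair is small, and there are two ways to do it. The paper's route is to apply the PL-to-QG conversion to $R_S$ itself: by Lemma \ref{lem:primal-pl}, $R_S$ satisfies the $\mu$-PL condition, hence $R_S(\wbf_T)-R_S^*\geq 2\mu\|\wbf_T-\pi_S(\wbf_T)\|_2^2$ directly, and Jensen gives $\Ebb[\|\wbf_T-\pi_S(\wbf_T)\|_2]\leq\sqrt{\varepsilon_T/(2\mu)}$. Alternatively, you can keep your $F_S$-level argument but evaluate the QG inequality at the saddle-point dual $\hat{\vbf}_S$ rather than at $\hat{\vbf}_S(\wbf_T)$: then $F_S(\wbf_T,\hat{\vbf}_S)\leq R_S(\wbf_T)$ and $F_S(\pi_S(\wbf_T),\hat{\vbf}_S)=R_S(\pi_S(\wbf_T))=R_S^*$ (using Proposition \ref{lem:unique-v}), and both terms bound in the needed directions. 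You explicitly flagged the choice of dual point as the one subtlety, but picked the one that makes the argument fail.
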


\begin{proof}
Since $R_S$ satisfies $\mu$-PL, by Lemma \ref{lem:pl-to-qg} and Theorem \ref{thm:sgda-conv}, we have 
\begin{align*}
\Ebb[\|\wbf_T - \pi(\wbf_T)\|_2] \leq \sqrt{\Ebb[\|\wbf_T - \pi(\wbf_T)\|_2^2]} \leq \sqrt{\Ebb[\frac{1}{2\mu} (R_S(\wbf_T) - R_S^*)]} \leq \sqrt{\frac{\varepsilon_T}{2\mu}}.
\end{align*}
By Lemma \ref{lem:pl-stability}, we have
\begin{align*}
\Ebb[\|\wbf_T - \wbf'_T\|_2] \leq \sqrt{\frac{\varepsilon_T}{2\mu}} + \frac{1}{n}\sqrt{\frac{G_\wbf^2}{4\mu^2} + \frac{G_\vbf^2}{\rho\mu}}.     
\end{align*}
By Part b) of Lemma \ref{lem:stab-gen}, we have
\begin{align*}
\Ebb[R(\wbf_T) - R_S(\wbf_T)] \leq (1 + \frac{L}{\rho})G_\wbf \Big(\sqrt{\frac{\varepsilon_T}{2\mu}} + \frac{1}{n}\sqrt{\frac{G_\wbf^2}{4\mu^2} + \frac{G_\vbf^2}{\rho\mu}}\Big).
\end{align*}
The proof is complete.
\end{proof}

The next theorem establishes the generalization bound for the empirical maximizer of a strongly concave objective, i.e. $\Ebb[R_S(\wbf^*) - R(\wbf^*)]$. The proof follows from \citet{shalev2009stochastic-supp}.

\begin{theorem}\label{thm:sc-stab-gen}
Assume \textbf{(A1)}  holds. Assume $F_S(\wbf, \cdot)$ is $\rho$-strongly concave. Assume that for any $\wbf$ and $S$, the function $\vbf \mapsto F_S(\wbf,\vbf)$ is $\rho$-strongly-concave. Then
\begin{align*}
\Ebb\big[R_S(\wbf^*) - R(\wbf^*)\big] \leq \frac{4G_\vbf^2}{\rho n}.
\end{align*}
\end{theorem}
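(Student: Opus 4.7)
The plan is to recognize that, since $\wbf^*\in\arg\min_\wbf R(\wbf)$ is a deterministic quantity depending only on the population objective, the quantity $R_S(\wbf^*)-R(\wbf^*)$ is simply the generalization gap of a $\rho$-strongly concave stochastic maximization problem in $\vbf$ at the fixed point $\wbf^*$. The standard uniform-argument-stability / on-average-replace-one-sample argument from strongly convex stochastic optimization therefore applies directly. Let $\hat{\vbf}_S := \arg\max_{\vbf\in\Vcal} F_S(\wbf^*,\vbf)$. I would first decompose
\begin{align*}
\Ebb\big[R_S(\wbf^*) - R(\wbf^*)\big] = \Ebb\big[F_S(\wbf^*,\hat{\vbf}_S) - F(\wbf^*,\hat{\vbf}_S)\big] + \Ebb\big[F(\wbf^*,\hat{\vbf}_S) - R(\wbf^*)\big],
\end{align*}
and observe that the second expectation is $\leq 0$ because $R(\wbf^*)=\max_\vbf F(\wbf^*,\vbf)\geq F(\wbf^*,\hat{\vbf}_S)$. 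Hence it suffices to bound the first.

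Next I would establish uniform argument stability of the inner maximizer. Let $S'$ differ from $S$ in a single index $i$. Applying $\rho$-strong concavity of $F_S(\wbf^*,\cdot)$ at its maximizer $\hat{\vbf}_S$, and of $F_{S'}(\wbf^*,\cdot)$ at $\hat{\vbf}_{S'}$, and adding the two inequalities produces
\begin{align*}
\rho\,\|\hat{\vbf}_S - \hat{\vbf}_{S'}\|_2^2 \;\leq\; (F_S - F_{S'})(\wbf^*,\hat{\vbf}_S) - (F_S - F_{S'})(\wbf^*,\hat{\vbf}_{S'}).
\end{align*}
Since $F_S - F_{S'}$ differs by only one $f$-summand divided by $n$, the right-hand side is at most $\tfrac{2G_\vbf}{n}\,\|\hat{\vbf}_S-\hat{\vbf}_{S'}\|_2$ by the Lipschitz assumption (A1), yielding $\|\hat{\vbf}_S-\hat{\vbf}_{S'}\|_2\leq \tfrac{2G_\vbf}{\rho n}$.

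Finally I would convert this stability into the claimed generalization bound using the standard symmetrization / ghost-sample trick. Let $\tilde S=\{\tilde{\zbf}_i\}$ be an i.i.d. copy of $S$, and let $S^{(i)}$ denote $S$ with $\zbf_i$ replaced by $\tilde{\zbf}_i$. By exchangeability of $(\zbf_i,\tilde{\zbf}_i)$ one obtains
\begin{align*}
\Ebb_S\big[F(\wbf^*,\hat{\vbf}_S)\big] = \frac{1}{n}\sum_{i=1}^{n}\Ebb\big[f(\wbf^*,\hat{\vbf}_{S^{(i)}};\zbf_i)\big],
\end{align*}
so that
\begin{align*}
\Ebb_S\big[F_S(\wbf^*,\hat{\vbf}_S) - F(\wbf^*,\hat{\vbf}_S)\big] = \frac{1}{n}\sum_{i=1}^{n}\Ebb\big[f(\wbf^*,\hat{\vbf}_S;\zbf_i) - f(\wbf^*,\hat{\vbf}_{S^{(i)}};\zbf_i)\big],
\end{align*}
and each summand is at most $G_\vbf\,\Ebb\|\hat{\vbf}_S-\hat{\vbf}_{S^{(i)}}\|_2$ by (A1). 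Combining with the stability estimate gives a bound of order $G_\vbf^2/(\rho n)$; the factor of $4$ (rather than $2$) arises by tracking the constants through replace-one-sample sensitivity (where the worst-case perturbation of $F_S-F_{S'}$ contributes a factor $2$ that appears twice, once in the stability step and once when passing to the symmetrized difference). There is no substantive obstacle: both ingredients (strong-concavity stability of an argmax and the exchangeability identity) are classical; the only care required is the routine bookkeeping of constants.
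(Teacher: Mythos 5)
Your proposal is correct and follows essentially the same route as the paper: the same decomposition of $\Ebb[R_S(\wbf^*)-R(\wbf^*)]$ with the second term nonpositive by the saddle-point property of $(\wbf^*,\vbf^*)$, a replace-one-sample argument-stability bound for the inner maximizer $\hat{\vbf}_S$ via $\rho$-strong concavity and Lipschitzness, and the ghost-sample exchangeability identity to convert that stability into the generalization bound. The only cosmetic difference is in the stability step, where you sum two strong-concavity inequalities to get $\|\hat{\vbf}_S-\hat{\vbf}_{S^{(i)}}\|_2\leq 2G_\vbf/(\rho n)$ while the paper uses a single strong-concavity inequality combined with an optimality comparison to get $4G_\vbf/(\rho n)$; both yield the stated bound (yours in fact with the better constant $2G_\vbf^2/(\rho n)$, so your closing speculation about needing the factor $4$ is unnecessary).
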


\begin{proof}
We decompose the term $\Ebb[R_S(\wbf^*) - R(\wbf^*)]$ as
\[
\Ebb\big[R_S(\wbf^*) - R(\wbf^*)\big] = \Ebb\big[F_S(\wbf^*,\hat{\vbf}^*_S) - F(\wbf^*,\vbf^*)\big] = \Ebb\big[F_S(\wbf^*,\hat{\vbf}^*_S) - F(\wbf^*,\hat{\vbf}^*_S)\big] + \Ebb\big[F(\wbf^*,\hat{\vbf}^*_S) - F(\wbf^*,\vbf^*)\big],
\]
where $\hat{\vbf}^*_S=\arg\max_{\vbf}F_S(\wbf^*,\vbf)$.
The second term $\Ebb\big[F(\wbf^*,\hat{\vbf}^*_S) - F(\wbf^*,\vbf^*)\big] \leq 0$ since $(\wbf^*,\vbf^*)$ is a saddle point of $F$. Hence it suffices to bound $\Ebb\big[F_S(\wbf^*,\hat{\vbf}^*_S) - F(\wbf^*,\hat{\vbf}^*_S)\big]$. Let $S'=\{z'_1,\ldots,z'_n\}$ be drawn independently from $\rho$. For any $i\in[n]$, define $S^{(i)}=\{z_1,\ldots,z_{i-1},z_i',z_{i+1},\ldots,z_n\}$. Denote $\hat{\vbf}^*_{S^{(i)}} = \arg\max_{\vbf\in \Vcal}F_{S^{(i)}}(\wbf^*,\vbf)$. Then
\begin{align*}\label{eq:sc-stab}
F_S(\wbf^*,\hat{\vbf}^*_S) - F_S(\wbf^*,\hat{\vbf}^*_{S^{(i)}}) = & \frac{1}{n}\sum_{j\neq i}\Big(f(\wbf^*,\hat{\vbf}^*_S;z_j) - f(\wbf^*,\hat{\vbf}^*_{S^{(i)}};z_j) \Big)   + \frac{1}{n}\Big(f(\wbf^*,\hat{\vbf}^*_S;z_i) - f(\wbf^*,\hat{\vbf}^*_{S^{(i)}};z_i)\Big)\\
= &  \frac{1}{n}\Big( f(\wbf^*,\hat{\vbf}^*_{S^{(i)}};z'_i)-f(\wbf^*,\hat{\vbf}^*_S;z'_i) \Big) + \frac{1}{n}\Big(f(\wbf^*,\hat{\vbf}^*_S;z_i) - f(\wbf^*,\hat{\vbf}^*_{S^{(i)}};z_i)\Big)\\
& + F_{S^{(i)}}(\wbf^*,\hat{\vbf}^*_S) - F_{S^{(i)}}(\wbf^*,\hat{\vbf}^*_{S^{(i)}})\\
\leq & \frac{1}{n}\Big( f(\wbf^*,\hat{\vbf}^*_{S^{(i)}};z'_i)-f(\wbf^*,\hat{\vbf}^*_S;z'_i)\Big) + \frac{1}{n}\Big(f(\wbf^*,\hat{\vbf}^*_S;z_i) - f(\wbf^*,\hat{\vbf}^*_{S^{(i)}};z_i)\Big)\\
\leq & \frac{2G_\vbf}{n}\big\|\hat{\vbf}^*_S - \hat{\vbf}^*_{S^{(i)}}\big\|_2, \numberthis
\end{align*}
where the first inequality follows from the fact that $\hat{\vbf}^*_{S^{(i)}}$ is the maximizer of $F_{S^{(i)}}(\wbf^*,\cdot)$ and the second inequality follows the Lipschitz continuity. Since $F_S$ is strongly-concave and $\hat{\vbf}^*_S$ maximizes $F_S(\wbf^*,\cdot)$, we know
\begin{align*}
\frac{\rho}{2}\big\|\hat{\vbf}^*_S - \hat{\vbf}^*_{S^{(i)}}\big\|_2^2 \leq F_S(\wbf^*,\hat{\vbf}^*_S) - F_S(\wbf^*,\hat{\vbf}^*_{S^{(i)}}).
\end{align*}
Combining it with \eqref{eq:sc-stab} we get $\big\|\hat{\vbf}^*_S - \hat{\vbf}^*_{S^{(i)}}\big\|_2 \leq 4G_\vbf/(\rho n)$. By Lipschitz continuity, the following inequality holds for any $z$
\begin{align*}
\big|f(\wbf^*,\hat{\vbf}^*_S ;z) - f(\wbf^*,\hat{\vbf}^*_{S^{(i)}};z)\big|  \leq \frac{4G_\vbf^2}{\rho n}.
\end{align*}
Since $z_i$ and $z'_i$ are i.i.d., we have
\begin{align*}
\Ebb\big[F(\wbf^*,\hat{\vbf}^*_S)\big]   =   \Ebb\big[F(\wbf^*,\hat{\vbf}^*_{S^{(i)}})\big] =  \frac{1}{n}\sum_{i=1}^n\Ebb\big[f(\wbf^*,\hat{\vbf}^*_{S^{(i)}};z_i)\big],
\end{align*}
where the last identity holds since $z_i$ is independent of $\hat{\vbf}^*_{S^{(i)}}$.
Therefore
\begin{align*}
\Ebb\big[F_S(\wbf^*,\hat{\vbf}^*_S) - F(\wbf^*,\hat{\vbf}^*_S)\big] = \frac{1}{n}\sum_{i=1}^n \Ebb\big[f(\wbf^*,\hat{\vbf}^*_S;z_i) - f(\wbf^*,\hat{\vbf}^*_{S^{(i)}};z_i)\big] \leq \frac{4G_\vbf^2}{\rho n}.
\end{align*}
The proof is complete.
\end{proof}

\begin{theorem}[Theorem \ref{thm:sgda-primal-gen} restated]
Assume the function $f(\wbf, \cdot; \zbf)$ is $\rho$-strongly concave and $F_S(\cdot, \vbf)$ satisfies $\mu$-PL condition. Suppose \textbf{(A1)}  and \textbf{(A3)}  hold. If $\Ebb[R_S(\wbf_{T+1}) - R_S^*] \leq \varepsilon_T$, then
\begin{align*}
&\Ebb[R(\wbf_T) - R_S(\wbf_T)]  \leq (1+\kappa)G_\wbf\Big(\sqrt{\frac{\varepsilon_T}{2\mu}} + \frac{1}{n}\sqrt{\frac{G_\wbf^2}{4\mu^2} + \frac{G_\vbf^2}{\rho\mu}} \Big), 
\end{align*} and 
\begin{align*}
\Ebb[R_S(\wbf^*) - R(\wbf^*)] \leq   \frac{4G_\vbf^2}{\rho n}.
\end{align*}
 
\end{theorem}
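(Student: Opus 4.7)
The plan is to assemble the restated theorem from two essentially orthogonal pieces already proved in the excerpt: Theorem \ref{thm:sgda-gen} for the primal generalization bound on the algorithm's output, and Theorem \ref{thm:sc-stab-gen} for the generalization bound at the population minimizer. Since the restated statement is merely a repackaging, the proof is a direct citation of these two results with their hypotheses matched to the current setting.

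For the first bound, I would argue as follows. Starting from the given optimization accuracy $\Ebb[R_S(\wbf_{T+1})-R_S^*]\leq \varepsilon_T$, invoke Lemma \ref{lem:primal-pl} to lift the PL condition from $F_S(\cdot,\vbf)$ to $R_S$, then apply Lemma \ref{lem:pl-to-qg} to obtain the quadratic growth inequality $\|\wbf_T-\pi_S(\wbf_T)\|_2^2\leq (R_S(\wbf_T)-R_S^*)/(2\mu)$; taking expectations and using Jensen's inequality gives $\Ebb[\|\wbf_T-\pi_S(\wbf_T)\|_2]\leq \sqrt{\varepsilon_T/(2\mu)}$. Feeding this into Lemma \ref{lem:pl-stability} yields the uniform argument stability $\Ebb[\|\wbf_T-\wbf_T'\|_2]\leq \sqrt{\varepsilon_T/(2\mu)} + n^{-1}\sqrt{G_\wbf^2/(4\mu^2)+G_\vbf^2/(\rho\mu)}$. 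Finally, Lemma \ref{lem:stab-gen} converts this argument stability into the primal generalization bound, supplying the factor $(1+L/\rho)=(1+\kappa)G_\wbf$ via Lipschitz continuity of $R$ under strong concavity; this is precisely the content of Theorem \ref{thm:sgda-gen}.

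For the second bound, I would replay the leave-one-out stability argument of Theorem \ref{thm:sc-stab-gen}. Decompose $\Ebb[R_S(\wbf^*)-R(\wbf^*)] = \Ebb[F_S(\wbf^*,\hat{\vbf}^*_S)-F(\wbf^*,\hat{\vbf}^*_S)]+\Ebb[F(\wbf^*,\hat{\vbf}^*_S)-F(\wbf^*,\vbf^*)]$, where the second term is non-positive since $(\wbf^*,\vbf^*)$ is a saddle point of $F$. To control the first term, for neighboring datasets $S$ and $S^{(i)}$ use the fact that $\hat{\vbf}^*_{S^{(i)}}$ maximizes $F_{S^{(i)}}(\wbf^*,\cdot)$ together with the $G_\vbf$-Lipschitz continuity of $f(\wbf^*,\cdot;\zbf)$ to obtain $F_S(\wbf^*,\hat{\vbf}^*_S)-F_S(\wbf^*,\hat{\vbf}^*_{S^{(i)}})\leq (2G_\vbf/n)\|\hat{\vbf}^*_S-\hat{\vbf}^*_{S^{(i)}}\|_2$; combining this with $\rho$-strong concavity of $F_S(\wbf^*,\cdot)$ yields $\|\hat{\vbf}^*_S-\hat{\vbf}^*_{S^{(i)}}\|_2\leq 4G_\vbf/(\rho n)$. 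A symmetrization step using the i.i.d.\ draw of $z_i'$ then gives the bound $4G_\vbf^2/(\rho n)$.

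The main obstacle is really the first bound, and specifically the invocation of Lemma \ref{lem:pl-stability}: this is where Assumption \textbf{(A4)} (uniqueness of the projection onto $\Omega_S$) enters, and where the interplay between PL in $\wbf$ and strong concavity in $\vbf$ must be handled simultaneously through the combined potential $4\mu\|\pi_S(A_\wbf(S))-\pi_{S'}(A_\wbf(S'))\|_2^2+\rho\|\hat{\vbf}_S-\hat{\vbf}_{S'}\|_2^2$. Once this lemma is in hand, both bounds follow by direct application of prior results, and the restated theorem collapses into a corollary of Theorems \ref{thm:sgda-gen} and \ref{thm:sc-stab-gen}.
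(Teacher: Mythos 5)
Your proposal is correct and follows essentially the same route as the paper: the paper's proof is literally a one-line citation of Theorem \ref{thm:sgda-gen} (itself built from Lemmas \ref{lem:primal-pl}, \ref{lem:pl-to-qg}, \ref{lem:pl-stability} and \ref{lem:stab-gen} exactly as you describe) together with Theorem \ref{thm:sc-stab-gen}, and your unpacking of both ingredients matches the paper's arguments. You even correctly flag that Lemma \ref{lem:pl-stability} is where Assumption \textbf{(A4)} enters, an hypothesis the restated theorem omits but which the underlying Theorem \ref{thm:sgda-gen} requires.
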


\begin{proof}
It follows directly from Theorem \ref{thm:sgda-gen} and \ref{thm:sc-stab-gen}.
\end{proof}

\subsection{Proof of Theorem \ref{thm:utility-nonconvex}}\label{sec:agda-utility}

\begin{theorem}[Theorem \ref{thm:utility-nonconvex} restated]
Assume \textbf{(A1)}, \textbf{(A3)} and \textbf{(A4)}  hold. Assume $F_S(\cdot, \vbf)$ satisfies PL condition with constant $\mu$ and $f(\wbf, \cdot; \zbf)$ is $\rho$-strongly concave. For SGDA, if $\Ebb[R_S(\wbf_T) - R_S^*] = \Ocal(\varepsilon_T)$, then iterates $\{\wbf_t, \vbf_t\}$ satisfies the following inequality
\begin{equation*}
\Ebb [R(\wbf_T) - R^*] = \Ocal(\varepsilon_T + (1 + \frac{L}{\rho})G_\wbf \Big(\sqrt{\frac{\varepsilon_T}{2\mu}} + \frac{1}{n}\sqrt{\frac{G_\wbf^2}{4\mu^2} + \frac{G_\vbf^2}{\rho\mu}}\Big) +  \frac{4G_\vbf^2}{\rho n}).
\end{equation*}
Furthermore, if we choose $T = \Ocal(n)$, $\eta_{\wbf, t} = \Ocal(\frac{1}{\mu t})$ and $\eta_{\vbf, t} = \Ocal(\frac{\kappa^2\max\{1, \sqrt{\kappa/\mu}\}}{\mu t^{2/3}})$, then
\begin{align*}
\Ebb[R(\wbf_T) - R^*] = \Ocal\bigl(\frac{\kappa^{2.75}}{\mu^{1.75}}(\frac{1}{n^{1/3}} + \frac{\sqrt{d\log(1/\delta)}}{n^{5/6}\epsilon})\bigr).
\end{align*}
\end{theorem}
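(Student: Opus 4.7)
The plan is to invoke the three-term error decomposition already displayed at the start of Section \ref{sec:proof-nonconvex}:
\begin{equation*}
\Ebb[R(\wbf_T) - R^*] \leq \Ebb[R(\wbf_T) - R_\S(\wbf_T)] + \Ebb[R_\S(\wbf^*) - R(\wbf^*)] + \Ebb[R_\S(\wbf_T) - R_\S^*],
\end{equation*}
and to bound each of the three pieces by results already proved earlier in this section. The third (optimization) term is $\Ocal(\varepsilon_T)$ by hypothesis. The first (primal generalization) term is controlled by Theorem \ref{thm:sgda-primal-gen} (equivalently Theorem \ref{thm:sgda-gen}), which uses the assumption $\Ebb[R_S(\wbf_{T+1})-R_S^\ast]\le\varepsilon_T$ to show
$\Ebb[R(\wbf_T)-R_S(\wbf_T)]=\Ocal\bigl((1+\kappa)G_\wbf(\sqrt{\varepsilon_T/(2\mu)}+\tfrac{1}{n}\sqrt{G_\wbf^2/(4\mu^2)+G_\vbf^2/(\rho\mu)})\bigr)$.
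The second (strong-concavity stability) term is handled by Theorem \ref{thm:sc-stab-gen}, which gives $\Ebb[R_S(\wbf^\ast)-R(\wbf^\ast)]\le 4G_\vbf^2/(\rho n)$. Summing these three bounds directly yields the first claim of the theorem.

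For the second claim, I would instantiate $\varepsilon_T$ using Theorem \ref{thm:sgda-primal-opt} with the stated stepsize schedule, which under $\sigma_\wbf,\sigma_\vbf$ chosen by the privacy calibration gives
\begin{equation*}
\varepsilon_T = \Ocal\Big(\max\Big\{1,\sqrt{L\kappa/\mu}\Big\}\tfrac{L\kappa^3}{\mu^2}\Big(\tfrac{1}{T^{2/3}}+\tfrac{dT^{1/3}\log(1/\delta)}{n^2\epsilon^2}\Big)\Big),
\end{equation*}
treating $B_\wbf,B_\vbf,G_\wbf,G_\vbf$ and $m$ as constants. The exponents $T^{-2/3}$ and $T^{1/3}$ balance exactly when $T\asymp n$, which is why that choice is prescribed; substituting $T\asymp n$ collapses $\varepsilon_T$ to a bound of order $\tfrac{L\kappa^{3.5}}{\mu^{2.5}}(1/n^{2/3}+\sqrt{d\log(1/\delta)}/(n^{5/3}\epsilon^2))$ (after the max resolves to $\sqrt{L\kappa/\mu}$). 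The dominant term in the first claim is then the $\sqrt{\varepsilon_T/(2\mu)}$ piece multiplied by $(1+\kappa)G_\wbf$; taking its square root introduces the exponents seen in the target rate, and tracking the $\kappa$ and $\mu$ factors through $(1+\kappa)\sqrt{\varepsilon_T/\mu}$ produces the $\kappa^{2.75}/\mu^{1.75}$ prefactor together with the final $(1/n^{1/3}+\sqrt{d\log(1/\delta)}/(n^{5/6}\epsilon))$ scaling.

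The routine parts are the substitution and the three-way addition; the only delicate step is the bookkeeping of condition-number exponents, where one must track that each square root halves the exponents of $L,\kappa,\mu$ inherited from $\varepsilon_T$ and that the $(1+\kappa)G_\wbf$ Lipschitz factor from the generalization lemma contributes one additional factor of $\kappa$. I would also verify that the sample-complexity term $4G_\vbf^2/(\rho n)$ and the bias term $(1/n)\sqrt{G_\wbf^2/(4\mu^2)+G_\vbf^2/(\rho\mu)}$ are absorbed by the $\Ocal(1/n^{1/3})$ rate, which holds for any $n\ge 1$. I anticipate no other obstacle, since the heavy lifting (optimization via Theorem \ref{thm:sgda-primal-opt}, PL-stability via Lemma \ref{lem:pl-stability}, and the strong-concavity stability of Theorem \ref{thm:sc-stab-gen}) is already in place.
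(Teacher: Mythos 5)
Your proposal matches the paper's proof essentially verbatim: the same three-term decomposition, Theorem \ref{thm:sgda-gen} for the primal generalization term, Theorem \ref{thm:sc-stab-gen} for the $4G_\vbf^2/(\rho n)$ term, and substitution of $\varepsilon_T$ from Theorem \ref{thm:sgda-primal-opt} with $T\asymp n$, so it is correct and not a different route. One small side remark is off --- the terms $T^{-2/3}$ and $T^{1/3}d\log(1/\delta)/(n^2\epsilon^2)$ do not balance at $T\asymp n$ (they would balance at $T\asymp n^2\epsilon^2/(d\log(1/\delta))$) --- but this does not affect the validity of the substitution or the final rate.
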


\begin{proof}
For any $\wbf^* \in \arg\min_\wbf R(\wbf)$, recall that we have the error decomposition \eqref{eq:err-decomp}, which is
\begin{align*}
\Ebb[R(\wbf_T) - R^*] = & \Ebb[R(\wbf_T) - R_S(\wbf_T)] + \Ebb[R_S(\wbf_T) - R_S^*] + \Ebb[R_S^* - R_S(\wbf^*)] + \Ebb[R_S(\wbf^*) - R(\wbf^*)]\\
\leq & \Ebb[R(\wbf_T) - R_S(\wbf_T)] + \Ebb[R_S(\wbf_T) - R_S^*] + \Ebb[R_S(\wbf^*) - R(\wbf^*)],
\end{align*}
where the inequality is by $R_S^* - R_S(\wbf^*) \leq 0$. 
By Theorem \ref{thm:sgda-gen}, we have
\begin{align*}
\Ebb[R(\wbf_T) - R_S(\wbf_T)] \leq (1 + \frac{L}{\rho})G_\wbf \Big(\sqrt{\frac{\varepsilon_T}{2\mu}} + \frac{1}{n}\sqrt{\frac{G_\wbf^2}{4\mu^2} + \frac{G_\vbf^2}{\rho\mu}}\Big).
\end{align*}
And by Theorem \ref{thm:sc-stab-gen}, we have
\begin{align*}
\Ebb[R_S(\wbf^*) - R(\wbf^*)] \leq \frac{4G_\vbf^2}{\rho n}.
\end{align*}
We can plug the above two inequalities into \eqref{eq:err-decomp}, and get
\begin{equation*}
\Ebb [R(\wbf_T) - R^*] = \Ocal(\varepsilon_T + (1 + \frac{L}{\rho})G_\wbf \Big(\sqrt{\frac{\varepsilon_T}{2\mu}} + \frac{1}{n}\sqrt{\frac{G_\wbf^2}{4\mu^2} + \frac{G_\vbf^2}{\rho\mu}}\Big) +  \frac{4G_\vbf^2}{\rho n}).
\end{equation*}
Now by the choice of $\eta_{\wbf,t}, \eta_{\vbf,t}$, and Theorem \ref{thm:sgda-primal-opt} , we have $\varepsilon_T = \Ocal(\frac{\kappa^{3.5}}{\mu^{2.5}}\frac{1/m + d(\sigma_\wbf^2 + \sigma_\vbf^2)}{T^{2/3}})$. Assume $m$ is a constant. Plugging $\varepsilon_T$ into the preceding inequality and letting $T = \Ocal(n)$ yields the second statement.
\end{proof}

\section{Additional Experimental Details}\label{sec:add-details}
\subsection{Source Code}
For the purpose of double-blind peer-review, the source code is accessible in the supplementary file.

\subsection{Computing Infrastructure Description}
All algorithms are implemented in Python 3.6 and trained and tested on an Intel(R) Xeon(R) CPU W5590
@3.33GHz with 48GB of RAM and an NVIDIA Quadro RTX 6000 GPU with 24GB memory. The PyTorch version
is 1.6.0.

\subsection{Description of Datasets}
In experiments, we use three benchmark datasets. Specifically, ijcnn1 dataset from LIBSVM repsitory, MNIST dataset and Fashion-MNIST dataset are from \citet{lecun1998gradient-supp}, and \citet{xiao2017fashion-supp}. The details of these datasets are shown in Table \ref{tab:general_performance}. For the ijcnn1 dataset, we normalize the features into [0,1]. For MNIST and Fashion-MNIST datasets, we first normalize the features of them into [0,1] then normalize them according to the mean and standard deviation. 
\begin{table*}[th!]
\centering
\setlength\tabcolsep{2.5pt}
\begin{tabular}{c|cccc}
\hline
Dataset &  \#Classes & \#Training Samples & \#Testing Samples & \#Features \\ \hline
ijcnn1 & 2 & 39,992  & 9,998 & 22 \\ 
MNIST & 10 & 60,000 & 10,000 & 784 \\ 
Fashion-MNIST & 10 & 60,000 & 10,000 & 784 \\ \hline
\end{tabular}
\caption{\small \it Statistical information of each dataset for AUC optimization.}
\label{tab:datasets}
\end{table*}

\subsection{Training Settings}
The training settings for NSEG and DP-SGDA on all datasets are shown in Table \ref{tab:training-settings}.
\begin{table*}[th!]
\centering
\setlength\tabcolsep{2.5pt}
\begin{tabular}{|c|c|c|c|c|c|c|c|c|c|c|}
\hline
\multirow{3}{*}{Methods} & \multirow{3}{*}{Datasets} & \multirow{3}{*}{Batch Size} & \multicolumn{4}{c|}{Learning Rate}                         & \multicolumn{2}{c|}{Epochs}                 & \multicolumn{2}{c|}{Projection Size}                 \\ \cline{4-11} 
                  &                   &                   & \multicolumn{2}{c|}{Ori} & \multicolumn{2}{c|}{DP} & \multirow{2}{*}{Ori} & \multirow{2}{*}{DP} & \multirow{2}{*}{Ori} & \multirow{2}{*}{DP} \\ \cline{4-7}
                  &                   &                   &      $\wbf$     &     $\vbf$      &     $\wbf$     &     $\vbf$      &                   &                   &                   &                   \\ \hline
\multirow{3}{*}{NSEG} &ijcnn1& 64 &300&300&350&350 &1000 &15& 100&100 \\ \cline{2-11} 
                  &         MNIST          & 64 &11&11&5&5&100 &15&2&2 \\ \cline{2-11} 
                  &         Fashion-MNIST          &64 & 11&11 &5 &5 &100 &15&3&3 \\ \hline
\multirow{3}{*}{\makecell{DP-SGDA\\(Linear)}} &ijcnn1&64 &300&300&350&350&100&15&10&10  \\ \cline{2-11} 
                  &         MNIST          &64 &11&11&5&5&100&15&2&2 \\ \cline{2-11} 
                  &         Fashion-MNIST          &64 &11&11&5&5&100&15&3&3\\ \hline
\multirow{3}{*}{\makecell{DP-SGDA\\(MLP)}} &ijcnn1&64 & 3000&3001&500&501&10&10 &100&100 \\ \cline{2-11} 
                  &         MNIST          &64 &900&1000&100&210&10&10&2&2 \\ \cline{2-11} 
                  &         Fashion-MNIST          &64 &900&1000&100&210&10&10&2&2\\ \hline
\end{tabular}
\caption{\small \it Training settings for each model and each dataset.}
\label{tab:training-settings}
\end{table*}

\subsection{DP-SGDA for AUC Maximization}

\begin{algorithm}[ht]
\caption{DP-SGDA for AUC Maximization\label{alg:dp-auc}}
\begin{algorithmic}[1]
\STATE {\bf Inputs:} Private dataset $S = \{\zbf_i: i \in [n]\}$, privacy budget $\epsilon, \delta$, number of iterations $T$, learning rates $\{\gamma_t, \lambda_t\}_{t=1}^T$, initial points $(\theta_0, a_0, b_0, \vbf_0)$
\STATE Compute $n_+ = \sum_{i=1}^n \Ibb[y_i=1]$ and $n_- = \sum_{i=1}^n \Ibb[y_i=-1]$
\STATE Compute noise parameters $\sigma_1$ and  $\sigma_2$ based on Eq. \eqref{eq:sigma-sigma}
\FOR{$t=1$ to $T$}
\STATE Randomly select a batch $S_t$
\STATE For each $j \in I_t$, compute gradient $\nabla_\theta f(\theta_t, a_t, b_t, \vbf_t; \zbf_j), \nabla_a f(\theta_t, a_t, b_t, \vbf_t; \zbf_j), \nabla_b f(\theta_t, a_t, b_t, \vbf_t; \zbf_j)$ and $\nabla_c f(\theta_t, a_t, b_t, \vbf_t; \zbf_j)$ based on Eq. \eqref{eq:auc-grad}
\STATE Sample independent noises $\xi_t \sim \Ncal(0, \sigma_1^2 I_{d+2})$ and $\zeta_t \sim \Ncal(0, \sigma_2^2)$
\STATE Update \begin{align*}
\begin{pmatrix}\theta_{t+1}\\a_{t+1}\\b_{t+1}\end{pmatrix} = & \Pi\Bigg\{ \begin{pmatrix}\theta_t\\a_t\\b_t\end{pmatrix} - \gamma_t \Big( \frac{1}{m}\sum_{j \in I_t}\begin{pmatrix}\nabla_\theta f(\theta_t, a_t, b_t, \vbf_t; \zbf_j)\\\nabla_a f(\theta_t, a_t, b_t, \vbf_t; \zbf_j)\\\nabla_b f(\theta_t, a_t, b_t, \vbf_t; \zbf_j)\end{pmatrix} + \xi_t\Big)\Bigg\}  \\
\vbf_{t+1} = & \Pi\Big\{\vbf_t + \lambda_t (\frac{1}{m}\sum_{j \in I_t} \nabla_\vbf f(\theta_t, a_t, b_t, \vbf_t; \zbf_j) + \zeta_t)\Big\}
\end{align*}
\ENDFOR
\STATE {\bf Outputs:} $(\theta_T, a_T, b_T, \vbf_T)$ or $(\bar{\theta}_T, \bar{a}_T, \bar{b}_T, \bar{\vbf}_T)$
\end{algorithmic}
\end{algorithm}

In this section, we provide details of using DP-SGDA to learn AUC maximization problem. AUC maximization with square loss can be reformulated as 
\begin{multline*}
	F(\theta,a,b,\vbf) = \Ebb_\zbf[(1-p)(h(\theta; \xbf) - a)^2\Ibb[y=1] + p(h(\theta; \xbf) - b)^2\Ibb[y=-1]\\ + 2(1+\vbf)(ph(\theta;\xbf)\Ibb[y=-1] - (1 - p)h(\theta; \xbf)\Ibb[y=1])] - p(1-p)\vbf^2]
\end{multline*}
where $\zbf= (\xbf,y)$ and $p = \Pbb[y=1]$. The empirical risk formulation is given as 
\begin{multline*}
F_S(\theta, a, b, \vbf) = \frac{1}{n}\sum_{i=1}^n\Big\{\frac{1}{n_+}(h(\theta; \xbf_i) - a)^2\Ibb[y_i=1] + \frac{1}{n_-}(h(\theta; \xbf_i) - b)^2\Ibb[y_i=-1]\\
 + 2(1+\vbf)\Big(\frac{1}{n_-}h(\theta; \xbf_i)\Ibb[y_i=-1] - \frac{1}{n_+}h(\theta; \xbf_i)\Ibb[y_i=1]\Big) - \frac{1}{n}\vbf^2\Big\}
\end{multline*}

For any subset $S_t$ of size $m$, let $I_t$ denote the set of indices in $S_t$, the gradients of any $j \in I_t$
are given by
\begin{align*}\label{eq:auc-grad}
\nabla_\theta f(\theta, a, b, \vbf; \zbf_j) = & \frac{2}{n_+} (h(\theta; \xbf_j) - a) \nabla h(\theta; \xbf_j)\Ibb[y_j=1] + \frac{2}{n_-} (h(\theta; \xbf_j) - b) \nabla h(\theta; \xbf_j)\Ibb[y_j=-1] \\
& + 2(1+\vbf)\Big(\frac{1}{n_-}\nabla h(\theta; \xbf_j)\Ibb[y_j=-1] - \frac{1}{n_+}\nabla h(\theta; \xbf_j)\Ibb[y_j=1]\Big) \\
\nabla_a f(\theta, a, b, \vbf; \zbf_j) = & \frac{2}{n_+} (a - h(\theta; \xbf_j))\Ibb[y_j=1], \ \ \ \ 
\nabla_b f(\theta, a, b, \vbf; \zbf_j) =  \frac{2}{n_-} (b - h(\theta; \xbf_j))\Ibb[y_j=-1] \\
\nabla_\vbf f(\theta, a, b, \vbf; \zbf_j) = & 2\Big(\frac{1}{n_-}h(\theta; \xbf_j)\Ibb[y_j=-1] - \frac{1}{n_+}h(\theta; \xbf_j)\Ibb[y_j=1]\Big) -\frac{2}{n} \vbf \numberthis
\end{align*}
The pseudo-code can be found in Algorithm \ref{alg:dp-auc}.

\section{Additional Experimental Results}\label{sec:additional-exp}
We show the details of NSEG and DP-SGDA (Linear and MLP settings) performance with using five different $\epsilon\in\{0.1,0.5,1,5,10\}$ and three different $\delta\in \{1e-4,1e-5,1e-6\}$ in Table \ref{tab:general_performance}. From Table \ref{tab:general_performance}, we can find that the performance will be decreased when decrease the value of $\delta$ in the same $\epsilon$ settings. The reason is that the small $\delta$ is corresponding to a large value of $\sigma$ based on Theorem \ref{thm:moments-accountant-privacy}. A large $\sigma$ means a large noise will be added to the gradients during the training updates. Therefore, the AUC performance will be decreased as $\delta$ decreasing. On the other hand, we can find that our DP-SGDA(Linear) outperforms NSEG under the same settings. This is because the NSEG method will add a larger noise than DP-SGDA into the gradients in the training and we have discussed this detail in the Section \ref{experiments:results}. 

We also compare the $\sigma$ values from NSEG and DP-SGDA methods on all datasets in Figure \ref{fig:sigma_delta4_and_5} (a) with setting $\delta$=1e-5 and (b) $\delta$=1e-4. From the figure, it is clear that the $\sigma$ from NSEG is larger than ours in all $\epsilon$ settings. This implies the noise generated from NSEG is also larger than ours.

\begin{table*}[th!]
\centering
\setlength\tabcolsep{2.5pt}
\begin{tabular}{|c|c|cc|c|cc|c|cc|c|}
\hline
\multicolumn{2}{|c|}{Dataset}                  & \multicolumn{3}{c|}{ijcnn1}         & \multicolumn{3}{c|}{MNIST}         & \multicolumn{3}{c|}{Fashion-MNIST}         \\ \hline
\multicolumn{2}{|c|}{\multirow{2}{*}{Algorithm}} & \multicolumn{2}{c|}{Linear} &  MLP  & \multicolumn{2}{c|}{Linear} &   MLP    & \multicolumn{2}{c|}{Linear} &    MLP   \\ \cline{3-11} 
\multicolumn{2}{|c|}{}                  &    NSEG       &    DP-SGDA       &    DP-SGDA   &    NSEG       &    DP-SGDA       &    DP-SGDA       &    NSEG       &    DP-SGDA       &    DP-SGDA       \\ \hline
\multicolumn{2}{|c|}{Original}                  &    92.191 &  92.448 &  96.609  & 93.306  & 93.349  &99.546  & 96.552 &    96.523    &   98.020   \\ \hline
\multirow{5}{*}{$\delta$=1e-4}           &    \makecell{$\epsilon$=0.1}       &   90.231        &     91.229     &   94.020   &    91.285  & 91.962&  98.300 &   95.490 &  95.637 &  96.312   \\ \cline{2-11} 
                            &      \makecell{$\epsilon$=0.5}     &   90.352  &  91.366   &  96.108    &  91.328   &  92.067&  98.703 &  95.533&   95.829 &  97.098  \\ \cline{2-11} 
                            &     \makecell{$\epsilon$=1}      &  90.358       &  91.376      &  96.316   &   91.331 &  92.073&  98.722&    95.536 &   95.840 & 97.143  \\ \cline{2-11} 
                            &    \makecell{$\epsilon$=5}       &   90.363     &   91.385      &  96.326    &   91.334  &  92.079&  98.746&   95.539&  95.849 &   97.208 \\ \cline{2-11} 
                            &     \makecell{$\epsilon$=10}      &  90.363     &  91.387  & 96.329   &  91.335   &   92.080& 98.750 &   95.539 &  95.850  &  97.219\\ \hline
\multirow{5}{*}{$\delta$=1e-5}           &    \makecell{$\epsilon$=0.1}       &  90.168   &  91.169  & 93.274  &  91.266  &  91.910 & 98.092 &  95.468 &  95.535    &  95.989 \\ \cline{2-11} 
                            &     \makecell{$\epsilon$=0.5}      &  90.349  &  91.362  & 96.029 &  91.326  &  92.063&  98.675 &   95.531 &   95.823   &   97.031 \\ \cline{2-11} 
                            &    \makecell{$\epsilon$=1}       &  90.357  & 91.373   & 96.209 &   91.330 &   92.071&  98.714&   95.535  &  95.837  &  97.122 \\ \cline{2-11} 
                            &     \makecell{$\epsilon$=5}      & 90.363   &  91.384 &  96.300   &   91.334 &    92.079&  98.743&   95.538  &   95.848  &  97.200 \\ \cline{2-11} 
                            &     \makecell{$\epsilon$=10}      & 90.363   &  91.386  &  96.301  &   91.334 &  92.080& 98.747 &   95.539 &  95.850 &  97.213 \\ \hline
\multirow{5}{*}{$\delta$=1e-6}           &     \makecell{$\epsilon$=0.1}     & 90.106    &  91.110   &  92.763 &   91.247  &  91.858 &  97.878&  95.446  &  95.468   & 95.692\\ \cline{2-11} 
                            &      \makecell{$\epsilon$=0.5}    &  90.346 & 91.357   &  95.840 &   91.324 &  92.058 &  98.656 &  95.530&  95.816    & 96.988   \\ \cline{2-11} 
                            &     \makecell{$\epsilon$=1}     & 90.355 & 91.371 &  96.167 &   91.330 &  92.070 & 98.705&  95.534 &  95.834  & 97.102\\ \cline{2-11} 
                            &     \makecell{$\epsilon$=5}   & 90.363 & 91.383 &  96.294 &  91.334  &  92.078 & 98.742 & 95.538&   95.848 & 97.198\\ \cline{2-11} 
                            &     \makecell{$\epsilon$=10}  & 90.363& 91.386 &  96.297  &   91.334 &  92.080 &  98.747& 95.539&    95.850  & 97.213  \\ \hline
\end{tabular}
\caption{\small \it Comparison of AUC performance in NSEG and DP-SGDA (Linear and MLP settings) on three datasets with different $\epsilon$ and different $\delta$. The ``Original'' means no noise ($\epsilon=\infty$) is added in the algorithms.}
\label{tab:general_performance}
\end{table*}


\begin{figure*}[ht]
\begin{subfigure}[t]{0.48\linewidth}
    \includegraphics[width=\linewidth]{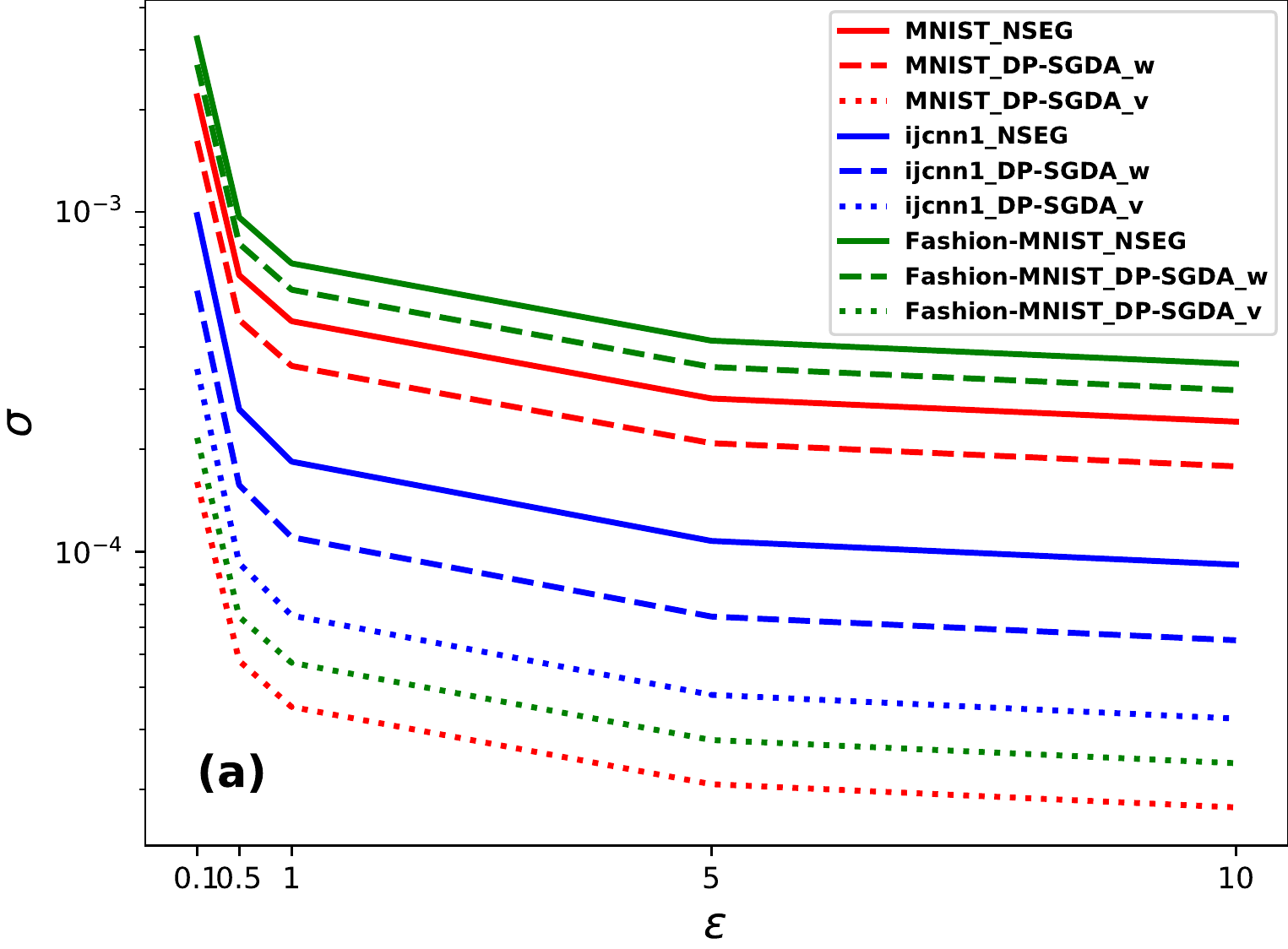}
\end{subfigure}%
    \hfill%
\begin{subfigure}[t]{0.48\linewidth}
    \includegraphics[width=\linewidth]{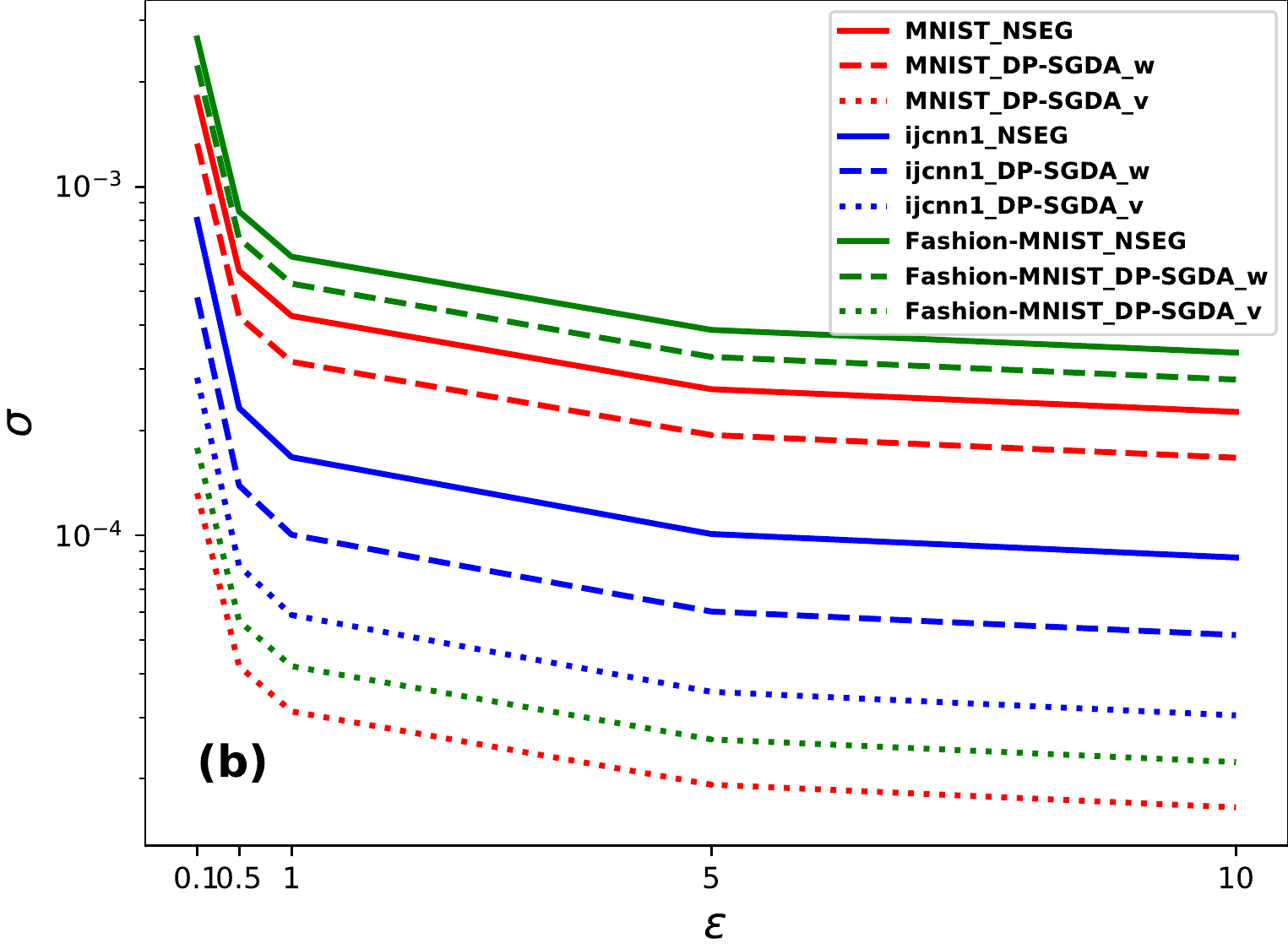}
\end{subfigure}
\caption{\em  Comparison of $\sigma$ in NSEG and DP-SGDA (with Linear setting) on three datasets with different $\epsilon$ and (a) $\delta$=1e-5 and (b) $\delta$=1e-4.}
\label{fig:sigma_delta4_and_5}
\end{figure*}



\end{document}